\def\E{{\mathbb E}}
\def\Var{\mathrm{Var}}
\def\Cov{{\mathrm Cov}}
\def\Corr{{\mathrm Corr}}
\def\Diag{{\mathrm Diag}}
\def\Bias{{\mathrm Bias}}
\def\P{\mathbb P}
\def\Tr{{\mathrm Tr}}
\def\dd{\mathrm d}
\newcommand{\defeq}{\vcentcolon=}
\newtheorem{theorem}{Theorem}
\newtheorem{lemma}{Lemma} 
\newtheorem{proposition}{Proposition}
\newtheorem{corollary}{Corollary}
\newtheorem{remark}{Remark}
\newtheorem{definition}[theorem]{Definition}
\title{On the Convergence of Experience Replay in Policy Optimization: Characterizing Bias, Variance, and Finite-Time Convergence}
\date{} 					
\author{ 
Hua Zheng\thanks{The majority of this work was done when Hua Zheng was at Northeastern University} \\
	Meta\\
	  Menlo Park, CA 94025, USA \\
	\texttt{huazhn@meta.com} \\
	\And
    Wei Xie\thanks{Corresponding author} \\
	Department of Mechanical and Industrial Engineering\\
	Northeastern University\\
	Boston, MA 02115, USA \\
	\texttt{w.xie@northeastern.edu} \\
	\And
	M. Ben Feng \\
	Department of Statistics and Actuarial Science\\
	University of Waterloo\\
	Waterloo, ON, Canada \\
	\texttt{ben.feng@uwaterloo.ca} \\
}
\begin{document}
\maketitle

\begin{abstract}%
  Experience replay is a core ingredient of modern deep reinforcement learning, yet its benefits in policy optimization are poorly understood beyond empirical heuristics. This paper develops a novel theoretical framework for experience replay in modern policy gradient methods, 
  where two sources of dependence fundamentally complicate analysis: Markovian correlations along trajectories and policy drift across optimization iterations. We introduce a new proof technique based on auxiliary Markov chains and lag-based decoupling that makes these dependencies tractable. Within this framework, we derive finite-time bias bounds for policy-gradient estimators under replay, identifying how bias scales with the cumulative policy update, the mixing time of the underlying dynamics, and the age of buffered data, thereby formalizing the practitioner's rule of avoiding overly stale replay. We further provide a correlation-aware variance decomposition showing how sample dependence governs gradient variance from replay and when replay is beneficial. Building on these characterizations, we establish the finite-time convergence guarantees for experience-replay-based policy optimization, explicitly quantifying how buffer size, sample correlation, and mixing jointly determine the convergence rate and revealing an inherent bias-variance trade-off: larger buffers can reduce variance by averaging less correlated samples but can increase bias as data become stale. These results offer a principled guide for buffer sizing and replay schedules, bridging prior empirical findings with quantitative theory.
\end{abstract}

\keywords{Reinforcement Learning \and Policy Optimization \and Experience Replay \and Bias and Variance Trade-off \and Convergence Analysis}

\section{Introduction} \label{sec: introduction}
Experience replay (ER) has become a critical component of modern deep reinforcement learning, enabling agents to learn efficiently by replaying past experiences. Despite its widespread adoption, the theoretical understanding of why and when experience replay helps remains surprisingly incomplete. 
A comprehensive empirical investigation comes from the value-based setting: \cite{fedus2020revisiting} systematically study how replay capacity and replay ratio (gradient updates per environment step) drive performance in Q-learning methods.  This work is part of nearly a decade of empirical exploration into ER work, beginning with its introduction
as a key ingredient in deep Q network (DQN) \citep{mnih2015human}, continuing through prioritized variants \citep{Schaul2016PrioritizedER}, hindsight modifications \citep{andrychowicz2017hindsight}, and large-scale ablation studies \citep{hessel2018rainbow} that confirm ER's importance but provide limited mechanistic understanding. In the policy-optimization setting, the empirical works are fragmented: practitioners have explored various off-policy correction schemes \citep{wang2017sample,degris2012off} and buffer management strategies \citep{zhang2017deeper,openai2019rubiks}. Yet these remain guided by experimental trial-and-error rather than theoretical principles, leaving a fundamental question unanswered: what does experience replay actually do to policy optimization?

This paper addresses this gap by developing a novel theoretical tool for analyzing ER in policy optimization. We focus on the step-based setting employed by modern policy gradient methods such as PPO and TRPO, where gradients are constructed from streams of individual state–action pairs rather than full episodes. This regime presents two critical challenges for theoretical analysis: (i) Markovian dependence among successive samples within trajectories, and (ii) policy drift across optimization iterations. Standard off-policy corrections—such as per-step likelihood ratios (importance weights)—are computationally convenient but introduce bias because replayed samples are no longer independent draws from the policy's stationary distribution. Our analysis makes this bias explicit and quantifies how it accumulates with policy updates and the staleness of buffered data.

\begin{figure}[h]
\vspace{-0.in}
	\centering
	\includegraphics[width=1\textwidth]{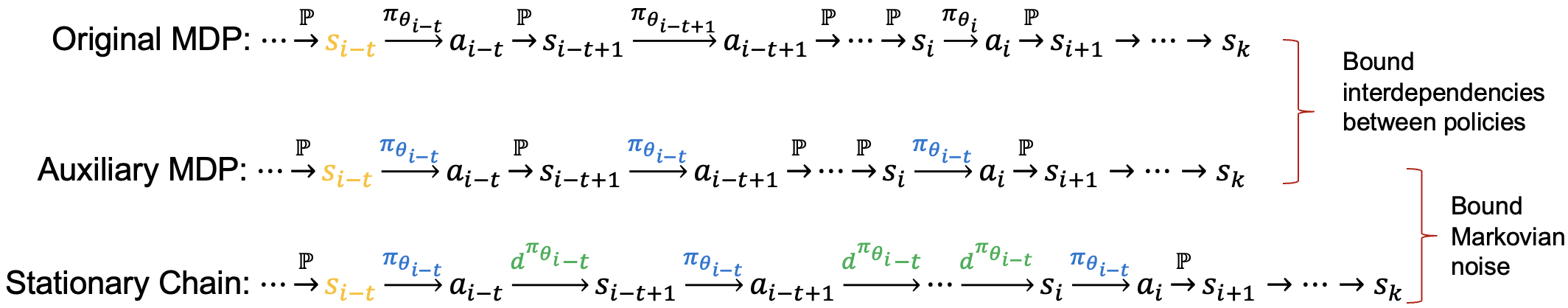}
	\vspace{-0.1in}
 \caption{\textbf{Illustration of our proof technique.}
To replay $(\pmb{s}_i,\pmb{a}_i)$ at iteration $k$, we condition on a $t$-lagged pair $(\pmb{s}_{i-t},\pmb\theta_{i-t})$ ($0\le t<i$) and compare three processes: (top) the original trajectory under drifting policies, (middle) an auxiliary MDP that replaces the policy sequence by the fixed policy $\pi_{\pmb\theta_{i-t}}$ to bound policy interdependence, and (bottom) a stationary chain, sampled from the fixed stationary distribution $d^{\pi_{\pmb\theta_{i-t}}}$, to bound the Markovian noise.
The bias/convergence bounds follow by decomposing the error quantity into the total-variations, i.e. $\Vert \mathbb{P}(\text{Original MDP}|\pmb{s}_i,\pmb\theta_i) - \mathbb{P}(\text{Auxilary MDP}|\pmb{s}_i,\pmb\theta_i) \Vert_{TV}$ and $\Vert \mathbb{P}(\text{Auxilary MDP}|\pmb{s}_i,\pmb\theta_i) - \mathbb{P}(\text{Stationary MDP}|\pmb{s}_i,\pmb\theta_i) \Vert_{TV}$.
}
	\label{fig: convergence comparison}
  \vspace{-0.in}
\end{figure}

Finite-sample bias bounds are derived for the likelihood-ratio (LR) and clipped likelihood-ratio (CLR) policy-gradient estimators under experience replay. These bounds reveal three key drivers of bias: (a) cumulative policy movement (governed by the learning-rate schedule), (b) environment dynamics (characterized by mixing rates), and (c) the age and volume of replayed trajectories. Our bounds tighten when the MDP mixes rapidly and policy updates remain small, and they imply asymptotic unbiasedness when buffer growth is appropriately controlled. These results provide the first formal justification for a widely used practitioner heuristic: "don't replay data that's too old."

Beyond bias, we present a variance decomposition that exposes how the covariance structure among gradient estimates from different historical policies governs the variance of replayed estimators. This decomposition clarifies two phenomena: replay reduces variance when cross-policy gradient correlations are positive but less than one, and it explains why practitioners benefit from diverse yet reasonably fresh samples. The interplay between bias and variance forms the foundation for understanding when experience replay accelerates learning and when it hinders progress.

Building on our bias and variance characterizations, we establish the first finite-time convergence guarantees for experience-replay-based policy optimization. Our analysis reveals how buffer size, sample correlation, and the mixing rate of the underlying Markov chain jointly govern convergence rate, and it provides a theoretical lens through which to interpret the empirical observations identified by \cite{fedus2020revisiting}. Specifically, our framework exposes a fundamental bias–variance trade-off: larger buffers reduce gradient variance by averaging over less correlated samples, but increase bias when those samples become stale relative to the evolving policy. 
This yields an explanation for when scaling replay capacity accelerates learning and when it hinders progress, complementing prior empirical findings from the value-based domain with principled theoretical guidance.

We make four main contributions:
\begin{itemize}
    \item Bias-variance analysis: Finite-sample bounds for LR/CLR policy-gradient estimators under ER that decompose the effects of policy staleness, buffer growth, learning rates, and mixing.
    \item Variance decomposition: A correlation-aware characterization of gradient variance under replay that explains when and why replay helps.
\item Finite-time convergence analysis: The first finite-time convergence guarantees for policy optimization with experience replay, yielding insights on buffer sizing and replay schedules.
\item Novel proof technique: Introduce a new theoretical tool based on auxiliary Markov chains and lag-based decoupling that makes experience replay analytically tractable in policy optimization, providing a principled explanation for empirical ER behaviors.

\end{itemize}
Together, these results elevate experience replay from an empirical practice to a tool with quantitative, theoretically grounded prescriptions for the policy optimization methods. The remainder of the paper is organized as follows. Section~\ref{sec:policyGradientRL} introduces the problem setting and presents the generic experience replay (ER) algorithm. Section \ref{sec: theory of experience replay} develops the variance and bias analysis for ER. Section \ref{sec: convergence analysis} presents our main finite-time convergence results.

\section{Problem Description}
\label{sec:policyGradientRL}


\noindent \textbf{Notation}: We use $\rho(x,y)=\mbox{Cov}\left(x,y\right)/\left(\sqrt{\Var(x)}\sqrt{\Var(y)}\right)$ denote the correlation coefficient between random variable $x$ and $y$. Let $\Diag(x_1,\ldots,x_n)$ denote the $n\times n$ matrix with diagonal entries $x_1,\ldots,x_n$, and zero off-diagonal elements. 
For any $(d\times d)$ matrix $A$, the trace used in the paper is defined as $\Tr({A})=\sum_{i=1}^d A_{i,i}$. We consider $L_2$ norm for any $d$-dimensional vector $\pmb{x}\in \mathbb{R}^d$, i.e., $\Vert\pmb{x}\Vert
\defeq
\Vert\pmb{x}\Vert_2 
=
\sqrt{\sum_{i=1}^d x_i^2}$. Let $P$ and $Q$ be two probability measures, and $p$ and $q$ denote their density functions. 
The total variation (TV) distance between two probability measures $P$ and $Q$ on the sample space $\mathcal{X}$
is defined
as $\Vert P-Q\Vert_{TV}= 1/2\int_{\mathcal{X}}|P(dx)- Q(dx)|=1/2\int_{\mathcal{X}}|p(x)- q(x)|dx$. In what follows, 
$\lceil x \rceil$ denotes the smallest integer greater than or equal to $x$, $\lfloor x \rfloor$ denotes the largest integer less than or equal to $x$, 
and $[N]$ denotes the set of integers $\{1,2,\ldots,N\}$.

\subsection{Markov Decision Process}
\label{subsec:mdpFormulation}

Consider an infinite-horizon discounted MDP specified by $(\mathcal{S},\mathcal{A}, r, \P, \gamma, \pmb{s}_1)$, where $\mathcal{S}$ and $\mathcal{A}$ denote the state and action space.
At any time $t$, the agent observes the state $\pmb{s}_t \in \mathcal{S}$, takes an action $\pmb{a}_t \in \mathcal{A}$,
and receives 
a reward $r(\pmb{s}_t,\pmb{a}_t) \in \mathbb{R}$. 
In this study, we consider stochastic policy $\pi_{\pmb{\theta}}: \mathcal{S}\mapsto\mathcal{A}$, defined as a mapping from state space to the action and it is parameterized by $\pmb{\theta} \in \mathbb{R}^d$, i.e., $\pmb{a}_t\sim \pi_{\pmb\theta}(\pmb{a}|\pmb{s}_t)$.
The state transition is specified by a probability model $\P$, i.e., $\P(\pmb{s}_{t+1}\in\cdot|\pmb{s}_t,\pmb{a}_t)$ with probability density function {$\P(\pmb{s}_t=\dd\pmb{s}|\pmb{s}_t,\pmb{a}_t)=p(\pmb{s}_t=\pmb{s}|\pmb{s}_t,\pmb{a}_t)$}.
 We use $\P(\pmb{s}_t\in\cdot|\pmb{s}_1,\pmb\theta)$ to represent the $t$-step state transition measure under a fixed behavior policy $\pi_{\pmb\theta}$. By an abuse of notation, \textit{we use $\P(\pmb{s}_t\in\cdot|\pmb{s}_1)$ to represent the $t$-step state transition measure under the evolving behavior policies $\pi_{\pmb\theta_1},\pi_{\pmb\theta_2},\ldots,\pi_{\pmb\theta_{t-1}}$}.


Let $\gamma\in(0,1)$ denote the discount factor. 
Our goal is to find the optimal policy, denoted by $\pi_{\pmb{\theta}^*}$, maximizing the expected discounted rewards, i.e.,
\begin{align} \label{eq: objective infinite horizon}
 \max_{\pmb{\theta}\in\Theta} ~  J(\pmb\theta)
 = \E \left[ \left. 
 \sum_{t=1}^\infty \gamma^{t-1}r(\pmb{s}_{t},\pmb{a}_{t})
 \right|\pi_{\pmb\theta}, \pmb{s}_1 \right] 
 = \E_{\pmb{s}\sim d^{\pi_{\pmb\theta}}(\pmb{s}),\pmb{a}\sim \pi_{\pmb\theta}(\pmb{a}|\pmb{s})}[r(\pmb{s},\pmb{a})],
\end{align}
where $\Theta$ represents the policy parameter space and $d^{{\pi_{\pmb\theta}}}(\pmb{s})$ is the state stationary distribution induced by the policy ${\pi_{\pmb\theta}}$, defined as
$d^{\pi_{\pmb\theta}}(\pmb{s})=(1-\gamma)
\sum_{t=1}^\infty\gamma^{t-1}p(\pmb{s}_{t}=\pmb{s}|\pmb{s}_1;{\pi_{\pmb\theta}}).
$
We denote 
the state-occupancy
measure of state-action pair by 
$d^{\pi_{\pmb\theta}}(\pmb{s},\pmb{a})=\pi_{\pmb\theta}(\pmb{a}|\pmb{s})d^{\pi_{\pmb\theta}}(\pmb{s})$.

Similarly, given policy $\pi_{\pmb{\theta}}$, we define the state-value and action-value functions as follows, 
\begin{align}
 V^{\pi_{\pmb\theta}}(\pmb{s})= \E\left[\left.\sum_{t=1}^{\infty} \gamma^{t-1}r(\pmb{s}_{t},\pmb{a}_{t})\right| \pmb{s}_1=\pmb{s};{\pi_{\pmb\theta}}\right],  Q^{\pi_{\pmb\theta}}(\pmb{s},\pmb{a}) = \E\left[\left.\sum_{t=1}^{\infty} \gamma^{t-1}r(\pmb{s}_{t},\pmb{a}_{t})\right| \pmb{s}_1=\pmb{s}, \pmb{a}_1=\pmb{a};\pi_{\pmb\theta}\right].
 \nonumber 
 \nonumber 
\end{align}


\subsection{Policy Gradient}
\label{subsec:generalPolicyGradient}

\begin{sloppypar}

Stochastic policy gradient ascent is a popular method to solve the RL optimization problem~\eqref{eq: objective infinite horizon}. At each $k$-th iteration, we iteratively update the policy parameters by
\begin{equation} 
\label{eq: policy gradient update}
 \pmb\theta_{k+1} \leftarrow \pmb\theta_k + \eta_k \widehat{\nabla} J(\pmb\theta_k),
\end{equation}
where $\eta_k$ is learning rate or step size and $\widehat{\nabla} J(\pmb{\theta}_k)$ is an estimator of policy gradient $\nabla  J(\pmb{\theta}_k)$.
For convenience, $\nabla$ denotes the gradient with respect to $\pmb{\theta}$ unless specified otherwise.

Under some regular conditions
used to change the order of gradient and expectation
\citep{sutton1999policy,williams1992simple},
the generic policy gradient can be written as,
\begin{equation}
 \nabla J(\pmb{\theta})= \E_{(\pmb{s},\pmb{a})\sim d^{\pi_{\pmb\theta}}(\cdot,\cdot)}[g(\pmb{s},\pmb{a})|\pmb{\theta}]=\E_{(\pmb{s},\pmb{a})\sim d^{\pi_{\pmb\theta}}(\cdot,\cdot)}\left[A^{\pi_{\pmb\theta}}(\pmb{s},\pmb{a}) \nabla \log \pi_{\pmb{\theta}}(\pmb{a}|\pmb{s})\right]
 \label{eq: policy gradient}
\end{equation} 
 where $A^{\pi_{\pmb\theta}}(\pmb{s},\pmb{a}) \defeq Q^{\pi_{\pmb\theta}}(\pmb{s},\pmb{a})-V^{\pi_{\pmb\theta}}(\pmb{s})$ is called \textit{advantage} and it intuitively measures the extra reward that the agent can obtain by taking a particular action $\pmb{a}$ at state $\pmb{s}$. Also, 
 the scenario-based policy gradient estimate
 in (\ref{eq: policy gradient}) is
 \begin{equation}
g\left(\pmb{s},\pmb{a}|\pmb\theta_k\right) \defeq
 A^{\pi_{\pmb\theta_k}}(\pmb{s},\pmb{a}) \nabla \log \pi_{\pmb{\theta}_k}(\pmb{a}|\pmb{s}).
 \label{eq.scenariobasedGradient}
 \end{equation}
The classical \textit{policy gradient (PG) estimator} in the $k$-th iteration is given by 
\begin{equation}
 \widehat{\nabla} J^{PG}_k \defeq
 \widehat{\nabla} J^{PG}(\pmb{\theta}_k) =\frac{1}{n} \sum_{j=1}^n g\left(\pmb{s}^{(k,j)},\pmb{a}^{(k,j)}|\pmb\theta_k\right),
 \label{eq.PG-estimator}
\end{equation}
where $\left({\pmb{s}}^{(k,j)},{\pmb{a}}^{(k,j)}\right)$ represents the $j$-th sample collected at the $k$-th iteration. 

\subsection{Off-policy Policy Gradient with Experience Replay}
When using ER, we estimate the policy gradient $\nabla J(\pmb\theta_k)$ at iteration $k$ using historical samples from old policies $\pi_{\pmb\theta_i}$ ($i<k$) with the importance weight:

\begin{equation}
 \widehat{\nabla} J^{LR}_{i,k}=\frac{1}{n}\sum^{n}_{j=1}f_{i,k}\left(\pmb{s}^{(i,j)},\pmb{a}^{(i,j)}\right)
 g\left(\pmb{s}^{(i,j)},\pmb{a}^{(i,j)}|\pmb\theta_k\right)
 ~ \mbox{with} ~ f_{i,k}(\pmb{s},\pmb{a}) = \frac{\pi_{\pmb\theta_k}(\pmb{a}|\pmb{s})}{\pi_{\pmb\theta_i}(\pmb{a}|\pmb{s})}.
 \label{eq: individual likelihood ratio estimator}
\end{equation}
The likelihood ratio $f_{i,k}(\pmb{s},\pmb{a})$ weights the historical samples to account for 
the mismatch between the behavior and target policies specified by parameters $\pmb\theta_i$ and $\pmb\theta_k$. 
Through utilizing all samples in the replay buffer of policies $\mathcal{U}_k$, we average the individual LR estimators to obtain the likelihood ratio (LR) policy gradient estimator,
\begin{equation}
\label{eq.LR-gradient}
\widehat{\nabla} J^{LR}_{k}=\frac{1}{|\mathcal{U}_k|}\sum_{{\pmb\theta_i}\in \mathcal{U}_k}\widehat{\nabla} J^{LR}_{i,k} = 
 \frac{1}{|\mathcal{U}_k|n}
 \sum_{{\pmb\theta_i}\in \mathcal{U}_k}
 \sum^{n}_{j=1}
 \frac{\pi_{\pmb\theta_k}(\pmb{a}^{(i,j)}|\pmb{s}^{(i,j)})}
 {\pi_{\pmb\theta_i}(\pmb{a}^{(i,j)}|\pmb{s}^{(i,j)})}
 g\left(\pmb{s}^{(i,j)},\pmb{a}^{(i,j)}|\pmb\theta_k\right).
\end{equation}

The LR estimators in \eqref{eq: individual likelihood ratio estimator}-\eqref{eq.LR-gradient} suffer from potentially large or infinite variance as the likelihood ratio $f_{i,k}(\cdot,\cdot)$ can be large or unbounded \citep{veach1995optimally}. A common solution is weight clipping~\citep{ionides2008truncated}, which truncates the LR by
$$f^{clip}_{i,k}(\cdot,\cdot)\defeq \min\left(f_{i,k}(\cdot,\cdot), U_f\right)$$ where $U_f>1$ is a clipping threshold. This yields the following estimators
\begin{align}
\widehat{\nabla} J^{CLR}_{k}&=\frac{1}{|\mathcal{U}_k|}\sum_{{\pmb\theta_i}\in \mathcal{U}_k} \widehat{\nabla} J^{CLR}_{i,k},
\label{eq.CLR-gradient} \\
\mbox{  with  } \widehat{\nabla} J^{CLR}_{i,k} &= 
 \frac{1}{n}
 \sum^{n}_{j=1}f^{clip}_{i,k}(\pmb{s}^{(i,j)},\pmb{a}^{(i,j)})
g\left(\pmb{s}^{(i,j)},\pmb{a}^{(i,j)}|\pmb\theta_k\right).\label{eq: individual clipped likelihood ratio estimator}
\end{align}
It is straightforward to show
$f^{clip}_{i,k}(\pmb{s}, \pmb{a}) \leq U_f$. 
We refer to $\widehat{\nabla} J^{CLR}_{i,k}$ in Eq.~\eqref{eq: individual clipped likelihood ratio estimator} and $\widehat{\nabla} J^{CLR}_{k}$ in Eq.~ \eqref{eq.CLR-gradient} as the individual and average clipped likelihood ratio (CLR) policy gradient estimators, respectively.

For notational simplicity, we present experience replay using a replay buffer of behavior policies $\mathcal{U}_k$ which indexes both the policies and the data collected under them. This is solely an analytical convenience to simplify the notation. In practice, we do not require storing policy snapshots; instead, the replay buffer $\mathcal{D}_k$ stores the collected transitions along with the associated (log-)likelihood terms—and, when needed, value estimates and advantages—required for replay updates.
 
\subsection{Regularity Conditions for Generic Policy Gradient Estimation}
\label{subsec:assumptions}

We outline the assumptions for policy gradient regularity and MDPs used throughout this paper.
\begin{enumerate}[label=\textbf{A.\arabic*}]
\item Suppose the reward and policy functions satisfy the following regularity conditions.
\begin{enumerate}
    \item[(i)] The absolute value of the reward $r(\pmb{s}, \pmb{a})$ is bounded uniformly, i.e., there exists a constant, say $U_r>0$ such that  $|r(\pmb{s},\pmb{a})|\leq U_r$ for any $(\pmb{s},\pmb{a})\in \mathcal{S}\times\mathcal{A}$.
    \item[(ii)] The score function is Lipschitz continuous and has bounded norm; and the policy $\pi_{\pmb\theta}$ is differentiable and Lipschitz continuous with respect to $\pmb\theta$
i.e., for any $(\pmb{s},\pmb{a})\in \mathcal{S}\times\mathcal{A}$, there exist positive constant bounds, denoted by $L_\Theta, U_{\Theta}$, and $U_{\pi}$, such that
\begin{align}
    &\Vert \nabla\log\pi_{\pmb\theta_1}(\pmb{a}|\pmb{s})  -\nabla\log\pi_{\pmb\theta_2}(\pmb{a}|\pmb{s}) 
    \Vert
    \leq L_\Theta\Vert  \pmb{\theta}_1-\pmb{\theta}_2\Vert \text{ for any $\pmb\theta_1$ and $\pmb\theta_2$}; \label{eq: assumption item 1}\\
    &\Vert \nabla\log\pi_{\pmb\theta}(\pmb{a}|\pmb{s})\Vert \leq U_\Theta \text{ for any $\pmb\theta$};\label{eq: assumption item 2}
    \\
    &\Vert\pi_{\pmb\theta_1}(\cdot|\pmb{s})-\pi_{\pmb\theta_2}(\cdot|\pmb{s}) \Vert_{TV} \leq {U_\pi}\Vert\pmb\theta_1-\pmb\theta_2\Vert.
    \label{eq: assumption item 3}
\end{align}
\end{enumerate}
\label{assumption 2}

\item (Uniform Ergodicity)
For a fixed $\pmb\theta$, denote $d^{\pi_{\pmb\theta}}(\cdot)$ as the stationary distribution of 
an infinite-horizon MDP generated by the rule, i.e., $\pmb{a}_t\sim \pi_{\pmb\theta}(\cdot|\pmb{s}_t)$ and $\pmb{s}_{t+1}\sim p(\cdot|\pmb{s}_t,\pmb{a}_t)$. 
There exists a decreasing function $\varphi(t)>0$ such that:
$$\left\Vert \P(\pmb{s}_t\in \cdot |\pmb{s}_1=\pmb{s})-d^{\pi_{\pmb\theta}}(\cdot)  \right\Vert_{TV}\leq \varphi(t), \forall t \geq 1, \forall \pmb{s} \in\mathcal{S},$$
where $\varphi(t)=\kappa_0 \kappa^t$ for some constants $\kappa_0 > 0$ and $\kappa\in(0,1)$.
 \label{assumption 3}
\end{enumerate}

The uniform bound assumption for reward function in \ref{assumption 2} (i) is commonly used in most RL studies \citep{wu2020finite,zhang2020global,kumar2019sample}.
The first two items in \ref{assumption 2} (ii) are standard assumptions on
the regularity of the MDP problem and the parameterized policy, which are the same conditions used in many recent studies \citep{wu2020finite,zhang2020global,kumar2019sample}. The third item in \ref{assumption 2} (ii) is adopted from 
\cite{xu2020improving} 
 which holds for any smooth policy with bounded action space and Gaussian policy; see justification in \cite{xu2020improving}. 
The uniform ergodicity assumption~\ref{assumption 3} is a commonly used assumption that implies long-term stability in the sense that the Markov chain has a stationary distribution to which it will converge, regardless of the initial state. This assumption has been adopted by \citet{xu2020improving,wu2020finite,bhandari2018finite,zou2019finite}, which holds for any time-homogeneous Markov chain with finite-state space or any uniformly ergodic Markov chain with general state space.
 
The uniform boundedness of the reward function $r(\pmb{s},\pmb{a})$ in Assumption~\ref{assumption 2}(i) also implies that the absolute value of the Q-function is upper bounded
by $\frac{U_r}{1-\gamma}$, since by definition
\begin{equation}\label{eq: bound of action value function}
|Q^{\pi_{\pmb\theta}}(\pmb{s},\pmb{a})| \leq \sum^\infty_{t=1}\gamma^{t-1} U_r=\frac{U_r}{1-\gamma}, \text{ for any $(\pmb{s},\pmb{a})\in \mathcal{S}\times\mathcal{A}$}.
\end{equation}
The same bound also applies for $V^{\pi_{\pmb\theta}}(\pmb{s})$ for any $\pmb\theta$ and $\pmb{s}\in\mathcal{S}$, i.e., 
\begin{equation}\label{eq: bound of state value function}
   |V^{\pi_{\pmb\theta}}(\pmb{s})| =  |\E\left[Q^{\pi_{\pmb\theta}}(\pmb{s},\pmb{a})\right]| \leq \E\left[|Q^{\pi_{\pmb\theta}}(\pmb{s},\pmb{a})|\right] \leq \sum^\infty_{t=1}\gamma^{t-1} U_r=\frac{U_r}{1-\gamma}.
\end{equation}
This applies to the objective $ J(\pmb\theta)$ because $| J(\pmb\theta)| = |\E[V^{\pi_{\pmb\theta}}(\pmb{s})]|\leq \frac{U_r}{1-\gamma}$. 
Thus, we have the bound of the objective as $U_ J \defeq\frac{U_r}{1-\gamma}$.

Under Assumption \ref{assumption 2}, we can establish the Lipschitz continuity of the policy gradient $\nabla  J(\pmb\theta)$ as shown in
Lemma~\ref{lemma: Lipschitz continuity}. 
 Lemma~\ref{lemma: Lipschitz continuity} is another essential condition to ensure the convergence of many gradient-based algorithms \citep{ niu2011hogwild,reddi2016stochastic,nemirovski2009robust,li2019convergence}. It implies that the policy gradient cannot change abruptly, i.e., $| J(\pmb{\theta}_2) -  J(\pmb{\theta}_1)-\langle\nabla  J(\pmb{\theta}_1), \pmb{\theta}_2-\pmb{\theta}_1\rangle| \leq L\lVert \pmb{\theta}_2-\pmb{\theta}_1\rVert^2$ (\cite{nesterov2003introductory}, Lemma 1.2.3).
\begin{lemma}[\cite{zhang2020global}, Lemma 3.2]\label{lemma: Lipschitz continuity}
 Under Assumption \ref{assumption 2}, the policy gradient $\nabla  J(\pmb\theta)$ is Lipschitz continuous, i.e., for any $\pmb\theta_1,\pmb\theta_2 \in\Theta$, 
 there exists a constant $L > 0$ s.t.
 $$\Vert\nabla  J(\pmb{\theta}_1) -\nabla  J(\pmb{\theta}_2)\Vert\leq L\lVert \pmb{\theta}_1-\pmb{\theta}_2\rVert.$$
\end{lemma}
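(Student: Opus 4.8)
The plan is to establish the Lipschitz continuity by separately bounding the three distinct sources of $\pmb\theta$-dependence in the policy gradient. I focus on the infinite-horizon expression $\nabla\mu(\pmb\theta)=\E_{(\pmb{s},\pmb{a})\sim d^{\pi_{\pmb\theta}}}[A^{\pi_{\pmb\theta}}(\pmb{s},\pmb{a})\nabla\log\pi_{\pmb\theta}(\pmb{a}|\pmb{s})]$; since the baseline $V^{\pi_{\pmb\theta}}$ integrates against the score to zero, I first replace the advantage $A^{\pi_{\pmb\theta}}$ by the action-value $Q^{\pi_{\pmb\theta}}$ to simplify the algebra. The finite-horizon case is analogous but easier, because the sampling density $D(\pmb\tau;\pmb\theta)$ and the integrand depend on $\pmb\theta$ only through the policy factors, so one can differentiate once more under the integral via the log-derivative identity $\nabla D(\pmb\tau;\pmb\theta)=D(\pmb\tau;\pmb\theta)\sum_{t}\nabla\log\pi_{\pmb\theta}(\pmb{a}_t|\pmb{s}_t)$ and bound the resulting Hessian directly by $\big((HU_\Theta)^2+HL_\Theta\big)$ times the reward bound.

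First I would split $\nabla\mu(\pmb\theta_1)-\nabla\mu(\pmb\theta_2)$ through an add-and-subtract telescoping into three terms that isolate, respectively, the change in the score function, the change in $Q^{\pi_{\pmb\theta}}$, and the change in the occupancy measure $d^{\pi_{\pmb\theta}}$. The score term is immediate from Assumption~\ref{assumption 2}: using $\|\nabla\log\pi_{\pmb\theta_1}-\nabla\log\pi_{\pmb\theta_2}\|\le L_\Theta\|\pmb\theta_1-\pmb\theta_2\|$ together with the uniform bound $|Q^{\pi_{\pmb\theta}}|\le U_r/(1-\gamma)$ from \eqref{eq: bound of action value function} gives a contribution of order $\tfrac{U_r}{1-\gamma}L_\Theta\|\pmb\theta_1-\pmb\theta_2\|$.

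Second I would handle the $Q$-function term, which requires the auxiliary estimate $|Q^{\pi_{\pmb\theta_1}}(\pmb{s},\pmb{a})-Q^{\pi_{\pmb\theta_2}}(\pmb{s},\pmb{a})|\le L_Q\|\pmb\theta_1-\pmb\theta_2\|$. I would obtain this by unrolling the Bellman recursion and controlling each step with the total-variation bound on the policy in Assumption~\ref{assumption 2} and the boundedness of the reward; the discount factor $\gamma<1$ guarantees that the resulting geometric series converges. Pairing this estimate with the bounded score $U_\Theta$ controls the second term. The third term, the change in the occupancy measure, requires the parallel estimate $\|d^{\pi_{\pmb\theta_1}}-d^{\pi_{\pmb\theta_2}}\|_{TV}\le L_d\|\pmb\theta_1-\pmb\theta_2\|$, again driven by the total-variation condition in Assumption~\ref{assumption 2} and a Neumann-series expansion of the $\gamma$-discounted transition operator; multiplying by the uniform bound $\tfrac{U_r}{1-\gamma}U_\Theta$ on the integrand $Q^{\pi_{\pmb\theta}}\nabla\log\pi_{\pmb\theta}$ finishes it. Collecting the three contributions yields the claimed constant $L$.

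I expect the main obstacle to be the two auxiliary Lipschitz estimates in the second and third steps, namely the smoothness of $Q^{\pi_{\pmb\theta}}$ and of $d^{\pi_{\pmb\theta}}$ with respect to $\pmb\theta$. Both amount to quantifying how a perturbation of the policy propagates through the infinite-horizon dynamics, and the only reason they stay finite is the contraction supplied by $\gamma<1$; making these perturbation bounds rigorous, via a telescoping of transition kernels combined with the total-variation continuity of the policy, is the technical heart of the argument, which is precisely why the paper defers the full proof to Lemma 3.2 of \citep{zhang2020global}.
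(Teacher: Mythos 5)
Your plan is sound, but note that the paper itself contains no proof of this lemma: it explicitly defers to Lemma 3.2 of \cite{zhang2020global}, so the relevant comparison is with that cited argument. The cited proof establishes smoothness by writing the policy gradient in trajectory form, differentiating once more under the expectation, and bounding the resulting Hessian-type terms by $U_\Theta^2$ and $L_\Theta$ together with convergent series of the form $\sum_t t\gamma^t$ --- essentially the argument you reserve for the finite-horizon case only. Your infinite-horizon treatment takes a genuinely different route: a three-way telescoping into a score-perturbation term, a $Q$-function-perturbation term, and an occupancy-measure-perturbation term, with the latter two controlled by Bellman/Neumann unrolling and the total-variation condition in Assumption~\ref{assumption 2}(ii). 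This is the decomposition style of \cite{xu2020improving} (which the paper cites as the source of that TV condition) rather than of \cite{zhang2020global}. Both routes are valid under the paper's assumptions, and your auxiliary estimates $\Vert Q^{\pi_{\pmb\theta_1}}-Q^{\pi_{\pmb\theta_2}}\Vert_\infty \leq L_Q\Vert\pmb\theta_1-\pmb\theta_2\Vert$ and $\Vert d^{\pi_{\pmb\theta_1}}-d^{\pi_{\pmb\theta_2}}\Vert_{TV}\leq L_d\Vert\pmb\theta_1-\pmb\theta_2\Vert$ do follow from the contraction supplied by $\gamma<1$, so there is no gap --- you correctly flag them as the technical heart. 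The trade-off between the two approaches: the Hessian/trajectory argument of \cite{zhang2020global} needs only boundedness and Lipschitz continuity of the score (it never invokes the total-variation clause) and delivers explicit constants quite directly, whereas your decomposition consumes the extra TV-Lipschitz hypothesis but produces the $Q$- and occupancy-perturbation bounds as reusable byproducts, which is precisely why that style is preferred in actor-critic sample-complexity analyses.
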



Lemma~\ref{lemma: bounded of policy gradient} establishes the boundedness of policy gradient and its estimate; see the proof in \ref{appendix subsec: lipschitz continuity}.
 \begin{lemma}[Boundedness 
 of Stochastic Policy Gradients]
 \label{lemma: bounded of policy gradient} 
 For any $\pmb\theta$, the norm of the policy gradient $\nabla  J(\pmb\theta)$ and its scenario-based estimate $g(\pmb{s},\pmb{a}|\pmb\theta)$ is bounded, i.e., 
 $$\Vert \nabla  J(\pmb\theta)\Vert \leq M 
 \text{ and } 
  \Vert g\left(\pmb{s},\pmb{a}|\pmb\theta\right)\Vert \leq M,
  $$
 where $M=\frac{2U_r U_\Theta}{1-\gamma}$.
 \end{lemma}

\subsection{Generic Experience Replay}

Algorithm~\ref{algo: online} presents a generic experience-replay (ER) template for policy optimization.
At each iteration $k$, the agent first collects $n$ fresh samples $\mathcal{T}_k$ under the current policy $\pi_{\theta_k}$ and appends them to the replay buffer $\mathcal{D}_k$.
It then performs $K_{\mathrm{off}}$ updates using mini-batches sampled from $\mathcal{D}_k$.
The policy replay buffer $\mathcal{U}_k$ stores the most recent $B$ visited policies and is coupled with the data replay buffer $\mathcal{D}_k$. For notational convenience, in the following analysis, we use the policy replay buffer to index the ER process. However, in implementation, we do not store policy snapshots; the replay buffer $\mathcal{D}_k$ retains only the collected transitions together with the corresponding (log-)likelihood terms required for replay updates.

\begin{algorithm}[htb]
\SetAlgoLined
\caption{Generic Experience Replay for Policy Optimization (ER-PG)}
\label{algo: online}

\textbf{Input:} iterations $K$; buffer size $B$; batch size $n$; replay updates $K_{\mathrm{off}}$;
stepsizes $\{\eta_k\}_{k=1}^K$; initial replay buffer $\mathcal{D}_0$; initial replay buffer of policies $\mathcal{U}_0$.\\
\textbf{Initialize:} $\pmb{\theta}_1$; set $\pmb{\theta}_1^0\leftarrow \pmb{\theta}_1$; $\mathcal{U}_1\leftarrow \mathcal{U}_0\cup\{\pmb{\theta}_1\}$\;

\For{$k=1,2,\ldots,K$}{
  \textbf{(1) Collect:} Collect new samples
  $\mathcal{T}_k \defeq \{(s^{(k,j)},a^{(k,j)})\}_{j=1}^n$ by executing $\pi_{\pmb{\theta}_k}$ and
  $\mathcal{D}_{k}\leftarrow \mathcal{D}_{k-1}\cup \mathcal{T}_k$\;

  \textbf{(2) Offline updates:}
  \For{$h=0,1,\ldots,K_{\mathrm{off}}-1$}{
    (a) sample mini-batch from $\mathcal{D}_k$\;

    (b) compute gradient estimate $\widehat{\nabla}J_{k}^{h}$
    (e.g., LR/CLR in \eqref{eq.LR-gradient}--\eqref{eq.CLR-gradient})\;

    (c) $\pmb{\theta}_k^{h+1}\leftarrow \pmb{\theta}_k^{h} + \eta_k\,\widehat{\nabla}J_{k}^{h}$\;
  }

  \textbf{(3) Update buffer:}
  $\pmb{\theta}_{k+1}\leftarrow \pmb{\theta}_k^{K_{\mathrm{off}}}$; $\pmb{\theta}_{k+1}^0\leftarrow \pmb{\theta}_{k+1}$;
  $\mathcal{U}_{k+1}\leftarrow \mathcal{U}_{k}\cup\{\pmb{\theta}_{k+1}\}$;
  if $|\mathcal{U}_{k+1}|>B$, remove $\pmb{\theta}_{k+1-B}$ from $\mathcal{U}_{k+1}$\;
}
\end{algorithm}

\end{sloppypar}

\section{Variance-Bias Analysis in Experience Replay}
\label{sec: theory of experience replay}

While ER hyperparameters (e.g., policy age, buffer size, replay ratios) have been studied empirically, theoretical understanding remains limited. We address this gap by developing a framework that characterizes the bias-variance trade-off inherent in experience replay. Our analysis examines how sample dependence from three sources—Markovian noise, ER, and policy updates—impacts the bias (Section~\ref{subsec: experience replay bias}) and variance (Section~\ref{subsec: correlation increase gradient variance}) of LR/CLR policy gradient estimators.

\subsection{Replaying Old Samples with Dependence Cause Policy Gradient Bias}\label{subsec: experience replay bias}

 It has been empirically observed that given a fixed replay capacity, the performance of ER-based RL algorithms tends to improve as the age of the oldest policy decreases \citep{fedus2020revisiting}. The theoretical study presented in this section offers insights into this 
 observation.
\begin{definition}[\cite{fedus2020revisiting}]\label{def: age of policy}
The \textit{age} of a historical sample and associated policy
    in the ER 
    is defined to be the number of iterations passed
    since the sample 
    was generated.
\end{definition}


Despite their popularity, Policy Gradient (PG) methods have significant challenges in generating unbiased policy gradient estimators in the infinite-horizon MDP optimization. A critical condition for unbiased estimators, as outlined by the Policy Gradient Theorem \citep{sutton1999policy}, requires sampling state-action pairs $(\pmb{s},\pmb{a})$ from the stationary distribution of the Markov chain under the candidate policy of interest. However, as highlighted by \cite{zhang2020global}, in practice, samples are sequentially generated and the \textit{dependence} induced by Markovian noise consequently introduces bias into the policy gradient estimators. Furthermore, due to the mechanism of ER, the sample reuse will further increase the dependence between behavior policies, affecting the bias in policy gradient estimators.
To summarize, we present the upper bound on the bias of both LR and CLR policy gradient estimators in Theorem~\ref{theorem: bounded bias for LR graident}. A proof sketch is provided in Appendix~\ref{appendix sec: proof sketch} and the detailed proof is provided in Appendix~\ref{appendix sec: bias of policy gradient}. Before the presentation of the theorem, we introduce the mixing time
\begin{definition}[Mixing Time, \cite{wu2020finite}]
    Given the learning rate $\eta_k$, the mixing time of an ergodic MDP with a fixed policy $\pi_{\pmb\theta_k}$ is defined as 
$t_k^\star \defeq \min\{t \geq 0|\varphi(nt) \leq  \eta_{k}
\}$.
\end{definition}
At any $k$-th iteration, when the historical samples ${\pmb{x}}^{(i,j)}=\left({\pmb{s}}^{(i,j)},{\pmb{a}}^{(i,j)}\right)$ with $j=1,2,\ldots,n$ generated in the $i$-th iteration under $\pi_{\pmb\theta_i}$ are replayed, we use the lag term $t$ to control the impact of dependence induced by Markovian noise 
and derive an upper bound on the bias of the LR and CLR gradient estimators \eqref{eq.LR-gradient} and \eqref{eq.CLR-gradient} in Theorem~\ref{theorem: bounded bias for LR graident} that is applicable to
any positive integer $t<i$. However, it's important to note that the bound becomes tighter as the iteration $k$ increases and the learning rate $\eta_{k}$ decreases when we choose the mixing time defined below, i.e., $t=t_k^\star$.

\begin{theorem}[Gradient Biasedness]
\label{theorem: bounded bias for LR graident}
Suppose that Assumptions \ref{assumption 2} and \ref{assumption 3} hold. For LR/CLR policy gradient estimators in Eq.~\eqref{eq.LR-gradient} and \eqref{eq.CLR-gradient}, we consider the policy update rule (\ref{eq: policy gradient update}) with the learning rate $\eta_k$. For any integer $t$ such that $0\leq t < i$ with ${\pmb\theta_i}\in\mathcal{U}_k$, the bias of policy gradient estimators ($R\in$\{LR,CLR\}) can be bounded by 
\begin{equation*}
     \left\Vert\E\left[\widehat{\nabla} J^{R}_k\right] - \E[\nabla  J(\pmb\theta_k)]\right\Vert \leq \frac{Z_1^{R} \cdot t}{|\mathcal{U}_k|}
 \sum_{{\pmb\theta_i}\in \mathcal{U}_k}\sum^i_{\ell = i-t}\eta_\ell + 2M\varphi(nt) + \frac{Z_2^{R}}{|\mathcal{U}_k|}
 \sum_{{\pmb\theta_i}\in \mathcal{U}_k} \sum^k_{\ell=i-t}\eta_\ell, 
\end{equation*}
where $Z_1^{LR}=2nM^2 U_\pi$, $Z_1^{CLR}= 2nM^2 U_\pi U_f$, $Z_2^{LR}=M(2MC_d+2L_g+MU_\pi)$ and $Z_2^{CLR}=M(2MC_d+ (U_f+1)L_g+MU_\pi)$. Here, the Lipchitz constants $C_d$ and $L_g$ are defined in Lemma~\ref{lemma: lipchitz continuity of stationary distribution} and Lemma~\ref{lemma: lipchitz continuity of sample gradient estimate}, respectively in Appendix~\ref{appendix subsec: lipschitz continuity}.
\end{theorem}
Noting that 
$Z_1^{\mathrm{CLR}}-Z_1^{\mathrm{LR}}=2\,(U_f-1)\,nM^2U_\pi$ and 
$
Z_2^{\mathrm{CLR}}-Z_2^{\mathrm{LR}}=(U_f-1)\,L_g\,M$.
We see that clipping introduces an additional bias term that scales linearly with the clipping factor gap $(U_f-1)$. In particular, when $U_f=1$ (no clipping), the CLR bound reduces to the LR bound, whereas larger $U_f$ inflates the bias constants proportionally.

Then by applying Theorem~\ref{theorem: bounded bias for LR graident} and setting $i=k$ (without ER), we immediately obtain the upper bound for the bias of the policy gradient estimator 
(\ref{eq.PG-estimator}) in Corollary~\ref{corollary: bounded bias for vanila graident}. 

\begin{corollary}
\label{corollary: bounded bias for vanila graident}
Suppose that Assumptions \ref{assumption 2} and \ref{assumption 3} hold. 
For any integer $t$ such that $0\leq t < i$, the bias of policy gradient estimator (\ref{eq.PG-estimator}) can be bounded by 
\begin{equation*}
     \left\Vert\E\left[\widehat{\nabla} J^{PG}_k\right] - \E[\nabla  J(\pmb\theta_k)]\right\Vert \leq {Z_1^{LR} \cdot t}\sum^k_{\ell = k-t}\eta_\ell + 2M\varphi(nt) +{Z_2^{LR}}\sum^k_{\ell=k-t}\eta_\ell 
\end{equation*}
where $Z_1^{LR}$ and $Z_2^{LR}$ are specified in Theorem~\ref{theorem: bounded bias for LR graident}.
\end{corollary}

Consider a decreasing learning rate $\eta_k=\eta_1 k^{-r}$ where $\eta_1$ and $r\in (0,1)$ are positive constant. Notice that {$\sum^i_{i-t}\eta_\ell \leq t\eta_{i-t}$ and $\sum^{k}_{\ell=i-t}\eta_\ell\leq (k-i+t)\eta_{i-t}$} due to $k\geq i> t$. For $R\in\{LR,CLR\}$, the bound in Theorem~\ref{theorem: bounded bias for LR graident} can be expressed in $\mathcal{O}(\cdot)$ notation
\begin{equation}\label{eq: big O of bias}
\left\Vert\E\left[\widehat{\nabla} J^{R}_k\right] - \E[\nabla  J(\pmb\theta_k)]\right\Vert
\leq \mathcal{O}\left(\frac{1}{|\mathcal{U}_k|}
 \sum_{{\pmb\theta_i}\in \mathcal{U}_k}\frac{t^2}{(i-t)^r}\right)+\mathcal{O}\left(\frac{1}{|\mathcal{U}_k|}
 \sum_{{\pmb\theta_i}\in \mathcal{U}_k}\frac{k-i+t}{(i-t)^r}\right)+\mathcal{O}\left(\varphi(nt)\right).
\end{equation}
This upper bound shows that the bias of LR/CLR policy gradient estimator becomes larger when samples from older policies (i.e., smaller value of $i$) are selected since both terms $\frac{1}{(i-t)^r}$ and $\frac{k-i+t}{(i-t)^r}$ increase in $i$ for a fixed iteration $k$. Also, by observing the fact $k-i\leq B_k$ where the buffer size $B_k$ is a function of $k$, the bound \eqref{eq: big O of bias} can be expressed in the form of
\begin{align}
\left\Vert\E\left[\widehat{\nabla} J^{R}_k\right] - \E[\nabla  J(\pmb\theta_k)]\right\Vert  \leq \mathcal{O}\left(\frac{t^2}{(k-B_k-t)^r}\right)+\mathcal{O}\left(\frac{B_k+t}{(k-B_k-t)^r}\right)+\mathcal{O}\left(\varphi(nt)\right)\nonumber\\
= \mathcal{O}\left(\frac{\frac{r^2}{n^2}\log_{\kappa}^2 k}{(k-B_k+\frac{r}{n}\log_{\kappa} k)^r}\right)+\mathcal{O}\left(\frac{B_k-\frac{r}{n}\log_{\kappa} k}{(k-B_k+\frac{r}{n}\log_{\kappa} k)^r}\right)+\mathcal{O}\left(\frac{1}{k^r}\right) \label{eq: big O of bias with buffer size}
\end{align}
where the mixing rate of the process $\varphi(nt)$ becomes $\mathcal{O}(k^{-r})$ if the lag term $t$ is chosen to be the mixing time $t_k^*=\log_\kappa k^{-r/n}$.
Therefore, we have several theoretical conclusions:
\begin{itemize}
\item \textbf{Asymptotically Unbiased}. The bias converges in norm  $\Vert\E[\widehat{\nabla} J^{R}_k] - \E[\nabla  J(\pmb\theta_k)]\Vert \rightarrow 0$ as $k\rightarrow \infty$ if $B_k = \mathcal{O}(k^{r})$, indicating asymptotic unbiasedness of estimators \eqref{eq.LR-gradient} and \eqref{eq.CLR-gradient}. One special case satisfying this condition is that the buffer size $B_k$ is constant.

    \item \textbf{Replaying Old Samples Causes High Bias}. If the buffer size increases at a rate higher than $\mathcal{O}(k^{r})$, 
    the LR/CLR policy gradient estimators might have an unbounded bias. This bias tends to be minimized if no experience is replayed, i.e., $\mathcal{U}_k=\{\pmb\theta_k\}$. In such case, the bias is bounded by $\mathcal{O}\left({t^2}{(k-t)^{-r}}\right)$ by applying (\ref{eq: big O of bias}).
    \item \textbf{Mixing Condition}. The convergence rate of the bias is closely connected to the mixing rate of the environment. A higher mixing rate leads to a faster convergence. 
\end{itemize}

\subsection{Sample Correlation Increase Gradient Variance} \label{subsec: correlation increase gradient variance}

At any $k$-th iteration, given a replay buffer $|\mathcal{U}|_k$, the total variance of the LR/CLR policy gradient estimators \eqref{eq.LR-gradient} and \eqref{eq.CLR-gradient} is shown in 
Proposition~\ref{prop: gradient variance decomposition}; see the proof in Appendix~\ref{appendix sec: prop 1}.

 \begin{proposition}[Gradient Variance] 
 \label{prop: gradient variance decomposition}
Suppose that Assumptions \ref{assumption 2} and \ref{assumption 3} hold. At the $k$-th iteration with the target policy $\pi_{\pmb\theta_k}$, given the replay buffer $\mathcal{U}_k$, the total variance of LR/CLR policy gradient estimators ($R\in$\{LR,CLR\}) in Eq.~\eqref{eq.LR-gradient} and \eqref{eq.CLR-gradient} can be decomposed to
\begin{equation}\label{eq: variance of LR estimaor}
     \Tr\left(\Var
 \left[\widehat{\nabla} J^{R}_{k} \right]\right)=\frac{1}{|\mathcal{U}_k|^2}\sum_{\pmb\theta_i\in\mathcal{U}_k}\sum_{\pmb\theta_{i^\prime}\in\mathcal{U}_k} \left(\pmb\sigma^{R}_{i,k}\right)^\top \mathbf{P}_{i,i^\prime,k} \left(\pmb\sigma^{R}_{i^\prime,k}\right)
\end{equation}
where $\pmb\sigma^{R}_{i,k}=\left(\sqrt{\Var\left[\widehat{\nabla} J^{R,(1)}_{i,k} \right]}, \sqrt{\Var\left[\widehat{\nabla} J^{R,(2)}_{i,k} \right]},\ldots,\sqrt{\Var\left[\widehat{\nabla} J^{R,(d)}_{i,k} \right]}\right)^\top$ and \\ $\mathbf{P}_{i,i^\prime,k}=\Diag\left(\Corr^{(1)}_{i,i^\prime,k},\ldots,\Corr^{(d)}_{i,i^\prime,k}\right)$ with correlation coefficients of gradient estimate pairs $\Corr_{i,i^\prime,k}^{(\ell)}=\Corr\left(\widehat{\nabla} J^{R,(\ell)}_{i,k},\widehat{\nabla} J^{R,(\ell)}_{i^\prime,k}\right)$ for each $\ell$-th dimension of policy parameters with $\ell=1,2,\ldots,d$. By using the greatest element of $\mathbf{P}$, the total variance~\eqref{eq: variance of LR estimaor} is bounded by
\begin{equation}\label{eq: decomposed total variance}
     \Tr\left(\Var
 \left[\widehat{\nabla} J^{R}_{k} \right]\right)\leq \frac{1}{|\mathcal{U}_k|^2}\sum_{\pmb\theta_i\in\mathcal{U}_k}\sum_{\pmb\theta_{i^\prime}\in\mathcal{U}_k} \max_{\ell=1,2,\ldots,d}\left(\Corr^{(\ell)}_{i,i^\prime,k}\right)\left(\pmb\sigma^{R}_{i,k}\right)^\top\left(\pmb\sigma^{R}_{i^\prime,k}\right).
\end{equation}
 \end{proposition}
Proposition~\ref{prop: gradient variance decomposition} implies that replaying past experiences can reduce the variance of LR/CLR policy gradient estimators in  \eqref{eq.LR-gradient} and \eqref{eq.CLR-gradient}, provided the correlations between individual LR/CLR policy gradient estimators (i.e., $\widehat{\nabla} J^{R}_{i,k}$ and $\widehat{\nabla} J^{R}_{i^\prime,k}$ with $\pmb\theta_i$ and $\pmb\theta_{i^\prime}$ included in the replay buffer $\mathcal{U}_k$) are less than 1. 
Theorem~\ref{theorem: bounded bias for LR graident} and Proposition~\ref{prop: gradient variance decomposition} suggest a trade-off in experience replay: \textit{while historical sample reusing reduces the variance of LR/CLR policy gradient estimators, it concurrently increases their bias.} On one hand, replaying less dependent old experiences is desirable; on the other hand, it's necessary to discard too old behavior policies and associated historical samples to prevent bias escalation. {This observation underscores the critical role of selecting replay buffer and buffer size in achieving an optimal balance between variance reduction and bias control in policy optimization.} 

 \section{Finite-Time Convergence Analysis of Experience Replay}\label{sec: convergence analysis}

This section provides a finite-time convergence analysis for policy optimization with ER.
Our goal is to quantify how the convergence rate depends on (i) the replay buffer size $B_K$, (ii) the replay-induced dependence, and (iii) the mixing rate of the underlying Markov chain. The proof of Theorem~\ref{convergence theorem} is provided in Appendix~\ref{appendix: sec convergence} together with supportive lemmas.



\begin{sloppypar}
\begin{theorem}[Convergence]\label{convergence theorem}
Suppose Assumptions \ref{assumption 2} and \ref{assumption 3} hold. Let $\eta_k=\eta_1 k^{-r}$ denote the learning rate used in the $k$-th iteration with two constants $\eta_1\in(0, \frac{1}{4L}]$ and $r\in (0,1)$. Define the {L2 importance-weight norm} as \begin{equation}
w_{i,k}=\sqrt{\E\left[f_{i,k}\left(\pmb{s}^{(i,j)},\pmb{a}^{(i,j)}\right)^2 \right]} \text{ with } f_{i,k}(\pmb{s},\pmb{a})=\frac{\pi_{\pmb\theta_k}(\pmb{a}|\pmb{s})}{\pi_{\pmb\theta_i}(\pmb{a}|\pmb{s})}.
\end{equation} By running Algorithm ~\ref{algo: online} with the replay buffer of size $B_K$, for both LR/CLR policy gradient estimators in Eq.~\eqref{eq.LR-gradient} and \eqref{eq.CLR-gradient} and any $t\leq K-B_K$, we have the rate of convergence
 \begin{align*}
    \frac{1}{K}\sum^K_{k=1}\E\left[\Vert\nabla  J(\pmb\theta_k)\Vert^2\right]  & \leq   \frac{8U_ J /\eta_1}{K^{1-r}} + \frac{4cLM^2}{K}\sum_{k=1}^K{\eta_k} \bar{\rho}_k+
    {4M^2}\varphi(nt) + \frac{2^{r+1}C_3}{(1-r)K^r}  \nonumber\\
    &\quad  + \frac{2^{r+1}C_2\eta_1 t^2}{(1-r)K^r} + \frac{2^{r+1}C_1\eta_1}{1-r}\frac{B_K+t}{K^r} + M^2\frac{B_K+t}{K}
 \end{align*}
where $C_1=\max\{C^\Gamma_1, C^\Gamma_2\}$, $C_2={2nM^3U_\pi  U_f}$, $C_3=\sup_{k\geq 1}\Vert \Bias_k\Vert$ with $\Bias_k=\E\left[\widehat{\nabla} J^{R}_{k}\right]-\E\left[\nabla J(\pmb\theta_k)\right]$ and $\bar{\rho}_{k}=\frac{1}{|\mathcal{U}_k|^2}\sum_{\pmb\theta_i\in\mathcal{U}_k}\sum_{\pmb\theta_{i^\prime}\in\mathcal{U}_k}\left|\max_{\ell=1,2,\ldots,d}\left(\Corr^{(\ell)}_{i,i^\prime,k}\right)\right|w_{i,k}w_{i^\prime,k}$. 
 Here $R\in\{LR,CLR\},$ $C_1^\Gamma$ and $C_2^\Gamma$ are defined in Lemma~\ref{appendix lemma: supportive lemma 6}.
Using $\mathcal{O}(\cdot)$ notation gives
 \begin{equation}\label{eq: convergence upper bound}
    \frac{1}{K}\sum^K_{k=1}\E\left[\Vert\nabla  J(\pmb\theta_k)\Vert^2\right] 
    \leq \mathcal{O}\left(\frac{1}{K^{1-r}}\right) + \mathcal{O}\left(\frac{\sum^K_{k=1}\eta_k \bar{\rho}_k}{K}\right) +
    \mathcal{O}\left(\varphi(nt)\right) + \mathcal{O}\left(\frac{t^2}{K^r}\right) + \mathcal{O}\left(\frac{B_K+t}{K^r}\right)  
 \end{equation}
\noindent where $n$ is the number of steps in each iteration. The notation
$\mathcal{O}(\cdot)$ hides constants $c$, $L$, $\eta_1$, $M$, $U_ J$, $n$, $U_f$, $U_\pi$, $\kappa_0$, $\kappa$ and $L_g$. The Lipchitz constant $L_g$ of the sample gradient is defined in Lemma~\ref{lemma: lipchitz continuity of sample gradient estimate}.
\end{theorem}.
\end{sloppypar}

Theorem~\ref{convergence theorem} guarantees the convergence of the policy optimization with ER for both LR or CLR policy gradient estimators. The rate of optimal convergence depends on three key factors: (1) sample covariance $\bar{\rho}_k$; (2) mixing rate $\varphi(nt)=\kappa_0 \kappa^{nt}$; and (3) buffer size $B_K$. In short, low covariance between replayed samples and a faster mixing rate of the environment (i.e., smaller $\kappa$) would improve the convergence. 

\begin{remark}[L2 importance-weight norm]
    For fixed policies $\pmb{\theta}_i$ and $\pmb{\theta}_k$, the L2 importance-weight norm $w_{i,k}$ can be expressed using the Rényi divergence of order $2$\footnote{The Rényi divergence of order $\alpha>0,\alpha\neq 1$, between distributions $P$ and $Q$, is defined as
\[
D_{\alpha}(P\|Q)
=
\frac{1}{\alpha - 1}
\log 
\mathbb{E}_{Q}\!\left[ \left(\frac{p(X)}{q(X)}\right)^{\alpha} \right].
\]
}:
\[
w_{i,k}
=
\exp\!\left\{ D_{2}\!\left(\pi_{\pmb{\theta}_i}\,\Vert\, \pi_{\pmb{\theta}_k}\right) \right\}.
\]

This representation highlights a key aspect of the covariance structure in experience replay: the variance of the policy gradient is related to the divergence between the sampling distribution and the target distribution. Larger distributional disparity leads to larger expected likelihood ratios, increased covariance, and consequently slower convergence. However, despite its simplicity, this formulation is not suitable for analysis. Conditioning on $\pmb\theta_k$—which is measurable with respect to filtration up to step 
$k$—while treating past samples from step 
$i$ as random is statistically inconsistent.
\end{remark}

\subsection{Clipping Helps Bound Covariance}
For CLR, the likelihood ratio is truncated and therefore, the L2 importance-weight norm is bounded: $$w_{i,k}=\sqrt{\E\left[f_{i,k}\left(\pmb{s}^{(i,j)},\pmb{a}^{(i,j)}\right)^2 \right]} \leq U_f.$$
It results in $\bar{\rho}_k\leq U_f^2$ given the fact of $|\Corr^{(\ell)}_{i,i^\prime,k}|\leq 1$. Since the learning rate $\eta_k=\eta_1 k^{-r}$ with $\eta_1\in(0, \frac{1}{4L}]$ and $r\in (0,1)$, the second term of Eq.~\eqref{eq: convergence upper bound} can be simplified to \begin{equation}\label{eq: big O of weighted rho}
    \frac{1}{K}\sum_{k=1}^K{\eta_k}\bar{\rho}_k\leq\frac{U_f^2}{K}\sum_{k=1}^K{\eta_k}\leq\frac{U_f^2}{1-r} K^{-r}=\mathcal{O}\left(\frac{1}{K^r}\right).
\end{equation}
 Consequently, Corollary~\ref{corollary: convergence rate} can be obtained immediately from Eq.~\eqref{eq: big O of weighted rho}
\begin{sloppypar}
\begin{corollary}\label{corollary: convergence rate}
    Suppose Assumptions \ref{assumption 2} and \ref{assumption 3} hold. Under the same configurations as Theorem~\ref{convergence theorem}, we have the rate of convergence
 \begin{equation*}
    \frac{1}{K}\sum^K_{k=1}\E\left[\Vert\nabla  J(\pmb\theta_k)\Vert^2\right] 
    = \mathcal{O}\left(\frac{1}{K^{1-r}}\right) + \mathcal{O}\left(\frac{1}{K^r}\right) + 
    \mathcal{O}\left(\varphi(nt)\right) + \mathcal{O}\left(\frac{t^2}{K^r}\right) + \mathcal{O}\left(\frac{B_K+t}{K^r}\right).  
 \end{equation*}
\end{corollary} 
\end{sloppypar}
Corollary~\ref{corollary: convergence rate} shows that clipping is crucial for controlling the variance of ER-based policy gradients and ensuring stable convergence.

\subsection{Simplified Rate via a Mixing-Time Choice of the Lag} \label{subsec: simplified rate}

Setting the lag term to the mixing time $t=t_K^\star$ yields Corollary~\ref{cor: convergence rate} with a simplified rate expression. Consistent with the conclusion in gradient bias and variance analysis, Corollary~\ref{cor: convergence rate} indicates that the convergence is guaranteed when the buffer size scales at a rate lower than $\mathcal{O}(K^{r})$ acknowledging that replaying old samples introduces extra bias and when sample covariance $\bar{\rho}_k$ is low.

\begin{sloppypar}
\begin{corollary}\label{cor: convergence rate} 
    Suppose Assumptions \ref{assumption 2} and \ref{assumption 3} hold. Under the same configurations as Theorem~\ref{convergence theorem}, by setting $t=t^\star_K=\log_{\kappa}K^{-r/n}$, we have the rate of convergence
 \begin{equation*}
    \frac{1}{K}\sum^K_{k=1}\E\left[\Vert\nabla  J(\pmb\theta_k)\Vert^2\right] 
    \leq \mathcal{O}\left(\frac{1}{K^{1-r}}\right)+ \mathcal{O}\left(\frac{\sum^K_{k=1}\eta_k \bar{\rho}_k}{K}\right) + \mathcal{O}\left(\frac{(t_K^\star)^2}{K^r}\right) + \mathcal{O}\left(\frac{B_K+t_K^\star}{K^r}\right),  
 \end{equation*}
\noindent where the notation
$\mathcal{O}(\cdot)$ hides constants $c$, $L$, $\eta_1$, $M$, $U_ J$, $n$, $U_f$, $U_\pi$, $\kappa_0$, $\kappa$ and $L_g$. The Lipchitz constant $L_g$ of the sample gradient is defined in Lemma~\ref{lemma: lipchitz continuity of sample gradient estimate}.
\end{corollary} 
\end{sloppypar}

\section{Discussion}
Theorem~\ref{convergence theorem} guarantees the convergence of policy optimization with ER for both LR and CLR
policy-gradient estimators. The bound reveals three mechanisms through which
replay affects convergence.

\textbf{(i) Replay dependence via $\bar{\rho}_k$.}
The quantity $\bar{\rho}_k$ aggregates (a) the pairwise correlation among replayed samples and (b) the second-moment
scale $w_{i,k}$ of the per-sample gradient contributions. Intuitively, ER improves sample efficiency only when it
increases the \emph{effective} number of informative samples. If the replayed data are highly correlated (e.g., many
near-duplicate transitions) or have heavy-tailed importance weights (large $w_{i,k}$), then reusing them does not
behave like drawing additional i.i.d.\ samples; instead, it amplifies variance and can slow down convergence through
the term $\frac{1}{K}\sum_{k=1}^K \eta_k\bar{\rho}_k$. 

\textbf{(ii) Markov mixing through $\varphi(nt)$.}
The mixing term $\varphi(nt)=\kappa_0\kappa^{nt}$ quantifies how quickly the Markov chain forgets its past over $nt$
steps. Faster mixing (smaller $\kappa$) reduces temporal dependence and makes replayed samples behave closer to i.i.d.
draws. When mixing is slow, dependence persists across time, so replaying nearby transitions can yield little new
information and can worsen the dependence term. In this sense, $\varphi(nt)$ can be viewed as an intrinsic property
of the environment that determines how aggressively one can replay data without incurring large dependence penalties.

\textbf{(iii) Buffer horizon and staleness via $B_K$ and $t$.}
The remaining $B_K$- and $t$-dependent terms quantify a \emph{staleness penalty}: as the buffer grows, replayed samples
tend to be generated under increasingly outdated policies, and the analysis requires a larger lag $t$ to decouple
the replayed data from the current iterate. This creates a tradeoff: increasing $B_K$ improves replay opportunities,
but also increases the bias captured by $(B_K+t)/K^r$. The theorem therefore formalizes a common empirical observation: overly large buffers
containing stale data can harm learning even if they increase the nominal sample count.


\section*{Acknowledgments}
We gratefully acknowledge the support from the National Science Foundation under Grant CAREER CMMI-2442970.

\bibliographystyle{unsrtnat}
\bibliography{references}  





\clearpage
\appendix
\section{Proof Sketch for Theorem~\ref{theorem: bounded bias for LR graident}} \label{appendix sec: proof sketch}
The following sketch describes the bias decomposition and connects each component with its physical meaning. 
\begin{proof} (\textbf{Sketch})
In the proof sketch, we focus on LR policy gradient estimator \eqref{eq.LR-gradient}; but the CLR estimator \eqref{eq.CLR-gradient} shares similar proof steps. The bias of the gradient estimator is,
\begin{align}
\left\Vert\E\left[\widehat{\nabla  J}^{LR}_k\right] -\E[\nabla  J(\pmb\theta_k)]\right\Vert &= \left\Vert \frac{1}{|\mathcal{U}_k|n}
 \sum_{{\pmb\theta_i}\in \mathcal{U}_k}
 \sum^{n}_{j=1}\E\left[
\frac{\pi_{\pmb\theta_k}\left(\pmb{a}^{(i,j)}|\pmb{a}^{(i,j)}\right)}
 {\pi_{\pmb\theta_i}\left(\pmb{a}^{(i,j)}|\pmb{a}^{(i,j)}\right)} g\left(\pmb{x}^{(i,j)}|\pmb\theta_k\right)-\nabla J(\pmb\theta_k)\right]\right\Vert \nonumber\\
 & \leq \frac{1}{|\mathcal{U}_k|n}
 \sum_{{\pmb\theta_i}\in \mathcal{U}_k}
 \sum^{n}_{j=1}\left\Vert \Delta_{\textbf{(i)}}(\pmb{x}^{(i,j)},\pmb\theta_i,\pmb\theta_k)\right\Vert\label{sketch eq1: theorem bounded bias for LR graident}
\end{align}
where $\Delta_{\textbf{(i)}}(\pmb{x}^{(i,j)},\pmb\theta_i,\pmb\theta_k) \defeq \E\left[
\frac{\pi_{\pmb\theta_k}\left(\pmb{a}^{(i,j)}|\pmb{a}^{(i,j)}\right)}
 {\pi_{\pmb\theta_i}\left(\pmb{a}^{(i,j)}|\pmb{a}^{(i,j)}\right)} g\left(\pmb{x}^{(i,j)}|\pmb\theta_k\right)-\nabla J(\pmb\theta_k)\right]$.
 To separately quantify the impact induced by policy update, sample reuse, and Markovian noise, $\Delta_{\textbf{(i)}}(\pmb{x}^{(i,j)},\pmb\theta_i,\pmb\theta_k)$ 
is decomposed below based on triangle inequality theorem, i.e.,
\begin{align}
\left\Vert\Delta_{\textbf{(i)}}(\pmb{x}^{(i,j)},\pmb\theta_i,\pmb\theta_k)\right\Vert&\leq\left\Vert\E\left[\widehat{\nabla  J}^{LR}\left({\pmb{x}}^{(i,j)}, \pmb\theta_i,\pmb\theta_{k}\right)-\widehat{\nabla  J}^{LR}\left({\pmb{x}}^{(i,j)}, \pmb\theta_i,\pmb\theta_{i-t}\right) \right] \right\Vert \label{sketch eq: bounded bias 1}\\
&\quad +\left\Vert\E\left[\widehat{\nabla  J}^{LR}\left({\pmb{x}}^{(i,j)}, \pmb\theta_i,\pmb\theta_{i-t}\right)-\widehat{\nabla  J}\left(\tilde{\pmb{x}}^{(i,j)}, \pmb\theta_i,\pmb\theta_{i-t}\right)\right] \right\Vert \label{sketch eq: bounded bias 2}\\
&\quad + \left\Vert \E\left[\widehat{\nabla  J}\left(\tilde{\pmb{x}}^{(i,j)}, \pmb\theta_i,\pmb\theta_{i-t}\right)-\widehat{\nabla  J}\left(\check{\pmb{x}}^{(i,j)}, \pmb\theta_i,\pmb\theta_{i-t}\right)\right] \right\Vert \label{sketch eq: bounded bias 3}\\
&\quad + \left\Vert \E\left[\widehat{\nabla  J}\left(\check{\pmb{x}}^{(i,j)}, \pmb\theta_i,\pmb\theta_{i-t}\right)-\widehat{\nabla  J}\left(\check{\pmb{x}}^{(i,j)}, \pmb\theta_i,\pmb\theta_{k}\right)\right]\right\Vert \label{sketch eq: bounded bias 4}\\
&\quad + \left\Vert\E\left[\widehat{\nabla  J}\left(\check{\pmb{x}}^{(i,j)}, \pmb\theta_i,\pmb\theta_{k}\right)-\nabla  J\left(\pmb\theta_k\right)\right] \right\Vert. \label{sketch eq: bounded bias 5}
\end{align}
The notations above are defined in Appendix~\ref{appendix subsec: notations}. In brief, $\tilde{\pmb{x}}^{(i,j)}=(\tilde{\pmb{s}}^{(i,j)},\tilde{\pmb{a}}^{(i,j)})$ represents a $j$-th state-action pair collected in iteration $i$. It 
is generated starting
from the state $\pmb{s}^{(i-t,j)}$ by following a \textit{fixed policy} $\pi_{\pmb\theta_{i-t}}$ with $0\leq t < i$ and the corresponding sequence from $\tilde{\pmb{x}}^{(i-t,j)}$ to $\tilde{\pmb{x}}^{(i,j)}$is called auxiliary Markov chain (AMC). In addition, $\check{\pmb{x}}^{(i,j)}=(\check{\pmb{s}}^{(i,j)},\check{\pmb{a}}^{(i,j)})\sim d^{\pi_{\pmb\theta_{i-t}}}(\cdot,\cdot)$ represents a state-action pair \textit{independently} generated from the stationary distribution with a fixed policy $\pi_{\pmb\theta_{i-t}}$ and the corresponding sequence from timestep $(i-t,j)$ to $(i,j)$ is called the stationary Markov chain (SMC). See their formal definitions in Appendix~\ref{appendix subsec: proof techniques}.


By Lipschitz conditions on policy and gradient updates, i.e., $\pi_{\pmb\theta_{k}}(\pmb{x}^{(i,j)})-\pi_{\pmb\theta_{i-t}}(\pmb{x}^{(i,j)})$
and $g\left(\pmb{x}^{(i,j)}|\pmb\theta_k\right)-g\left(\pmb{x}^{(i,j)}|\pmb\theta_{i-t}\right)$ over $\E \Vert\pmb\theta_k - \pmb\theta_{i-t}\Vert$, we can bound the terms \eqref{sketch eq: bounded bias 1} and \eqref{sketch eq: bounded bias 4} with $\mathcal{O}(\sum^k_{\ell=i-t}\eta_\ell)$ by applying Lemmas~\ref{appendix lemma: supportive lemma 1} and \ref{appendix lemma: supportive lemma 4}. 
Further, by applying Lemma~\ref{appendix lemma: supportive lemma 2}, the term \eqref{sketch eq: bounded bias 2} is bounded by the total variation 
$\left\Vert \P\left(\pmb{s}^{(i,j)}\in \cdot|\pmb{s}^{(i-t,j)},\pmb\theta_{i-t}\right)-\P\left(\tilde{\pmb{s}}^{(i,j)}\in \cdot|\pmb{s}^{(i-t,j)},\pmb\theta_{i-t}\right)\right\Vert_{TV}$ in terms of $\mathcal{O}(t\sum^i_{\ell = i-t}\eta_\ell)$. Basically, to account for the sequential policy updates from $\pmb\theta_{i-t}$ to $\pmb\theta_{i}$,
this is achieved by recursively bounding total variation
between $\P\left({\pmb{s}}^{(i,j)}|\pmb{s}^{(i-t,j)},\pmb\theta_{i-t}\right)$
and $\P\left(\tilde{\pmb{s}}^{(i,j)}|\pmb{s}^{(i-t,j)},\pmb\theta_{i-t}\right)$
by using the telescoping sum of the total variations from the timestep $(i-t,j)$ to the timestep $(i,j)$. Then by applying Lemma~\ref{appendix lemma: supportive lemma 3} to account for the dependence impact induced by Markovian noise, the term \eqref{sketch eq: bounded bias 3} is bounded by the total variation distance between the MC state transition probability $\P\left(\tilde{\pmb{s}}^{(i,j)}\in \cdot|\pmb{s}^{(i-t,j)}; \pmb\theta_{i-t}\right)$ and the independent stationary distribution $d^{\pi_{\pmb\theta_{i-t}}}(\cdot)$ following the uniform ergodicity assumption (\textbf{A.2}), which leads to $\mathcal{O}(\varphi(nt))$.
The last term \eqref{sketch eq: bounded bias 5} can be bounded by the total variation between two stationary distributions $d^{\pi_{\pmb\theta_k}}(\cdot)$ and $d^{\pi_{\pmb\theta_{i-1}}}(\cdot)$. By applying Lemma~\ref{appendix lemma: supportive lemma 5}, the difference is associated to policy update
$\E \Vert\pmb\theta_k - \pmb\theta_{i-t}\Vert$ and bounded by $\mathcal{O}(\sum^k_{\ell=i-t}\eta_\ell)$.

Therefore, we can bound the bias of the single-sample LR gradient estimator, 
\begin{equation*}
\left\Vert\Delta_{\textbf{(i)}}(\pmb{x}^{(i,j)},\pmb\theta_i,\pmb\theta_k)\right\Vert \leq 
2M\varphi(nt)+2nM^2 U_\pi t\sum^i_{\ell = i-t}\eta_\ell  + M(2MC_d+2L_g+M U_\pi)\sum^k_{\ell=i-t}\eta_\ell, 
\end{equation*}
from which the conclusion follows. The detailed proof is provided in Appendix~\ref{appendix sec: bias of policy gradient} and \ref{appendix sec: proof of technical lemmas}.
\end{proof}

\section{Proof of Preliminary Lemmas}\label{appendix sec: proof of preliminary lemmas}

\subsection{Key Proof Technique}\label{appendix subsec: proof techniques}
The key proof techniques in Theorems~\ref{theorem: bounded bias for LR graident} and \ref{convergence theorem} rely on employing the uniform ergodicity Assumption~(\textbf{A.2}). This technique was
first introduced by \cite{bhandari2018finite} to address the Markovian noise in policy evaluation. 
\cite{zou2019finite} extended its usage to the Q-learning setting and \cite{wu2020finite} further considered the situation with the policy parameter changing. In this work, we take two steps further: (1) reusing historical samples collected under old behavior policies, thereby causing a complex dependence structure; and (2) using a simple but biased LR-based gradient estimate (i.e., ratio of target and behavior policies), which introduces additional theoretical difficulty.

Suppose that a historical sample $\pmb{x}^{(i,j)}=\left(\pmb{s}^{(i,j)},\pmb{a}^{(i,j)}\right)$ is selected and replayed at iteration $k$ with $i<k$. Due to Markovian noise and policy updates, all samples are correlated and their dependencies can be analyzed by using the Auxiliary Markov Chain (AMC) and Stationary Markov Chain (SMC) constructed below.
To elucidate the effects of Markovian noise, policy update, and historical sample reuse, AMC and SMC will be utilized along with uniform ergodicity and triangle inequality to bound the dependency effect induced by Markovian noise and experience reply in the proof of Theorems~\ref{theorem: bounded bias for LR graident} and \ref{convergence theorem}.

\vspace{0.05in}
\noindent\textbf{Original Markov Chain}: For reference, we first present the original Markov chain with policy update, i.e.,
\begin{gather}
    \pmb{s}^{(i-t,1)}\stackrel{\pmb\theta_{i-t}}{\longrightarrow} \pmb{a}^{(i-t,1)}
    \stackrel{\P}{\longrightarrow} \cdots\stackrel{\P}{\longrightarrow}
    {\pmb{s}^{(i-t, j)}\stackrel{\pmb\theta_{i-t}}{\longrightarrow} \pmb{a}^{(i-t, j)}\stackrel{\P}{\longrightarrow}
    \cdots \stackrel{\P}{\longrightarrow} \pmb{s}^{(i-t,n
    )} \stackrel{\pmb\theta_{i-t}}{\longrightarrow} {\pmb{a}}^{(i-t,n)}\stackrel{\P}{\longrightarrow}} \nonumber\\
     {\cdots} \nonumber\\
     {\pmb{s}^{(i,1)}\stackrel{\pmb\theta_{i}}{\longrightarrow} \pmb{a}^{(i,1)}\stackrel{\P}{\longrightarrow} 
     \cdots \stackrel{\P}{\longrightarrow}
    \pmb{s}^{(i, j)}\stackrel{\pmb\theta_{i}}{\longrightarrow}} {\pmb{a}^{(i, j)}}\stackrel{\P}{\longrightarrow}
    \cdots \stackrel{\P}{\longrightarrow} \pmb{s}^{(i,n
    )}\stackrel{\pmb\theta_{i}}{\longrightarrow}{\pmb{a}}^{(i-t,n)}\stackrel{\P}{\longrightarrow} \nonumber\\
    \cdots \nonumber\\
    \pmb{s}^{(k,1)}\stackrel{\pmb\theta_{k}}{\longrightarrow} \pmb{a}^{(k,1)}\stackrel{\P}{\longrightarrow}  \cdots
    \pmb{s}^{(k, j+1)}\stackrel{\pmb\theta_{k}}{\longrightarrow} \pmb{a}^{(k, j+1)}\stackrel{\P}{\longrightarrow}
    \cdots \stackrel{\P}{\longrightarrow} \pmb{s}^{(k,n
    )} \stackrel{\pmb\theta_{i-t}}{\longrightarrow} \tilde{\pmb{a}}^{(k,n)}\stackrel{\P}{\longrightarrow}\cdots \label{eq: folded original MC}
\end{gather}
where $\P$ represents the state transition model on $\P(\pmb{s}^\prime\in\cdot|\pmb{s},\pmb{a})$ and $t$ is an integer with $0\leq t<i$.

\vspace{0.05in}
\noindent\textbf{Auxiliary Markov Chain (AMC)}:  The analysis of Markovian noise relies on the
auxiliary Markov chain introduced by \cite{zou2019finite} which constructs a state-action sequence by following a fixed policy $\pi_{\pmb\theta_{i-t}}$ with $0\leq t < i$,
\begin{gather}
    \pmb{s}^{(i-t,1)}\stackrel{\pmb\theta_{i-t}}{\longrightarrow} {\pmb{a}}^{(i-t,1)}\stackrel{\P}{\longrightarrow} 
     \cdots\stackrel{\P}{\longrightarrow}
    \textcolor{brown}{\tilde{\pmb{s}}^{(i-t, j)}}\textcolor{brown}{\stackrel{\pmb\theta_{i-t}}{\longrightarrow} \tilde{\pmb{a}}^{(i-t, j)}\stackrel{\P}{\longrightarrow}
    \cdots \stackrel{\P}{\longrightarrow} \tilde{\pmb{s}}^{(i-t,n
    )} \stackrel{\pmb\theta_{i-t}}{\longrightarrow} \tilde{\pmb{a}}^{i-t,n}\stackrel{\P}{\longrightarrow}} \nonumber\\
    \textcolor{brown}{\tilde{\pmb{s}}^{(i-t +1,1)}\stackrel{\pmb\theta_{i-t}}{\longrightarrow} \tilde{\pmb{a}}^{(i-t+1,1)}\stackrel{\P}{\longrightarrow} 
    \cdots \stackrel{\P}{\longrightarrow} \tilde{\pmb{s}}^{(i-t+1,n
    )} \stackrel{\pmb\theta_{i-t}}{\longrightarrow} \tilde{\pmb{a}}^{i-t,n}\stackrel{\P}{\longrightarrow}} \nonumber\\
     \textcolor{brown}{\cdots} \nonumber\\
     \textcolor{brown}{\tilde{\pmb{s}}^{(i,1)}\stackrel{\pmb\theta_{i-t}}{\longrightarrow} \tilde{\pmb{a}}^{(i,1)}\stackrel{\P}{\longrightarrow} 
     \cdots \stackrel{\P}{\longrightarrow}
    \tilde{\pmb{s}}^{(i, j)}\stackrel{\pmb\theta_{i-t}}{\longrightarrow}}\textcolor{brown}{\tilde{\pmb{a}}^{(i, j)}}\stackrel{\P}{\longrightarrow}
     \ldots  \label{eq: folded auxiliary MC}
\end{gather}
Since a fixed policy $\pi_{\pmb\theta_{i-t}}$ is used, the highlighted trajectory part associates with the difference comparing with original Markov chain with policy updates.

\vspace{0.05in}
\noindent\textbf{Stationary Markov Chain (SMC)}: The analysis of the interdependence of behavior policies relies on bounding the distance between their policy parameters. To achieve this, we construct a hypothetical sample path (highlighted trajectory part below)  wherein each sample in the path is independently drawn from the fixed stationary distribution $(\pmb{s},\pmb{a}) \sim d^{\pi_{\pmb\theta_{i-t}}}(\pmb{s},\pmb
a)$. It is worth noting that the chain we created is no longer a Markov chain but a sequence of independent samples.
\begin{gather}
{\pmb{s}}^{(i-t,1)}\stackrel{\pmb\theta_{i-t}}{\longrightarrow} {\pmb{a}}^{(i-t,1)}\stackrel{\P}{\longrightarrow} 
     \cdots\stackrel{\P}{\longrightarrow}
    \textcolor{brown}{\check{\pmb{s}}^{(i-t, j)}\stackrel{\pmb\theta_{i-t}}{\longrightarrow}\check{\pmb{a}}^{(i-t, j)}\stackrel{d^{\pi_{\pmb\theta_{i-t}}}}{\longrightarrow}
    \cdots\stackrel{d^{\pi_{\pmb\theta_{i-t}}}}{\longrightarrow} \check{\pmb{s}}^{(i-t, n)}\stackrel{\pmb\theta_{i-t}}{\longrightarrow} \check{\pmb{a}}^{(i-t, n)} \stackrel{d^{\pi_{\pmb\theta_{i-t}}}}{\longrightarrow}} \nonumber\\
\textcolor{brown}{\check{\pmb{s}}^{(i-t+1,1)}\stackrel{\pmb\theta_{i-t+1}}{\longrightarrow} \check{\pmb{a}}^{(i-t+1,1)}\stackrel{d^{\pi_{\pmb\theta_{i-t}}}}{\longrightarrow}
    \cdots \stackrel{d^{\pi_{\pmb\theta_{i-t}}}}{\longrightarrow}\check{\pmb{s}}^{(i-t+1, n)}\stackrel{\pmb\theta_{i-t}}{\longrightarrow} \check{\pmb{a}}^{(i-t+1, n)}\stackrel{d^{\pi_{\pmb\theta_{i-t}}}}{\longrightarrow}} \nonumber\\
     \textcolor{brown}{\cdots} \nonumber\\
     \textcolor{brown}{\check{\pmb{s}}^{(i,1)}\stackrel{\pmb\theta_{i-t}}{\longrightarrow} \check{\pmb{a}}^{(i,1)}  \stackrel{d^{\pi_{\pmb\theta_{i-t}}}}{\longrightarrow}
     \cdots \stackrel{d^{\pi_{\pmb\theta_{i-t}}}}{\longrightarrow}
    \check{\pmb{s}}^{(i, j)}\stackrel{\pmb\theta_{i-t}}{\longrightarrow}\check{\pmb{a}}^{(i, j)}}\stackrel{\P}{\longrightarrow}
    \cdots  \label{eq: stationary transitions}
\end{gather}

We can concatenate the nested chains \eqref{eq: folded original MC} and \eqref{eq: folded auxiliary MC} by reindexing the trajectory $t\leftarrow nt$ and  $\tau\leftarrow n (i-1)+j$ such that \begin{align*}
    (\pmb{s}_{\tau-t},\pmb{a}_{\tau-t})\leftarrow\left(\pmb{s}^{(i-t,j)}, \pmb{a}^{(i-t,j)}\right) \text{  and  } (\pmb{s}_\tau,\pmb{a}_\tau)\leftarrow\left(\pmb{s}^{(i,j)}, \pmb{a}^{(i,j)}\right).
\end{align*} 
As a result, the subsequence from $\left(\pmb{s}^{(i-t,j)}, \pmb{a}^{(i-t,j)}\right)$ to $\left(\pmb{s}^{(i,j)}, \pmb{a}^{(i,j)}\right)$  in Markov chains \eqref{eq: folded auxiliary MC} and \eqref{eq: folded original MC} can be concatenated into a single chain shown below.

\vspace{0.05in}
\noindent\textbf{(Concatenated) Auxilary Markov Chain:}
    \begin{small}
    \begin{equation}\label{eq: auxiliary MC}
    \tilde{\pmb{s}}_{\tau-t}\stackrel{\pmb\theta_{\tau-t}}{\longrightarrow} \tilde{\pmb{a}}_{\tau-t}\stackrel{\P}{\longrightarrow} \tilde{\pmb{s}}_{\tau-t+1}\stackrel{\pmb\theta_{\tau-t}}{\longrightarrow} \tilde{\pmb{a}}_{\tau-t+1}\stackrel{\P}{\longrightarrow} \tilde{\pmb{s}}_{\tau-t+2}\stackrel{\pmb\theta_{\tau-t}}{\longrightarrow} \tilde{\pmb{a}}_{\tau-t+2}\stackrel{\P}{\longrightarrow}\cdots \stackrel{\P}{\longrightarrow} \tilde{\pmb{s}}_{\tau}\stackrel{\pmb\theta_{\tau-t}}{\longrightarrow} \tilde{\pmb{a}}_{\tau}\stackrel{\P}{\longrightarrow}\tilde{\pmb{s}}_{\tau+1}.
\end{equation}
\end{small}

\noindent\textbf{(Concatenated) Original Markov Chain:}
\begin{small}
\begin{equation}\label{eq: original MC}
    \pmb{s}_{\tau-t}\stackrel{\pmb\theta_{\tau-t}}{\longrightarrow} \pmb{a}_{\tau-t}\stackrel{\P}{\longrightarrow} \pmb{s}_{\tau-t+1}\stackrel{\pmb\theta_{\tau-t+1}}{\longrightarrow} \pmb{a}_{\tau-t+1}\stackrel{\P}{\longrightarrow} \pmb{s}_{\tau-t+2}\stackrel{\pmb\theta_{\tau-t+2}}{\longrightarrow} \pmb{a}_{\tau-t+2}\stackrel{\P}{\longrightarrow}\cdots \stackrel{\P}{\longrightarrow} \pmb{s}_{\tau}\stackrel{\pmb\theta_{\tau}}{\longrightarrow} \pmb{a}_{\tau}\stackrel{\P}{\longrightarrow}\pmb{s}_{\tau+1}.
\end{equation}
\end{small}

Therefore, the existing results on the concatenated AMC (single chain MDP) can be directly transferred to our setting. We list those results as supportive lemmas in Section~\ref{appendix subsec: lipschitz continuity} 

\subsection{Notations for Main Theorems} \label{appendix subsec: notations}

We use $\tilde{\pmb{x}}^{(i,j)}=(\tilde{\pmb{s}}^{(i,j)},\tilde{\pmb{a}}^{(i,j)})$ to denote a state-action pair sample generated from the Auxiliary Markov Chain (AMC) that constructs a state-action sequence by following a fixed policy $\pi_{\pmb\theta_{i-t}}$ with $0\leq t < i$.
Let $\check{\pmb{x}}^{(i,j)}=(\check{\pmb{s}}^{(i,j)},\check{\pmb{a}}^{(i,j)})$ denote a sample generated from the stationary distribution $d^{\pi_{\pmb\theta_{i-t}}}(\pmb{s},\pmb{a})$.
We define some notations here for clarification:
\begin{align*}
    \pmb{x}_t &= (\pmb{s}_t, \pmb{a}_t)  \\
    \widehat{\nabla} J^{LR}\left(\pmb{x}^{(i,j)}, \pmb\theta_i,\pmb\theta_k\right)&= \frac{\pi_{\pmb\theta_k}\left(\pmb{a}^{(i,j)}|\pmb{s}^{(i,j)}\right)}{\pi_{\pmb\theta_i}\left(\pmb{a}^{(i,j)}|\pmb{s}^{(i,j)}\right)} g\left(\pmb{x}^{(i,j)}|\pmb\theta_k\right) \\
     \widehat{\nabla} J^{LR}\left(\pmb{x}^{(i,j)}, \pmb\theta_i,\pmb\theta_{i-t}\right)&= \frac{\pi_{\pmb\theta_{i-t}}\left(\pmb{a}^{(i,j)}|\pmb{s}^{(i,j)}\right)}{\pi_{\pmb\theta_i}\left(\pmb{a}^{(i,j)}|\pmb{s}^{(i,j)}\right)} g\left(\pmb{x}^{(i,j)}|\pmb\theta_{i-t}\right) \\
    \widehat{\nabla} J^{CLR}\left(\pmb{x}^{(i,j)}, \pmb\theta_i,\pmb\theta_k\right)&= \min\left(\frac{\pi_{\pmb\theta_k}\left(\pmb{a}^{(i,j)}|\pmb{s}^{(i,j)}\right)}{\pi_{\pmb\theta_i}\left(\pmb{a}^{(i,j)}|\pmb{s}^{(i,j)}\right)}, U_f \right)g\left(\pmb{x}^{(i,j)}|\pmb\theta_k\right) \\
     \widehat{\nabla} J^{CLR}\left(\pmb{x}^{(i,j)}, \pmb\theta_i,\pmb\theta_{i-t}\right)&= \min \left(\frac{\pi_{\pmb\theta_{i-t}}\left(\pmb{a}^{(i,j)}|\pmb{s}^{(i,j)}\right)}{\pi_{\pmb\theta_i}\left(\pmb{a}^{(i,j)}|\pmb{s}^{(i,j)}\right)}, U_f\right) g\left(\pmb{x}^{(i,j)}|\pmb\theta_{i-t}\right) \\
    \widehat{\nabla} J\left(\tilde{\pmb{x}}^{(i,j)}, \pmb\theta_i,\pmb\theta_k\right)&= g\left(\tilde{\pmb{x}}^{(i,j)}|\pmb\theta_k\right)\\
    \widehat{\nabla} J\left(\check{\pmb{x}}^{(i,j)}, \pmb\theta_i,\pmb\theta_k\right)&= g\left(\check{\pmb{x}}^{(i,j)}|\pmb\theta_k\right)\\
    \widehat{\nabla} J\left(\check{\pmb{x}}^{(i,j)}, \pmb\theta_i,\pmb\theta_k\right)&= g\left(\check{\pmb{x}}^{(i,j)}|\pmb\theta_k\right)\\
    \Gamma\left(\pmb{x}^{(i,j)},\pmb\theta_i, \pmb\theta_{k}\right) &= \left\langle \nabla J(\pmb\theta_k),\widehat{\nabla} J^{LR}\left(\pmb{x}^{(i,j)}, \pmb\theta_i,\pmb\theta_k\right) -\nabla J(\pmb\theta_k) \right\rangle \\
\Gamma\left(\pmb{x}^{(i,j)},\pmb\theta_i, \pmb\theta_{i-t}\right) &= \left\langle \nabla J(\pmb\theta_{i-t}),\widehat{\nabla} J^{LR}\left(\pmb{x}^{(i,j)}, \pmb\theta_i,\pmb\theta_{i-t}\right) -\nabla J(\pmb\theta_{i-t}) \right\rangle \\
\Gamma\left(\tilde{\pmb{x}}^{(i,j)},\pmb\theta_i, \pmb\theta_{i-t}\right) &= \left\langle \nabla J(\pmb\theta_{i-t}),\widehat{\nabla} J\left(\tilde{\pmb{x}}^{(i,j)}, \pmb\theta_i,\pmb\theta_{i-t}\right) -\nabla J(\pmb\theta_{i-t}) \right\rangle \\
\Gamma\left(\check{\pmb{x}}^{(i,j)},\pmb\theta_i, \pmb\theta_{i-t}\right) &= \left\langle \nabla J(\pmb\theta_{i-t}),\widehat{\nabla} J\left(\check{\pmb{x}}^{(i,j)}, \pmb\theta_i,\pmb\theta_{i-t}\right) -\nabla J(\pmb\theta_{i-t}) \right\rangle \\
\end{align*}

\subsection{Preliminary Supportive Lemmas} \label{appendix subsec: lipschitz continuity}
Lemma~\ref{lemma: bounded of policy gradient} establishes the boundedness of policy gradient and its stochastic estimate.
 There exists a constant $M>0$ such that the $L_2$ norm of the scenario-based policy gradient estimate is bounded, i.e., $\Vert g(\pmb{s},\pmb{a}|\pmb\theta)\Vert \leq M$.
 
\noindent\textbf{Lemma}~\ref{lemma: bounded of policy gradient} (\textbf{Boundedness
 of Stochastic Policy Gradients})
 \textit{For any $\pmb\theta$, the norm of the policy gradient $\nabla  J(\pmb\theta)$ and its scenario-based estimate $g(\pmb{s},\pmb{a}|\pmb\theta)$ is bounded, i.e., 
 $$\Vert \nabla  J(\pmb\theta)\Vert \leq M 
 \text{ and } 
  \Vert g\left(\pmb{s},\pmb{a}|\pmb\theta\right)\Vert \leq M,
  $$
 where $M=\frac{2U_r U_\Theta}{1-\gamma}$.}
 \begin{proof} From Eq.~\eqref{eq.scenariobasedGradient}, we have the scenario-based gradient estimate
 \begin{equation}
 \Vert g\left(\pmb{s},\pmb{a}|\pmb\theta\right) \Vert\leq |Q^{\pi_{\pmb\theta}}(\pmb{s},\pmb{a}) -V^{\pi_{\pmb\theta}}(\pmb{s})| \Vert\nabla \log \pi_{\pmb{\theta}}(\pmb{a}|\pmb{s})\Vert \leq \frac{2U_r U_\Theta}{1-\gamma}.
 \label{eq: boundedness of policy gradient}
 \end{equation}
Step~\eqref{eq: boundedness of policy gradient} follows by applying Assumption~\ref{assumption 2}(ii) and the boundedness of value functions in Eq.~\eqref{eq: bound of action value function} and \eqref{eq: bound of state value function}. Let $M=\frac{2U_r U_\Theta}{1-\gamma}$. The boundedness of $\nabla  J(\pmb\theta)$ follows the fact that
 $$\Vert\nabla J(\pmb{\theta})\Vert=\Vert\E_{(\pmb{s},\pmb{a})\sim d^{\pi_{\pmb\theta}}(\cdot,\cdot)}[g(\pmb{s},\pmb{a}|\pmb\theta)]\Vert\stackrel{(\star)}{\leq} \E_{(\pmb{s},\pmb{a})\sim d^{\pi_{\pmb\theta}}(\cdot,\cdot)}[\Vert g(\pmb{s},\pmb{a}|\pmb\theta)\Vert]\stackrel{(\star\star)}{\leq}\E[M]=M,$$
 where step~($\star$) follows by applying Jensen's inequality and $(\star\star)$ follows by applying \eqref{eq: boundedness of policy gradient}.
 \end{proof}
 
The proof of the following lemma is similar to the proof of Lemma B.2 in \cite{wu2020finite} with the difference that \cite{wu2020finite} considers the case with finite action space.

\begin{lemma}[Lipschitz Continuity of Stationary Distribution]\label{lemma: relations of total variation measures}
Given time indexes $t$ and $\tau$ such that $\tau\geq t\geq 0$, we consider the concatenated original Markov chain defined in \eqref{eq: original MC} and auxiliary Markov chain \eqref{eq: auxiliary MC}. Conditioning on $\pmb{s}_{\tau-t+1}$ and $\pmb\theta_{\tau-t}$, we have
\begin{align}
    \Vert & p(\pmb{s}_{\tau+1}|\pmb{s}_{\tau-t+1},\pmb\theta_{\tau-t}) -p(\tilde{\pmb{s}}_{\tau+1}|\pmb{s}_{\tau-t+1},\pmb\theta_{\tau-t})\Vert_{TV} \nonumber\\
    &\leq  \Vert p(\pmb{s}_{\tau},\pmb{a}_\tau|\pmb{s}_{\tau-t+1},\pmb\theta_{\tau-t}) -p(\tilde{\pmb{s}}_\tau,\tilde{\pmb{a}}_\tau|\pmb{s}_{\tau-t+1},\pmb\theta_{\tau-t})\Vert_{TV}\label{eq:lemma total_variation 1}\\
    \Vert & p(\pmb{s}_{\tau},\pmb{a}_\tau|\pmb{s}_{\tau-t+1},\pmb\theta_{\tau-t}) -p(\tilde{\pmb{s}}_{\tau},\tilde{\pmb{a}}_\tau|\pmb{s}_{\tau-t+1},\pmb\theta_{\tau-t})\Vert_{TV} \nonumber\\
    & \leq     \Vert p(\pmb{s}_\tau|\pmb{s}_{\tau-t+1},\pmb\theta_{\tau-t}) -p(\tilde{\pmb{s}}_\tau|\pmb{s}_{\tau-t+1},\pmb\theta_{\tau-t})\Vert_{TV}+  U_\pi\E[\Vert \pmb\theta_\tau-\pmb\theta_{\tau-t}\Vert]\label{eq:lemma total_variation 3}
\end{align}

\end{lemma}
\begin{proof}
The proof for \eqref{eq:lemma total_variation 1} and \eqref{eq:lemma total_variation 3} is identical to that of Lemma~B.2 in \cite{wu2020finite}. The only difference is that the proof in \cite{wu2020finite} handles a discrete action space and uses $\sum_{a\in\mathcal{A}}$, but our lemma considers both discrete and continuous action space, employing the integral notation $\int_{\mathcal{A}}$ to traverse this space.

\end{proof}

Lemmas~\ref{lemma: lipchitz continuity of stationary distribution} and \ref{lemma: lipchitz continuity of action value} establish the Lipschitz continuity of the stationary distribution and action value function. Similar supportive lemmas appear in many policy search algorithms \citep{xu2020improving,zou2019finite,wu2020finite}. Readers are referred to \citet[Lemma 3 and 4]{xu2020improving} for detailed proofs of these two lemmas.

\begin{lemma}[\cite{xu2020improving}, Lemma 3]\label{lemma: lipchitz continuity of stationary distribution} 
Consider the stationary distribution of the state-action pair $d^{\pi_{\pmb\theta}}(s,a)$.
For any $\pmb{\theta}_1,\pmb{\theta}_2\in\mathbb{R}^d$ it holds that 
\begin{equation*}
    \Vert d^{\pi_{\pmb\theta_1}}(\cdot,\cdot)-d^{\pi_{\pmb\theta_2}}(\cdot,\cdot)\Vert_{TV}\leq C_d \Vert\pmb\theta_1-\pmb\theta_2\Vert
\end{equation*}
where $C_d=U_\pi(1+\lceil\log_{\kappa}\kappa_0^{-1}\rceil+\frac{1}{1-\kappa})$.
\end{lemma}

\begin{lemma}[\cite{xu2020improving}, Lemma 4]\label{lemma: lipchitz continuity of action value} 
Suppose Assumptions \ref{assumption 2} and \ref{assumption 3} hold. For any $\pmb\theta_1,\pmb\theta_2\in \mathbb{R}^d$ and any state-action pair $(\pmb{s},\pmb{a})\in \mathcal{S}\times\mathcal{A}$, we have
$$|Q^{\pi_{\pmb\theta_1}}(\pmb{s},\pmb{a})-Q^{\pi_{\pmb\theta_2}}(\pmb{s},\pmb{a})|\leq L_Q \Vert \pmb\theta_1-\pmb\theta_2\Vert$$
where $L_Q=\frac{2 U_r C_d}{1-\gamma}$
\end{lemma}

\begin{lemma}[Lipchitz Continuity of Advantage Function]\label{lemma: lipchitz continuity of advantage}
Suppose Assumptions \ref{assumption 2} and \ref{assumption 3} hold. For any $\pmb\theta_1,\pmb\theta_2\in \mathbb{R}^d$ and any state-action pair $(\pmb{s},\pmb{a})\in \mathcal{S}\times\mathcal{A}$, we have
$$|A^{\pi_{\pmb\theta_1}}(\pmb{s},\pmb{a})-A^{\pi_{\pmb\theta_2}}(\pmb{s},\pmb{a})|\leq L_A \Vert \pmb\theta_1-\pmb\theta_2\Vert$$
where $L_A=2L_Q+U_ J U_\pi$, $L_Q=\frac{2 U_r C_d}{1-\gamma}$, and $U_ J = U_r/(1-\gamma)$.
\end{lemma}
\begin{proof}
To begin with, we show the Lipchitz continuity of state value function,
\begin{align}
    |V^{\pi_{\pmb\theta_1}}(\pmb{s}) - V^{\pi_{\pmb\theta_2}}(\pmb{s})|
    &=\left|\int_{\mathcal{A}}Q^{\pi_{\pmb\theta_1}}(\pmb{s},\pmb{a})\pi_{\pmb\theta_1}(\pmb{a}|\pmb{s})\dd\pmb{a} - \int_{\mathcal{A}} Q^{\pi_{\pmb\theta_2}}(\pmb{s},\pmb{a}) \pi_{\pmb\theta_2}(\pmb{a}|\pmb{s})\dd\pmb{a}\right| \nonumber\\
    &\leq\int_{\mathcal{A}} |Q^{\pi_{\pmb\theta_1}}(\pmb{s},\pmb{a})-Q^{\pi_{\pmb\theta_2}}(\pmb{s},\pmb{a})|\pi_{\pmb\theta_1}(\pmb{a}|\pmb{s})\dd\pmb{a} \nonumber\\
    & \quad + \int_{\mathcal{A}} \left|Q^{\pi_{\pmb\theta_2}}(\pmb{s},\pmb{a})\right|\left|\pi_{\pmb\theta_1}(\pmb{a}|\pmb{s}) -\pi_{\pmb\theta_2}(\pmb{a}|\pmb{s}) \right|\dd\pmb{a}. \nonumber
\end{align}
By applying Lemma~\ref{lemma: lipchitz continuity of action value}, it holds that $$\int_{\mathcal{A}} |Q^{\pi_{\pmb\theta_1}}(\pmb{s},\pmb{a})-Q^{\pi_{\pmb\theta_2}}(\pmb{s},\pmb{a})|\pi_{\pmb\theta_1}(\pmb{a}|\pmb{s})\dd\pmb{a}\leq L_Q\Vert\pmb\theta_1-\pmb\theta_2\Vert \int_{\mathcal{A}} \pi_{\pmb\theta_1}(\pmb{a}|\pmb{s})\dd\pmb{a}=L_Q\Vert\pmb\theta_1-\pmb\theta_2\Vert.$$ 
By applying Eq.~\eqref{eq: bound of action value function} and Assumption~\ref{assumption 2}, it holds that
$$\int_{\mathcal{A}} \left|Q^{\pi_{\pmb\theta_2}}(\pmb{s},\pmb{a})\right|\left|\pi_{\pmb\theta_1}(\pmb{a}|\pmb{s}) -\pi_{\pmb\theta_2}(\pmb{a}|\pmb{s}) \right|\dd\pmb{a}\leq U_ J U_\pi \Vert\pmb\theta_1-\pmb\theta_2\Vert.$$
Therefore, we have $|V^{\pi_{\pmb\theta_1}}(\pmb{s}) - V^{\pi_{\pmb\theta_2}}(\pmb{s})|\leq (L_Q +U_ J U_\pi)\Vert\pmb\theta_1-\pmb\theta_2\Vert$. Now we prove the Lipchitz continuity of advantage function. By triangle inequality, it holds that
\begin{align}
    |A^{\pi_{\pmb\theta_1}}(\pmb{s},\pmb{a}) - A^{\pi_{\pmb\theta_2}}(\pmb{s},\pmb{a})| &= |Q^{\pi_{\pmb\theta_1}}(\pmb{s},\pmb{a})-V^{\pi_{\pmb\theta_1}}(\pmb{s}) - Q^{\pi_{\pmb\theta_2}}(\pmb{s},\pmb{a}) + V^{\pi_{\pmb\theta_2}}(\pmb{s})| \nonumber\\
    &\leq |Q^{\pi_{\pmb\theta_1}}(\pmb{s},\pmb{a}) - Q^{\pi_{\pmb\theta_2}}(\pmb{s},\pmb{a})| + |V^{\pi_{\pmb\theta_1}}(\pmb{s}) - V^{\pi_{\pmb\theta_2}}(\pmb{s})| \nonumber\\
    &\leq (2L_Q+U_ J U_\pi) \Vert \pmb\theta_1-\pmb\theta_2\Vert,
\end{align}
which completes the proof.
\end{proof}

\begin{lemma}[Lipchitz Continuity of Sample Gradient]\label{lemma: lipchitz continuity of sample gradient estimate}
Suppose Assumptions \ref{assumption 2} and \ref{assumption 3} hold. For any $\pmb\theta_1,\pmb\theta_2\in \mathbb{R}^d$ and any state-action pair $(\pmb{s},\pmb{a})\in \mathcal{S}\times\mathcal{A}$, we have
$$\Vert g(\pmb{s},\pmb{a}|\pmb\theta_1)-g(\pmb{s},\pmb{a}|\pmb\theta_2)\Vert \leq L_g \Vert \pmb\theta_1-\pmb\theta_2\Vert$$
where $L_g=U_\Theta L_A+2U_ J L_\Theta$.
\end{lemma}
\begin{proof} By applying Eq.~(\ref{eq.scenariobasedGradient}) and Minkowski's inequality, we have
\begin{align}
    \Vert g(\pmb{s},\pmb{a}|\pmb\theta_1)-g(\pmb{s},\pmb{a}|\pmb\theta_2)\Vert &= \Vert A^{\pi_{\pmb\theta_1}}(\pmb{s},\pmb{a}) \nabla\log\pi_{\pmb\theta_1}(\pmb{a}|\pmb{s}) -A^{\pi_{\pmb\theta_2}}(\pmb{s},\pmb{a}) \nabla\log\pi_{\pmb\theta_2}(\pmb{a}|\pmb{s}) \Vert \nonumber\\
 &= \Vert A^{\pi_{\pmb\theta_1}}(\pmb{s},\pmb{a}) \nabla\log\pi_{\pmb\theta_1}(\pmb{a}|\pmb{s}) - A^{\pi_{\pmb\theta_2}}(\pmb{s},\pmb{a}) \nabla\log\pi_{\pmb\theta_1}(\pmb{a}|\pmb{s}) \nonumber\\
 & \quad + A^{\pi_{\pmb\theta_2}}(\pmb{s},\pmb{a}) \nabla\log\pi_{\pmb\theta_1}(\pmb{a}|\pmb{s}) -A^{\pi_{\pmb\theta_2}}(\pmb{s},\pmb{a}) \nabla\log\pi_{\pmb\theta_2}(\pmb{a}|\pmb{s}) \Vert \nonumber\\
    &\leq |A^{\pi_{\pmb\theta_1}}(\pmb{s},\pmb{a}) -A^{\pi_{\pmb\theta_2}}(\pmb{s},\pmb{a})|\Vert \nabla\log\pi_{\pmb\theta_1}(\pmb{a}|\pmb{s})\Vert \nonumber\\
    &\quad + |A^{\pi_{\pmb\theta_2}}(\pmb{s},\pmb{a})| \Vert \nabla\log\pi_{\pmb\theta_1}(\pmb{a}|\pmb{s})-\nabla\log\pi_{\pmb\theta_2}(\pmb{a}|\pmb{s}) \Vert. \nonumber
\end{align}
Notice that $|A^{\pi_{\pmb\theta_2}}(\pmb{s},\pmb{a})|\leq 2U_J$ holds because both state- and action-value functions are bounded by $U_J$, as shown in Eq.~\eqref{eq: bound of action value function} and \eqref{eq: bound of state value function}. By applying Lemma~\ref{lemma: lipchitz continuity of advantage} and Assumption~\ref{assumption 2}, we have
\begin{equation*}
\Vert g(\pmb{s},\pmb{a}|\pmb\theta_1)-g(\pmb{s},\pmb{a}|\pmb\theta_2)\Vert \leq U_\Theta L_A\Vert\pmb\theta_1-\pmb\theta_2\Vert + 2U_ J L_\Theta \Vert\pmb\theta_1-\pmb\theta_2\Vert = L_g \Vert\pmb\theta_1-\pmb\theta_2\Vert
\end{equation*}
where $L_g=U_\Theta L_A+2U_ J L_\Theta$.
\end{proof}

\begin{lemma} \label{lemma: variance bound} \textit{At the $k$-th iteration with the target distribution $\pi_{\pmb\theta_k}$, given the replay buffer of policies $\mathcal{U}_k$ storing the behavioral policies $\pi_{\pmb\theta_i}$, 
we have, for $R\in$\{LR,CLR\},
\begin{align*}
&\E\left[\left\Vert\widehat{\nabla} J^{R}_{k}\right\Vert^2\right]\leq \Tr\left(\Var
 \left[\widehat{\nabla} J^{R}_{k} \right]\right) +2\left \Vert \E\left[\nabla  J(\pmb\theta_k)\right]\right\Vert^2+2\left\Vert\E\left[\widehat{\nabla} J^{R}_{k}\right]-\E[\nabla J(\pmb\theta_k)]\right\Vert^2.
\end{align*}
}
\end{lemma}

\begin{proof} The conclusion follows the following steps
\begin{eqnarray}
\lefteqn{
\Tr\left(\Var\left[\widehat{\nabla} J^{R}_{k}\right]\right) }
\nonumber \\
&=& \Tr\left(\E\left[\left(\widehat{\nabla} J^{R}_{k}\right)\left(\widehat{\nabla} J^{R}_{k}\right)^\top\right] - \E\left[\widehat{\nabla} J^{R}_{k}\right]\E\left[\widehat{\nabla} J^{R}_{k}\right]^\top\right)\nonumber\\
&=&\E\left[\left\Vert\widehat{\nabla} J^{R}_{k}\right\Vert^2\right] - \left\Vert\E\left[\widehat{\nabla} J^{R}_{k}\right]\right\Vert^2\nonumber\\
&\geq &\E\left[\left\Vert\widehat{\nabla} J^{R}_{k}\right\Vert^2\right] - 2 \Vert\E[\nabla  J(\pmb\theta_k)]\Vert^2-2\left\Vert\E\left[\widehat{\nabla} J^{R}_{k}\right]-\E\left[\nabla J(\pmb\theta_k)\right]\right\Vert^2
\label{eq.midstep100}
\end{eqnarray}
\end{proof}

\section{Proofs of Main Theorems}\label{appendix sec: proof of main theorems}

\subsection{Proof of Theorem~\ref{theorem: bounded bias for LR graident}} \label{appendix sec: bias of policy gradient}


\textbf{Theorem~\ref{theorem: bounded bias for LR graident}} (Gradient Biasedness)\textbf{.} \textit{Suppose that Assumptions \ref{assumption 2} and \ref{assumption 3} hold. For both LR and CLR policy gradient estimators in Eq.~\eqref{eq.LR-gradient} and \eqref{eq.CLR-gradient}, we consider the policy update rule (\ref{eq: policy gradient update}) with the learning rate $\eta_k$. For any integer $t$ such that $0\leq t < i$, the bias of policy gradient estimators (R$\in$\{LR,CLR\}) can be bounded by 
\begin{equation*}
     \left\Vert\E\left[\widehat{\nabla} J^{R}_k\right] - \E[\nabla  J(\pmb\theta_k)]\right\Vert \leq \frac{Z_1^{R} \cdot t}{|\mathcal{U}_k|}
 \sum_{{\pmb\theta_i}\in \mathcal{U}_k}\sum^i_{\ell = i-t}\eta_\ell + 2M\varphi(nt) + \frac{Z_2^{R}}{|\mathcal{U}_k|}
 \sum_{{\pmb\theta_i}\in \mathcal{U}_k} \sum^k_{\ell=i-t}\eta_\ell, 
\end{equation*}
where $Z_1^{LR}=2nM^2 U_\pi$, $Z_1^{CLR}= 2nM^2 U_\pi U_f$, $Z_2^{LR}=M(2MC_d+2L_g+MU_\pi)$ and $Z_2^{CLR}=M(2MC_d+ (U_f+1)L_g+MU_\pi)$. Here, the Lipchitz constants $C_d$ and $L_g$ are defined in Lemma~\ref{lemma: lipchitz continuity of stationary distribution} and Lemma~\ref{lemma: lipchitz continuity of sample gradient estimate} respectively in Appendix~\ref{appendix subsec: lipschitz continuity}.
}
\begin{proof}
\noindent\textbf{(i)} We start with the proof for the LR policy gradient estimator. Notice that
\begin{align}
    \left\Vert\E\left[\widehat{\nabla} J^{LR}_k\right] -\E[\nabla  J(\pmb\theta_k)]\right\Vert &= \left\Vert \frac{1}{|\mathcal{U}_k|n}
 \sum_{{\pmb\theta_i}\in \mathcal{U}_k}
 \sum^{n}_{j=1}\E\left[
\frac{\pi_{\pmb\theta_k}\left(\pmb{a}^{(i,j)}|\pmb{s}^{(i,j)}\right)}
 {\pi_{\pmb\theta_i}\left(\pmb{a}^{(i,j)}|\pmb{s}^{(i,j)}\right)} g\left(\pmb{x}^{(i,j)}|\pmb\theta_k\right)-\nabla J(\pmb\theta_k)\right]\right\Vert \nonumber\\
 &\leq \frac{1}{|\mathcal{U}_k|n}
 \sum_{{\pmb\theta_i}\in \mathcal{U}_k}
 \sum^{n}_{j=1}\left\Vert \E\left[
\frac{\pi_{\pmb\theta_k}\left(\pmb{a}^{(i,j)}|\pmb{s}^{(i,j)}\right)}
{\pi_{\pmb\theta_i}\left(\pmb{a}^{(i,j)}|\pmb{s}^{(i,j)}\right)} g\left(\pmb{x}^{(i,j)}|\pmb\theta_k\right)-\nabla J(\pmb\theta_k)\right]\right\Vert \nonumber\\
 & = \frac{1}{|\mathcal{U}_k|n}
 \sum_{{\pmb\theta_i}\in \mathcal{U}_k}
 \sum^{n}_{j=1}\left\Vert \Delta_{\textbf{(i)}}(\pmb{x}^{(i,j)},\pmb\theta_i,\pmb\theta_k)\right\Vert,
 \label{eq1: theorem bounded bias for LR graident}
\end{align}
where $\Delta_{\textbf{(i)}}(\pmb{x}^{(i,j)},\pmb\theta_i,\pmb\theta_k)=\E\left[
\frac{\pi_{\pmb\theta_k}\left(\pmb{a}^{(i,j)}|\pmb{s}^{(i,j)}\right)}
 {\pi_{\pmb\theta_i}\left(\pmb{a}^{(i,j)}|\pmb{s}^{(i,j)}\right)} g\left(\pmb{x}^{(i,j)}|\pmb\theta_k\right)-\nabla J(\pmb\theta_k)\right]$ can be decomposed by
\begin{align}
\left\Vert\Delta_{\textbf{(i)}}(\pmb{x}^{(i,j)},\pmb\theta_i,\pmb\theta_k)\right\Vert&\leq\left\Vert\E\left[\widehat{\nabla} J^{LR}\left({\pmb{x}}^{(i,j)}, \pmb\theta_i,\pmb\theta_{k}\right)-\widehat{\nabla} J^{LR}\left({\pmb{x}}^{(i,j)}, \pmb\theta_i,\pmb\theta_{i-t}\right) \right] \right\Vert \nonumber\\
&\quad +\left\Vert\E\left[\widehat{\nabla} J^{LR}\left({\pmb{x}}^{(i,j)}, \pmb\theta_i,\pmb\theta_{i-t}\right)-\widehat{\nabla} J\left(\tilde{\pmb{x}}^{(i,j)}, \pmb\theta_i,\pmb\theta_{i-t}\right)\right] \right\Vert \nonumber\\
&\quad + \left\Vert \E\left[\widehat{\nabla} J\left(\tilde{\pmb{x}}^{(i,j)}, \pmb\theta_i,\pmb\theta_{i-t}\right)-\widehat{\nabla} J\left(\check{\pmb{x}}^{(i,j)}, \pmb\theta_i,\pmb\theta_{i-t}\right)\right] \right\Vert \nonumber\\
&\quad + \left\Vert \E\left[\widehat{\nabla} J\left(\check{\pmb{x}}^{(i,j)}, \pmb\theta_i,\pmb\theta_{i-t}\right)-\widehat{\nabla} J\left(\check{\pmb{x}}^{(i,j)}, \pmb\theta_i,\pmb\theta_{k}\right)\right]\right\Vert \nonumber\\
&\quad + \left\Vert\E\left[\widehat{\nabla} J\left(\check{\pmb{x}}^{(i,j)}, \pmb\theta_i,\pmb\theta_{k}\right)-\nabla  J\left(\pmb\theta_k\right)\right] \right\Vert. \label{eq: bounded bias 5}
\end{align}
By applying Lemmas~\ref{appendix lemma: supportive lemma 1}, \ref{appendix lemma: supportive lemma 2}, \ref{appendix lemma: supportive lemma 3}, \ref{appendix lemma: supportive lemma 4} and \ref{appendix lemma: supportive lemma 5} to the terms in \eqref{eq: bounded bias 5}, we have 
\begin{equation}\label{eq2: theorem bounded bias for LR graident}
\left\Vert\Delta_{\textbf{(i)}}(\pmb{x}^{(i,j)},\pmb\theta_i,\pmb\theta_k)\right\Vert \leq 
2M\varphi(nt)+2nM^2 U_\pi t\sum^i_{\ell = i-t}\eta_\ell  + M(2MC_d+2L_g+M U_\pi)\sum^k_{\ell=i-t}\eta_\ell. 
\end{equation}
The conclusion follows by plugging the upper bound \eqref{eq2: theorem bounded bias for LR graident} to \eqref{eq1: theorem bounded bias for LR graident}.

\vspace{0.1in}
\noindent\textbf{(ii)} The proof for the CLR policy gradient estimator is similar to that of LR one. First,
 \begin{align}
    &\left\Vert\E\left[\widehat{\nabla} J^{CLR}_k\right] -\E[\nabla  J(\pmb\theta_k)]\right\Vert \nonumber\\
    &= \left\Vert \frac{1}{|\mathcal{U}_k|n}
 \sum_{{\pmb\theta_i}\in \mathcal{U}_k}
 \sum^{n}_{j=1}\E\left[
\min\left(\frac{\pi_{\pmb\theta_k}\left(\pmb{a}^{(i,j)}|\pmb{s}^{(i,j)}\right)}
{\pi_{\pmb\theta_i}\left(\pmb{a}^{(i,j)}|\pmb{s}^{(i,j)}\right)},U_f\right) g\left(\pmb{x}^{(i,j)}|\pmb\theta_k\right)-\nabla J(\pmb\theta_k)\right]\right\Vert \nonumber\\
 &\leq \frac{1}{|\mathcal{U}_k|n}
 \sum_{{\pmb\theta_i}\in \mathcal{U}_k}
 \sum^{n}_{j=1}\left\Vert \Delta_{\textbf{(ii)}}(\pmb{x}^{(i,j)},\pmb\theta_i,\pmb\theta_k)\right\Vert \label{eq3: theorem bounded bias for LR graident}
\end{align}
where $\Delta_{\textbf{(ii)}}(\pmb{x}^{(i,j)},\pmb\theta_i,\pmb\theta_k)=\E\left[
\min\left(\frac{\pi_{\pmb\theta_k}\left(\pmb{a}^{(i,j)}|\pmb{s}^{(i,j)}\right)}
{\pi_{\pmb\theta_i}\left(\pmb{a}^{(i,j)}|\pmb{s}^{(i,j)}\right)},U_f\right) g\left(\pmb{x}^{(i,j)}|\pmb\theta_k\right)-\nabla J(\pmb\theta_k)\right]$. Then we decompose $\Delta_{\textbf{(ii)}}(\pmb{x}^{(i,j)},\pmb\theta_i,\pmb\theta_k)$ by
\begin{align*}
\left\Vert\Delta_{\textbf{(ii)}}(\pmb{x}^{(i,j)},\pmb\theta_i,\pmb\theta_k)\right\Vert&\leq\left\Vert\E\left[\widehat{\nabla} J^{CLR}\left({\pmb{x}}^{(i,j)}, \pmb\theta_i,\pmb\theta_{k}\right)-\widehat{\nabla} J^{CLR}\left({\pmb{x}}^{(i,j)}, \pmb\theta_i,\pmb\theta_{i-t}\right) \right] \right\Vert\\
&\quad +\left\Vert\E\left[\widehat{\nabla} J^{CLR}\left({\pmb{x}}^{(i,j)}, \pmb\theta_i,\pmb\theta_{i-t}\right)-\widehat{\nabla} J\left(\tilde{\pmb{x}}^{(i,j)}, \pmb\theta_i,\pmb\theta_{i-t}\right)\right] \right\Vert \\
&\quad + \left\Vert \E\left[\widehat{\nabla} J\left(\tilde{\pmb{x}}^{(i,j)}, \pmb\theta_i,\pmb\theta_{i-t}\right)-\widehat{\nabla} J\left(\check{\pmb{x}}^{(i,j)}, \pmb\theta_i,\pmb\theta_{i-t}\right)\right] \right\Vert\\
&\quad + \left\Vert \E\left[\widehat{\nabla} J\left(\check{\pmb{x}}^{(i,j)}, \pmb\theta_i,\pmb\theta_{i-t}\right)-\widehat{\nabla} J\left(\check{\pmb{x}}^{(i,j)}, \pmb\theta_i,\pmb\theta_{k}\right)\right]\right\Vert \\
&\quad + \left\Vert\E\left[\widehat{\nabla} J\left(\check{\pmb{x}}^{(i,j)}, \pmb\theta_i,\pmb\theta_{k}\right)-\nabla  J\left(\pmb\theta_k\right)\right] \right\Vert
\end{align*}

By applying Lemmas~\ref{appendix lemma: supportive lemma 1}, \ref{appendix lemma: supportive lemma 2}, \ref{appendix lemma: supportive lemma 3}, \ref{appendix lemma: supportive lemma 4} and \ref{appendix lemma: supportive lemma 5}, we have 
\begin{align}\label{eq4: theorem bounded bias for LR graident}
&\left\Vert\Delta_{\textbf{(ii)}}(\pmb{x}^{(i,j)},\pmb\theta_i,\pmb\theta_k)\right\Vert 
\nonumber\\
    &
\leq 2nM^2U_\pi U_f t\sum^i_{\ell=i-t}\eta_\ell + 2M\varphi(nt) + M(2MC_d+(U_f+1) L_g +M U_\pi)\sum^k_{\ell=i-t}\eta_\ell. 
\end{align}
The conclusion follows by obtained by plugging the upper bound \eqref{eq4: theorem bounded bias for LR graident} to \eqref{eq3: theorem bounded bias for LR graident}.
\end{proof}

\subsection{Proof of Proposition~\ref{prop: gradient variance decomposition}}\label{appendix sec: prop 1}
\textbf{Proposition~\ref{prop: gradient variance decomposition}} (Gradient Variance)\textbf{.}
\textit{Suppose that Assumptions \ref{assumption 2} and \ref{assumption 3} hold. At the $k$-th iteration with the target policy $\pi_{\pmb\theta_k}$, given the replay buffer of policies $\mathcal{U}_k$, the total variance of LR/CLR policy gradient estimators ($R\in$\{LR,CLR\}) in Eq.~\eqref{eq.LR-gradient} and \eqref{eq.CLR-gradient} can be decomposed to
\begin{equation*}
     \Tr\left(\Var
 \left[\widehat{\nabla} J^{R}_{k} \right]\right)=\frac{1}{|\mathcal{U}_k|^2}\sum_{\pmb\theta_i\in\mathcal{U}_k}\sum_{\pmb\theta_{i^\prime}\in\mathcal{U}_k} \left(\pmb\sigma^{R}_{i,k}\right)^\top \mathbf{P}_{i,i^\prime,k} \left(\pmb\sigma^{R}_{i^\prime,k}\right)
\end{equation*}
where $\pmb\sigma^{R}_{i,k}=\left(\sqrt{\Var\left[\widehat{\nabla} J^{R,(1)}_{i,k} \right]}, \sqrt{\Var\left[\widehat{\nabla} J^{R,(2)}_{i,k} \right]},\ldots,\sqrt{\Var\left[\widehat{\nabla} J^{R,(d)}_{i,k} \right]}\right)^\top$ and \\ $\mathbf{P}_{i,i^\prime,k}=\Diag\left(\Corr^{(1)}_{i,i^\prime,k},\ldots,\Corr^{(d)}_{i,i^\prime,k}\right)$with correlation coefficients of each gradient estimate pair $\Corr_{i,i^\prime,k}^{(\ell)}=\Corr\left(\widehat{\nabla} J^{R,(\ell)}_{i,k},\widehat{\nabla} J^{R,(\ell)}_{i^\prime,k}\right)$ 
for each $\ell$-th dimension of policy parameters with $\ell=1,2,\ldots,d$. 
By using the greatest element of $\mathbf{P}$, the total variance~\eqref{eq: variance of LR estimaor} is bounded by
\begin{equation*}
     \Tr\left(\Var
 \left[\widehat{\nabla} J^{R}_{k} \right]\right)\leq \frac{1}{|\mathcal{U}_k|^2}\sum_{\pmb\theta_i\in\mathcal{U}_k}\sum_{\pmb\theta_{i^\prime}\in\mathcal{U}_k} \max_{\ell=1,2,\ldots,d}\left(\Corr^{(\ell)}_{i,i^\prime,k}\right)\left(\pmb\sigma^{R}_{i,k}\right)^\top\left(\pmb\sigma^{R}_{i^\prime,k}\right).
\end{equation*}}
\begin{proof}
Due to the Markovian noise and policy update, the observations $\pmb{x}^{(i,j)}$ for any $\pmb\theta_i\in\mathcal{U}_k$ and $j=1,2,\ldots,n$ are dependent. Thus, we have
 \begin{eqnarray}
 \Tr\left(\Var\left[
 \widehat{\nabla} J^{R}_{k} \right]\right)&=&\frac{1}{|\mathcal{U}_k|^2}\Tr\left(\Var\left[ \sum_{\pmb\theta_i\in\mathcal{U}_k}\widehat{\nabla} J^{R}_{i,k} \right]\right)\nonumber\\
 &=&\frac{1}{|\mathcal{U}_k|^2}\sum_{\pmb\theta_i\in\mathcal{U}_k}\sum_{\pmb\theta_{i^\prime}\in\mathcal{U}_k} \Tr\left(\Cov\left[ \widehat{\nabla} J^{R}_{i,k},\widehat{\nabla} J^{R}_{i^\prime,k} \right]\right). 
 \label{eq: LR equality 1}
 \end{eqnarray}

Due to the sample dependence, policy gradient estimates are dependent. Let $\Corr_{i,i^\prime,k}^{(\ell)}=\Corr\left(\widehat{\nabla} J^{R,(\ell)}_{i,k},\widehat{\nabla} J^{R,(\ell)}_{i^\prime,k}\right)$ denote the correlation coefficient of the $\ell$-th element between two individual LR/CLR policy gradient estimates. Then the total variance \eqref{eq: LR equality 1} can be rewritten as 
\begin{small}
\begin{align}
   \Tr\left(\Var\left[
 \widehat{\nabla} J^{R}_{k} \right]\right)&= \frac{1}{|\mathcal{U}_k|^2}\sum_{\pmb\theta_i\in\mathcal{U}_k}\sum_{\pmb\theta_{i^\prime}\in\mathcal{U}_k} \sum^d_{\ell=1} \Corr^{(\ell)}_{i,i^\prime,k}\sqrt{\Var\left[ \widehat{\nabla} J^{R,(\ell)}_{i,k}\right]} \sqrt{\Var\left[\widehat{\nabla} J^{R,(\ell)}_{i^\prime,k} \right]} \nonumber\\
 &= \frac{1}{|\mathcal{U}_k|^2}\sum_{\pmb\theta_i\in\mathcal{U}_k}\sum_{\pmb\theta_{i^\prime}\in\mathcal{U}_k} \left(\pmb\sigma^{R}_{i,k}\right)^\top \mathbf{P}_{i,i^\prime,k} \left(\pmb\sigma^{R}_{i^\prime,k}\right)
\end{align}
\end{small}
where $\pmb\sigma^{R}_{i,k}=\left(\sqrt{\Var\left[\widehat{\nabla} J^{R,(1)}_{i,k} \right]}, \sqrt{\Var\left[\widehat{\nabla} J^{R,(2)}_{i,k} \right]},\ldots,\sqrt{\Var\left[\widehat{\nabla} J^{R,(d)}_{i,k} \right]}\right)^\top$ and \\ $\mathbf{P}_{i,i^\prime,k}=\Diag\left(\Corr^{(1)}_{i,i^\prime,k},\ldots,\Corr^{(d)}_{i,i^\prime,k}\right)=\begin{pmatrix}
    \Corr^{(1)}_{i,i^\prime,k} & 0&\ldots& 0\\
    0 &  \Corr^{(2)}_{i,i^\prime,k} & \ldots & 0 \\
    \vdots & \vdots & \ddots &  \vdots \\
    0 & 0 & \ldots &   \Corr^{(d)}_{i,i^\prime,k} 
\end{pmatrix}$.
\end{proof}




\subsection{Proof of Theorem~\ref{convergence theorem}} \label{appendix: sec convergence}

\begin{lemma}[Proof of $I_1$ in Lemma~\ref{lemma: main lemma for convergence analysis}]\label{I1 in lemma}
Suppose Assumptions \ref{assumption 2} and \ref{assumption 3} hold. Then, 
\begin{equation*}
    I_1 \leq \frac{1}{|\mathcal{U}_k|}\sum_{{\pmb\theta_i}\in \mathcal{U}_k}
 \left(C_1(k-i+t)\eta_{i-t}+C_2 (t+1)t\eta_{i-t} +2M^2\varphi(nt)\right). \nonumber
\end{equation*}
\end{lemma}
\begin{proof}
We have
$$I_1 =\left|\frac{1}{|\mathcal{U}_k|n}
 \sum_{{\pmb\theta_i}\in \mathcal{U}_k}
 \sum^{n}_{j=1}\E\left[\left\langle \nabla J(\pmb\theta_k), 
\frac{\pi_{\pmb\theta_k}\left(\pmb{a}^{(i,j)}|\pmb{a}^{(i,j)}\right)}
 {\pi_{\pmb\theta_i}\left(\pmb{a}^{(i,j)}|\pmb{a}^{(i,j)}\right)} g\left(\pmb{x}^{(i,j)}|\pmb\theta_k\right)-\nabla J(\pmb\theta_k)\right\rangle\right]\right|.$$
 By applying Jensen's inequality, we have
\begin{align}
I_1&\leq \frac{1}{|\mathcal{U}_k|}
 \sum_{{\pmb\theta_i}\in \mathcal{U}_k}
 \left |\frac{1}{n}\sum^{n}_{j=1}\E\left[\left\langle \nabla J(\pmb\theta_k), \widehat{\nabla} J^{R}(\pmb{x}^{(i,j)}, \pmb\theta_i,\pmb\theta_k)-\nabla J(\pmb\theta_k)\right\rangle \right] \right|  \nonumber\\
&\leq \frac{1}{|\mathcal{U}_k|}
 \sum_{{\pmb\theta_i}\in \mathcal{U}_k}
\left| \frac{1}{n}\sum^{n}_{j=1}\E\left[\Gamma\left(\pmb{x}^{(i,j)},\pmb\theta_i, \pmb\theta_k\right)-\Gamma\left(\pmb{x}^{(i,j)},\pmb\theta_i,\pmb\theta_{i-t}\right) \right]\right| \nonumber\\
&\quad +\frac{1}{|\mathcal{U}_k|n}
 \sum_{{\pmb\theta_i}\in \mathcal{U}_k}
\sum^{n}_{j=1}\left|\E\left[\Gamma\left(\pmb{x}^{(i,j)},\pmb\theta_i, \pmb\theta_{i-t}\right)-\Gamma\left(\tilde{\pmb{x}}^{(i,j)},\pmb\theta_i,\pmb\theta_{i-t}\right)\right]\right|\nonumber\\
& \quad +\frac{1}{|\mathcal{U}_k|n}
 \sum_{{\pmb\theta_i}\in \mathcal{U}_k}
 \sum^{n}_{j=1}\left| \E\left[\Gamma\left(\tilde{\pmb{x}}^{(i,j)},\pmb\theta_i, \pmb\theta_{i-t}\right)-\Gamma\left(\check{\pmb{x}}^{(i,j)},\pmb\theta_i,\pmb\theta_{i-t}\right)\right]\right|\nonumber\\
 & \quad +\frac{1}{|\mathcal{U}_k|n}
 \sum_{{\pmb\theta_i}\in \mathcal{U}_k}
\sum^{n}_{j=1}\left|\E\left[\Gamma\left(\check{\pmb{x}}^{(i,j)},\pmb\theta_i, \pmb\theta_{i-t}\right)\right]\right|. \nonumber
\end{align}
By applying Lemmas~\ref{appendix lemma: supportive lemma 6}, \ref{appendix lemma: supportive lemma 7}, \ref{appendix lemma: supportive lemma 8}, and \ref{appendix lemma: supportive lemma 9}, it follows that
\begin{align}
&\left|\frac{1}{n}\sum^{n}_{j=1}\E\left[\left\langle \nabla J(\pmb\theta_k), \widehat{\nabla} J^{R}(\pmb{x}^{(i,j)}, \pmb\theta_i,\pmb\theta_k)-\nabla J(\pmb\theta_k)\right\rangle \right] \right| \nonumber \\
&\leq C^\Gamma_1 (k-i+t)^{1/2}\eta_{i-t} + C_2^\Gamma (k-i+t)\eta_{i-t}+2nM^3U_\pi U_f t^2 \eta_{i-t} +2M^2\varphi(nt)
\nonumber\\
&\leq   C_1 (k-i+t)\eta_{i-t}+{2nM^3U_\pi U_f} t^2 \eta_{i-t} +2M^2\varphi(nt) \label{eq1: conv theorem equation 1} 
\end{align}
where $C_1=\max\{C^\Gamma_1, C^\Gamma_2\}$. The first term of Step~\eqref{eq1: conv theorem equation 1}  follows due to the fact that $(k-i+t)^{1/2} \leq k-i+t$. The second term of Step~\eqref{eq1: conv theorem equation 1} follows from noticing that $\sum^i_{\ell=i-t}\eta_\ell \leq t \eta_{i-t}$ . Let $C_2={2nM^3U_\pi U_f}$. Then, the term $I_1$ becomes
\begin{align}
I_1 &= \frac{1}{|\mathcal{U}_k|}
 \sum_{{\pmb\theta_i}\in \mathcal{U}_k} \left|\frac{1}{n}\sum^{n}_{j=1}\E\left[\left\langle \nabla J(\pmb\theta_k), \widehat{\nabla} J^{R}(\pmb{x}^{(i,j)}, \pmb\theta_i,\pmb\theta_k)-\nabla J(\pmb\theta_k)\right\rangle \right] \right| \nonumber\\
& \leq \frac{1}{|\mathcal{U}_k|}\sum_{{\pmb\theta_i}\in \mathcal{U}_k}
 \left(C_1(k-i+t)\eta_{i-t}+C_2 t^2\eta_{i-t} +2M^2\varphi(nt)\right). \nonumber
\end{align}
\end{proof}

\begin{lemma}[Proof of $I_2$ in Lemma~\ref{lemma: main lemma for convergence analysis}]\label{lemma: total variance of LR policy gradient}
Suppose Assumptions \ref{assumption 2} and \ref{assumption 3} hold. Then,
\begin{align*}
 I_2 &\leq L\eta_k \Tr\left(\Var
 \left[\widehat{\nabla} J^{R}_{k} \right]\right) \\
      \Tr\left(\Var
 \left[\widehat{\nabla} J^{R}_{k} \right]\right) & \leq \frac{M^2}{|\mathcal{U}_k|^2}\sum_{\pmb\theta_i\in\mathcal{U}_k}\sum_{\pmb\theta_{i^\prime}\in\mathcal{U}_k}\max_{\ell=1,2,\ldots,d}\left(\Corr^{(\ell)}_{i,i^\prime,k}\right)w_{i,k} \cdot w_{i^\prime,k} 
\end{align*}
\end{lemma}
\begin{proof}
By applying Proposition~\ref{prop: gradient variance decomposition}, we have
\begin{equation*}
   \Tr\left(\Var
 \left[\widehat{\nabla} J^{R}_{k} \right]\right) \leq \frac{1}{|\mathcal{U}_k|^2}\sum_{\pmb\theta_i\in\mathcal{U}_k}\sum_{\pmb\theta_{i^\prime}\in\mathcal{U}_k} \max_{\ell=1,2,\ldots,d}\left(\Corr^{(\ell)}_{i,i^\prime,k}\right)\left(\pmb\sigma^{R}_{i,k}\right)^\top\left(\pmb\sigma^{R}_{i^\prime,k}\right).
\end{equation*}
By the definition of variance and L2 vector norm, it holds
\begin{align}
\left(\pmb\sigma^{R}_{i,k}\right)^\top\left(\pmb\sigma^{R}_{i^\prime,k}\right) &=\sum^d_{\ell=1} \sqrt{\Var\left[ \widehat{\nabla} J^{R,(\ell)}_{i,k}\right]} \sqrt{\Var\left[\widehat{\nabla} J^{R,(\ell)}_{i^\prime,k} \right]} \nonumber\\
&\leq\sum^d_{\ell=1} \sqrt{\left(\E\left[\left(\widehat{\nabla} J^{R,(\ell)}_{i,k} \right )^2\right]\right)}\sqrt{\left(\E\left[\left(\widehat{\nabla} J^{R,(\ell)}_{i^\prime,k} \right )^2\right]\right)}. \nonumber\\
&\leq  \sqrt{\E\left[\sum^d_{\ell=1}\left(\widehat{\nabla} J^{R,(\ell)}_{i,k} \right )^2\right]} \sqrt{\E\left[\sum^d_{\ell=1}\left(\widehat{\nabla} J^{R,(\ell)}_{i^\prime,k} \right )^2\right]} \label{eq: expected cross variance bound 1} \\
&= \sqrt{\E\left[\left\Vert\widehat{\nabla} J^{R}_{i,k} \right \Vert^2 \right]} \sqrt{\E\left[\left\Vert\widehat{\nabla} J^{R}_{i^\prime,k} \right \Vert^2\right]} \label{eq: expected cross variance bound 2}
\end{align}
where Step~\eqref{eq: expected cross variance bound 1} holds due to Cauchy-Schwarz Inequality. 
By applying \eqref{eq: expected cross variance bound 2}, the total variance becomes
\begin{equation*}
\Tr\left(\Var
 \left[\widehat{\nabla} J^{R}_{k} \right]\right) \leq \frac{1}{|\mathcal{U}_k|^2}\sum_{\pmb\theta_i\in\mathcal{U}_k}\sum_{\pmb\theta_{i^\prime}\in\mathcal{U}_k} \max_{\ell=1,2,\ldots,d}\left(\Corr^{(\ell)}_{i,i^\prime,k}\right)\sqrt{\E\left[\left\Vert\widehat{\nabla} J^{R}_{i,k} \right \Vert^2 \right]} \sqrt{\E\left[\left\Vert\widehat{\nabla} J^{R}_{i^\prime,k} \right \Vert^2\right]}.
\end{equation*}
Let $w_{i,k}=\sqrt{\frac{1}{n}\sum^n_{j=1}\E\left[f_{i,k}\left(\pmb{s}^{(i,j)},\pmb{a}^{(i,j)}\right)^2 \right]}$. Note that for term $\E\left[\left\Vert\widehat{\nabla} J^{R}_{i,k}\right\Vert^2\right]$, it holds 
\begin{align}
    \E\left[\left\Vert\widehat{\nabla} J^{R}_{i,k}\right\Vert^2\right]&\leq\frac{1}{n}\sum^n_{j=1}\E\left[f_{i,k}\left(\pmb{s}^{(i,j)},\pmb{a}^{(i,j)}\right)^2\left\Vert g(\pmb{s}^{(i,j)},\pmb{a}^{(i,j)}|\pmb\theta_k)\right\Vert^2\right] \nonumber\\
    &\leq\frac{M^2}{n}\sum^n_{j=1}\E\left[f_{i,k}\left(\pmb{s}^{(i,j)},\pmb{a}^{(i,j)}\right)^2 \right]\nonumber\\
    &=M^2 w_{i,k}^2, \label{eq: expected norm of LR policy gradients}
\end{align}
where the last step holds due to the conditional independence of $(\pmb{s}^{(i,j)},\pmb{a}^{(i,j)})$, i.e., $$\frac{1}{n}\sum_{j=1}^n\E\left[f_{i,k}\left(\pmb{s}^{(i,j)},\pmb{a}^{(i,j)}\right)^2\right] = \E\left[ \E\left[\left. \frac{1}{n}\sum_{j=1}^n f_{i,k}\left(\pmb{s}^{(i,j)},\pmb{a}^{(i,j)}\right)^2 \right| \pmb\theta_i\right]\right]=w_{i,k}^2.$$
Then we have
\begin{align}
  \Tr\left(\Var
 \left[\widehat{\nabla} J^{R}_{k} \right]\right) &\leq \frac{M^2}{|\mathcal{U}_k|^2}\sum_{\pmb\theta_i\in\mathcal{U}_k}\sum_{\pmb\theta_{i^\prime}\in\mathcal{U}_k}\max_{\ell=1,2,\ldots,d}\left(\Corr^{(\ell)}_{i,i^\prime,k}\right)w_{i,k} \cdot w_{i^\prime,k}
\end{align}
\end{proof}


\begin{lemma}[Proof of $I_3$ in Lemma~\ref{lemma: main lemma for convergence analysis}]\label{I3 in lemma}
Suppose Assumptions \ref{assumption 2} and \ref{assumption 3} hold. Then,
\begin{equation*}
    I_3=2L\eta_k \Vert\Bias_k \Vert^2.
\end{equation*}
\end{lemma}
\begin{proof}
By applying Theorem~\ref{theorem: bounded bias for LR graident}, we have

\begin{equation}\label{eq-1: I3 in lemma 9}
I_3= 2L\eta_k\left\Vert\E\left[\widehat{\nabla} J^{R}_{k}\right]-\E\left[\nabla J(\pmb\theta_k)\right]\right\Vert^2 =2L\eta_k \Vert\Bias_k \Vert^2,
\end{equation}
where the bias is a convergent sequence given by Theorem~\ref{theorem: bounded bias for LR graident}, 
i.e.,
\begin{align*}
\Vert\Bias_k \Vert &\leq\frac{Z_1^{R} (t+1)}{|\mathcal{U}_k|}
 \sum_{{\pmb\theta_i}\in \mathcal{U}_k}\sum^i_{\ell = i-t}\eta_\ell + 2M\varphi(nt) + \frac{Z_2^{R}}{|\mathcal{U}_k|}
 \sum_{{\pmb\theta_i}\in \mathcal{U}_k} \sum^k_{\ell=i-t}\eta_\ell.
\end{align*}
\end{proof}

\begin{lemma}\label{lemma: main lemma for convergence analysis}
Suppose Assumptions \ref{assumption 2} and \ref{assumption 3} hold. Let $\eta_k=\eta_1 k^{-r}$ denote the learning rate used in the $k$-th iteration with two constants $\eta_1 \in(0, \frac{1}{4L}]$ and $r\in (0,1)$, where $L$ is defined in Lemma~\ref{lemma: Lipschitz continuity}.Let $w_{i,k}=\sqrt{\E\left[f_{i,k}\left(\pmb{s}^{(i,j)},\pmb{a}^{(i,j)}\right)^2 \right]}$. By running Algorithm~\ref{algo: online}, for both LR and CLR policy gradient estimators, we have 
\begin{align*}
\E\left[\Vert\nabla  J(\pmb\theta_k)\Vert^2\right] &\leq \frac{2}{\eta_k}\left(\E\left[ J(\pmb\theta_{k+1})\right]-\E\left[ J(\pmb\theta_k)\right]\right) + 2(I_1 + I_2 + I_3)\nonumber\\
I_1 & \leq\frac{1}{|\mathcal{U}_k|}\sum_{{\pmb\theta_i}\in \mathcal{U}_k}
 \left(C_1(k-i+t)\eta_{i-t}+C_2 t^2\eta_{i-t} +2M^2\varphi(nt)\right)\nonumber\\
I_2 & \leq\frac{LM^2\eta_k}{|\mathcal{U}_k|^2}\sum_{\pmb\theta_i\in \mathcal{U}_k}\sum_{\pmb\theta_{i^\prime}\in\mathcal{U}_k}\max_{\ell=1,2,\ldots,d}\left(\Corr^{(\ell)}_{i,i^\prime,k}\right) w_{i,k} w_{i^\prime,k} \nonumber\\
  I_3 &=2L\eta_k\Vert \Bias_k\Vert^2
\end{align*}
where $C_1=\max\{C^\Gamma_1, C^\Gamma_2\}$, $C_2={2nM^3U_\pi U_f}$, and $\Bias_k=\E\left[\widehat{\nabla} J^{R}_{k}\right]-\E\left[\nabla J(\pmb\theta_k)\right]$ with $R\in\{LR,CLR\}$. Here $C_1^\Gamma$ and $C_2^\Gamma$ are defined in Lemma~\ref{appendix lemma: supportive lemma 6} and $Z^{LR}_2$ is defined in Theorem~\ref{theorem: bounded bias for LR graident}.
\end{lemma}
\begin{proof}
Lemma~\ref{lemma: Lipschitz continuity} implies the L-Lipschitz property of policy gradient, which by the definition of smoothness (\cite{nesterov2003introductory}, Lemma 1.2.3) is also equivalent to
 $$ J(\pmb\theta_k) -  J(\pmb\theta_{k+1})\leq \langle \nabla  J(\pmb\theta_k),\pmb\theta_k - \pmb\theta_{k+1}\rangle + L\Vert \pmb\theta_{k+1}-\pmb\theta_k\Vert^2.$$
 
Let $\widehat{\nabla} J^{R}_{k}$ represents either LR policy gradient estimator $\widehat{\nabla} J^{LR}_{k}$ or CLR policy gradient estimator $\widehat{\nabla} J^{CLR}_{k}$, (i.e. $R\in\{LR,CLR\}$). By applying the policy parameter update implemented in the proposed algorithm $\pmb{\theta}_{k+1} = \pmb{\theta}_k+ \eta_k\widehat{\nabla} J^{R}_{k}$,  we have
\begin{align} 
& J(\pmb \theta_{k}) -  J(\pmb{\theta}_{k+1}) \leq -\left\langle \nabla  J(\pmb\theta_k),\eta_k \widehat{\nabla} J^{R}_k(\pmb\theta_k)\right\rangle + L\Vert \pmb\theta_{k+1}-\pmb\theta_k\Vert^2 \nonumber\\
&= \eta_k \left\langle \nabla  J(\pmb\theta_k),\nabla  J(\pmb\theta_k)-\widehat{\nabla} J^{R}_k(\pmb\theta_k)\right\rangle -\eta_k\Vert\nabla  J(\pmb\theta_k)\Vert^2+ L\Vert \pmb\theta_{k+1}-\pmb\theta_k\Vert^2. \label{eq: theorem equation 0}
\end{align}
Then by taking the expectation of both sides of \eqref{eq: theorem equation 0}, we have
\begin{align}
 &\E\left[ J(\pmb\theta_k)\right]-\E\left[ J(\pmb\theta_{k+1})\right] \nonumber\\ &\leq \eta_k\left|\E\left[\left\langle \nabla J(\pmb\theta_k), \widehat{\nabla} J^{R}_k(\pmb\theta_k)-\nabla J(\pmb\theta_k)\right\rangle \right] \right| -{\eta_k}\E\left[\Vert\nabla  J(\pmb\theta_k)\Vert^2\right]  + L\E[\Vert\pmb\theta_{k+1}-\pmb\theta_k\Vert^2] \nonumber\\
 &\leq \eta_k\left|\E\left[\left\langle \nabla J(\pmb\theta_k), \widehat{\nabla} J^{R}_k(\pmb\theta_k)-\nabla J(\pmb\theta_k)\right\rangle \right] \right| -{\eta_k}\E\left[\Vert\nabla  J(\pmb\theta_k)\Vert^2\right] + L\eta_k^2\E\left[\left\Vert \widehat{\nabla} J^{R}_{k} \right\Vert^2\right].
\end{align}
By applying Lemma~\ref{lemma: variance bound} and rearranging both sides, we have
\begin{align}
(1-2L\eta_k)\E\left[\left\Vert\nabla  J(\pmb\theta_k)\right\Vert^2\right]  &\leq  \frac{1}{\eta_k}\left(\E\left[ J(\pmb\theta_{k+1})\right]-\E\left[ J(\pmb\theta_k)\right]\right) \nonumber \\
&\quad +\underbrace{\left|\E\left[\left\langle \nabla J(\pmb\theta_k), \widehat{\nabla} J^{R}_{k}-\nabla J(\pmb\theta_k)\right\rangle \right] \right|}_{I_1} \nonumber \\
 &\quad+\underbrace{L\eta_k \Tr\left(\Var
 \left[\widehat{\nabla} J^{R}_{k} \right]\right)}_{I_2} \nonumber\\
&\quad +\underbrace{2L\eta_k\left\Vert\E\left[\widehat{\nabla} J^{R}_{k}\right]-\nabla J(\pmb\theta_k)\right\Vert^2}_{I_3}. \label{eq: expected squared norm bound}
\end{align}


Consider $\eta_k$ small enough that $1-2L\eta_k \geq \frac{1}{2}$ or equivalently $\eta_k \leq \frac{1}{4L}$. As the learning rate is non-increasing, this condition can be simplified as the initial learning rate $\eta_1 \leq\frac{1}{4L}$. Then it proceeds with
\begin{equation}
\left(1-2L\eta_k\right)\E\left[\Vert\nabla  J(\pmb\theta_k)\Vert^2\right] \geq\frac{1}{2}\E\left[\Vert\nabla  J(\pmb\theta_k)\Vert^2\right].
 \nonumber 
\end{equation}
Thus, the bound \eqref{eq: expected squared norm bound} becomes
\begin{align}
\frac{1}{2}\E\left[\Vert\nabla  J(\pmb\theta_k)\Vert^2\right] &\leq \frac{1}{\eta_k}\left(\E\left[ J(\pmb\theta_{k+1})\right]-\E\left[ J(\pmb\theta_k)\right]\right) + I_1 + I_2 + I_3
\label{eq.mid7}
\end{align}
which completes the proof.
\end{proof}

\clearpage
\noindent\textbf{Theorem \ref{convergence theorem}}
\textit{
Suppose Assumptions \ref{assumption 2} and \ref{assumption 3} hold. Let $\eta_k=\eta_1 k^{-r}$ denote the learning rate used in the $k$-th iteration with two constants $\eta_1\in(0, \frac{1}{4L}]$ and $r\in (0,1)$. Define the \textbf{L2 importance-weight norm} as \begin{equation} w_{i,k}=\sqrt{\E\left[f_{i,k}\left(\pmb{s}^{(i,j)},\pmb{a}^{(i,j)}\right)^2 \right]} \text{ with } f_{i,k}(\pmb{s},\pmb{a})=\frac{\pi_{\pmb\theta_k}(\pmb{a}|\pmb{s})}{\pi_{\pmb\theta_i}(\pmb{a}|\pmb{s})}. \end{equation} By running Algorithm ~\ref{algo: online} (with the replay buffer of size $B_K$), for both LR/CLR policy gradient estimators in Eq.~\eqref{eq.LR-gradient} and \eqref{eq.CLR-gradient} and any $t\leq K-B_K$, we have the rate of convergence ($R\in\{LR,CLR\}$) 
\begin{align*} 
\frac{1}{K}\sum^K_{k=1}\E\left[\Vert\nabla  J(\pmb\theta_k)\Vert^2\right]  & \leq   \frac{8U_ J /\eta_1}{K^{1-r}} + \frac{4cLM^2}{K}\sum_{k=1}^K{\eta_k} \bar{\rho}_k+     {4M^2}\varphi(nt) + \frac{2^{r+1}C_3}{(1-r)K^r}  \nonumber\\     &\quad  + \frac{2^{r+1}C_2\eta_1 t^2}{(1-r)K^r} + \frac{2^{r+1}C_1\eta_1}{1-r}\frac{B_K+t}{K^r} + M^2\frac{B_K+t}{K}  \end{align*} 
where $C_1=\max\{C^\Gamma_1, C^\Gamma_2\}$, $C_2={2nM^3U_\pi  U_f}$, $C_3=\sup_{k\geq 1}\Vert \Bias_k\Vert$ with $\Bias_k=\E\left[\widehat{\nabla} J^{R}_{k}\right]-\E\left[\nabla J(\pmb\theta_k)\right]$ and $\bar{\rho}_{k}=\frac{1}{|\mathcal{U}_k|^2}\sum_{\pmb\theta_i\in\mathcal{U}_k}\sum_{\pmb\theta_{i^\prime}\in\mathcal{U}_k}\left|\max_{\ell=1,2,\ldots,d}\left(\Corr^{(\ell)}_{i,i^\prime,k}\right)\right|w_{i,k}w_{i^\prime,k}$. Here $C_1^\Gamma$ and $C_2^\Gamma$ are defined in Lemma~\ref{appendix lemma: supportive lemma 6}. Using $\mathcal{O}(\cdot)$ notation gives  \begin{equation*} \small
\frac{1}{K}\sum^K_{k=1}\E\left[\Vert\nabla  J(\pmb\theta_k)\Vert^2\right]      \leq \mathcal{O}\left(\frac{1}{K^{1-r}}\right) + \mathcal{O}\left(\frac{\sum^K_{k=1}\eta_k \bar{\rho}_k}{K}\right) +     \mathcal{O}\left(\varphi(nt)\right) + \mathcal{O}\left(\frac{t^2}{K^r}\right) + \mathcal{O}\left(\frac{B_K+t}{K^r}\right)    \end{equation*} \noindent where $n$ is the number of steps in each iteration. The notation $\mathcal{O}(\cdot)$ hides constants $c$, $L$, $\eta_1$, $M$, $U_ J$, $n$, $U_f$, $U_\pi$, $\kappa_0$, $\kappa$ and $L_g$. The Lipchitz constant $L_g$ of the sample gradient is defined in Lemma~\ref{lemma: lipchitz continuity of sample gradient estimate}.
}
\begin{proof} For the dynamic buffer with size $B_k>0$, it holds that $k-i\leq B_k$ for any $\pmb\theta_i\in\mathcal{U}_k$. Here the buffer size is an increasing function of $k$.
For the first term on the right-hand side of Lemma~\ref{lemma: main lemma for convergence analysis}, we have
\begin{align}
&\sum^K_{k=B_K+t}\frac{1}{\eta_k}\left(\E\left[ J(\pmb\theta_{k+1})\right]-\E\left[ J(\pmb\theta_k)\right]\right) \nonumber\\
&= -\frac{1}{\eta_{B_K+t}}\E\left[ J(\pmb\theta_{B_K+t})\right] +\frac{1}{\eta_{K}}\E\left[ J(\pmb\theta_{K+1})\right]+ \sum_{k=B_K+t+1}^K\left(\frac{1}{\eta_{k-1}}-\frac{1}{\eta_k}\right)\E\left[ J(\pmb\theta_k)\right]
\nonumber \\
 &\leq \frac{1}{\eta_{B_K+t}} U_ J+\frac{1}{\eta_{K}}U_ J + \sum_{k=B_K+t+1}^K\left(\frac{1}{\eta_{k}}-\frac{1}{\eta_{k-1}}\right)U_ J \nonumber\\
 &\leq \frac{2}{\eta_{K}}U_ J. 
\end{align}
Then, by applying Lemma~\ref{lemma: main lemma for convergence analysis} and noticing $| J(\pmb\theta_k)|\leq U_ J$ and summing over $k = B_K+t,B_K+t+1,\ldots, K$, we have
\begin{align}
&\frac{1}{2}\sum_{k=B_K+t}^K\E\left[\Vert\nabla  J(\pmb\theta_k)\Vert^2\right] \leq \frac{{2}}{\eta_{K}}U_ J \nonumber\\
&\quad +\underbrace{\sum^K_{k=B_K+t}\frac{1}{|\mathcal{U}_k|}\sum_{{\pmb\theta_i}\in \mathcal{U}_k}
 \left(C_1(k-i+t)\eta_{i-t}+C_2 t^2\eta_{i-t} +2M^2\varphi(nt)\right)}_{I_1}\nonumber\\
&\quad +\underbrace{LM^2\sum_{k=B_K+t}^K\frac{\eta_k}{|\mathcal{U}_k|^2}\sum_{\pmb\theta_i\in\mathcal{U}_k}\sum_{\pmb\theta_{i^\prime}\in\mathcal{U}_k}\max_{\ell=1,2,\ldots,d}\left(\Corr^{(\ell)}_{i,i^\prime,k}\right)w_{i,k}w_{i^\prime,k}}_{I_2} +\underbrace{2L\sum^K_{k=B_K+t}\eta_k\Vert\Bias_k\Vert}_{I_3}.
\end{align} 

\begin{lemma}\label{I1 in theorem}
Suppose Assumptions \ref{assumption 2} and \ref{assumption 3} hold. Then, 
\begin{align}
    I_1&\leq\frac{C_1\eta_1(B_K+t)}{1-r}(K-B_K-t)^{1-r}+\frac{C_2\eta_1 t^2}{1-r}(K-B_K-t)^{1-r} \nonumber\\
 & \quad +{2M^2}\varphi(nt)(K-B_K-t). \nonumber
\end{align}
\end{lemma}
 \begin{proof}
 Noticing that (1) $k-i\leq B_k\leq B_K$ for any $K\geq k\geq i$, and (2) $\eta_{i-t}=\frac{\eta_1}{(i-t)^r}\leq \frac{\eta_1}{(k-B_K-t)^r}$. Thus, it holds that
\begin{align}
I_1&=\sum^K_{k=B_K+t}\frac{1}{|\mathcal{U}_k|}\sum_{{\pmb\theta_i}\in \mathcal{U}_k}
 \left(\frac{C_1\eta_1(B_K+t)}{(k-B_K-t)^r}+\frac{C_2\eta_1 t^2}{(k-B_K-t)^r} +2M^2\varphi(nt)\right) \nonumber\\
 &\leq \sum^K_{k=B_K+t}
 \left(\frac{C_1\eta_1(B_K+t)}{(k-B_K-t)^r}+\frac{C_2\eta_1 t^2}{(k-B_K-t)^r} +2M^2\varphi(nt)\right) \nonumber\\
 &\leq\frac{C_1\eta_1(B_K+t)}{1-r}(K-B_K-t)^{1-r}+\frac{C_2\eta_1 t^2}{1-r}(K-B_K-t)^{1-r} \nonumber\\
 & \quad +{2M^2}\varphi(nt)(K-B_K-t)\nonumber
\end{align}
where the last step holds due to the fact that $$\sum_{k=B_K+t}^K\frac{1}{(k-B_K-t)^r}\leq 
\int^{K-B_K-t}_0 \frac{\eta_1}{k^r}\dd k =\frac{\eta_1}{1-r}(K-B_K-t)^{1-r}.$$
\end{proof}

\begin{lemma}\label{I2 in theorem}
 Suppose Assumptions \ref{assumption 2} and \ref{assumption 3} hold. Then,
\begin{equation*}
    I_2\leq LM^2\sum_{k=B_K+t}^K{\eta_k} \bar{\rho}_k,
\end{equation*}
where $\bar{\rho}_{k}=\frac{1}{|\mathcal{U}_k|^2}\sum_{\pmb\theta_i\in\mathcal{U}_k}\sum_{\pmb\theta_{i^\prime}\in\mathcal{U}_k}\left|\max_{\ell=1,2,\ldots,d}\left(\Corr^{(\ell)}_{i,i^\prime,k}\right)\right|w_{i,k}w_{i^\prime,k}$.
\end{lemma}

\begin{proof}
Notice that $\max_{\ell=1,2,\ldots,d}\left(\Corr^{(\ell)}_{i,i^\prime,k}\right) \leq 1$. Then we have
\begin{equation}\label{eq0: theorem 4}
    I_2=LM^2\sum_{k=B_K+t}^K\frac{\eta_k}{|\mathcal{U}_k|^2}\sum_{\pmb\theta_i\in\mathcal{U}_k}\sum_{\pmb\theta_{i^\prime}\in\mathcal{U}_k}\max_{\ell=1,2,\ldots,d}\left(\Corr^{(\ell)}_{i,i^\prime,k}\right)w_{i,k}w_{i^\prime,k}\leq LM^2\sum_{k=B_K+t}^K{\eta_k} \bar{\rho}_k.
\end{equation}
\end{proof}

\begin{lemma}\label{I3 in theorem}
   Suppose Assumptions \ref{assumption 2} and \ref{assumption 3} hold. Then,  
   \begin{equation*}
           I_3 \leq {C_3}(K-B_K-t)^{1-r},
   \end{equation*}
   where $C_3=\sup_{k\geq 1}\Vert \Bias_k\Vert$.
\end{lemma}
\begin{proof}
Notice the bias term $\Vert \Bias_k\Vert$ is upper bounded by a convergent sequence based on Theorem~\ref{theorem: bounded bias for LR graident} and Eq.~\eqref{eq: big O of bias with buffer size}. For $R\in\{LR, CLR\}$
\begin{align}
\Vert\Bias_k \Vert &\leq\frac{Z_1^{R} (t+1)}{|\mathcal{U}_k|}
 \sum_{{\pmb\theta_i}\in \mathcal{U}_k}\sum^i_{\ell = i-t}\eta_\ell + 2M\varphi(nt) + \frac{Z_2^{R}}{|\mathcal{U}_k|}
 \sum_{{\pmb\theta_i}\in \mathcal{U}_k} \sum^k_{\ell=i-t}\eta_\ell \nonumber\\
 & \leq Z_1^{R}\eta_1\frac{(t+1)t}{(k-B_k-t)^r}+Z_2^{R}\eta_1\frac{B_k+t}{(k-B_k-t)^r}+ 2M\varphi(nt) \xrightarrow{k\rightarrow\infty} 0\label{eq: bias bound theorem 4}
\end{align}
where the step \eqref{eq: bias bound theorem 4} holds because $\sum^i_{\ell=i-t}\eta_\ell \leq \eta_1 t (i-t)^{-r}\leq \eta_1 t (k-B_k-t)^{-r}$ and $\sum^k_{\ell=i-t}\eta_\ell \leq \eta_1 (k-i+t) (i-t)^{-r}\leq \eta_1 (B_k+t)(k-B_k-t)^{-r}$. The convergence of a sequence implies that it is bounded. Thus,  we can bound the norm of bias as $\Vert \Bias_k\Vert\leq \sup_{k\geq 1} \Vert\Bias_k\Vert$.
By applying $\sum^K_{k=B_K+t}\eta_k = \sum^K_{k=B_K+t}\frac{\eta_1}{k^r} \leq \int^{K-B_K-t}_{0}\frac{\eta_1}{k^r}\dd k \leq\frac{\eta_1}{1-r}(K-B_K-t)^{1-r}$ and the bias bound, we can show
\begin{align*}
    I_3 \leq {C_3}(K-B_K-t)^{1-r}.
\end{align*}
\end{proof}
Putting Lemmas~\ref{I1 in theorem},~\ref{I2 in theorem},~\ref{I3 in theorem} together, dividing both sides by $\frac{1}{2}(K-B_K-t+1)$ gives the result
\begin{align}
  & \frac{\sum_{k=B_K+t}^K\E\left[\Vert\nabla  J(\pmb\theta_k)\Vert^2\right]}{K-B_K-t+1} 
   \leq \frac{4\eta_1^{-1} U_ J K^{r}}{K-B_K-t+1} + \frac{2cLM^2}{K-B_k-t+1}\sum_{k=B_K+t}^K{\eta_k} \bar{\rho}_k \nonumber\\
&\quad + \frac{2C_3\eta_1}{1-r}(K-B_K-t)^{-r}+ {4M^2}\varphi(nt) +\frac{2C_2\eta_1 t^2}{1-r}(K-B_K-t)^{-r} \nonumber\\
& \quad + \frac{2C_1\eta_1}{1-r}(B_K+t)(K-B_K-t)^{-r}. \label{eq1: theorem 4}
\end{align}
For large enough $K$ such that $K \geq 2t+2B_K$, we have 
\begin{align}
    \frac{4 U_ J K^r }{ K-B_K-t+1 }& \leq \frac{4U_ J K^r }{ K-B_K-t } \leq \frac{8U_ J}{K^{1-r}} \nonumber\\
    (K-B_K-t)^{-r}&\leq \frac{2^r}{K^r}. \nonumber
\end{align}
Therefore, Eq.~\eqref{eq1: theorem 4} becomes
\begin{align}
   &\frac{\sum_{k=B_K+t}^K\E\left[\Vert\nabla  J(\pmb\theta_k)\Vert^2\right]}{K-B_K-t+1} \leq \frac{8U_J/\eta_1}{K^{1-r}} + \frac{2cLM^2}{K-B_k-t+1}\sum_{k=B_K+t}^K{\eta_k} \bar{\rho}_k\nonumber\\
&\quad + \frac{2^{r+1}C_3\eta_1}{1-r}K^{-r}+ {4 M^2}\varphi(nt)+\frac{2^{r+1}C_2\eta_1 t^2}{1-r}K^{-r}    + \frac{2^{r+1}C_1\eta_1}{1-r}(B_K+t)K^{-r}. \nonumber
\end{align}
Notice that the learning rate is positive $\eta_k> 0$. Since $\bar{\rho}_k>0$, when $K\geq 2t+2B_K$, we have $$\frac{2cLM^2}{K-B_k-t+1}\sum_{k=B_K+t}^K{\eta_k} \bar{\rho}_k\leq \frac{4cLM^2}{K}\sum_{k=1}^K{\eta_k} \bar{\rho}_k.$$ 
By applying Lemma~\ref{supportive lemma: mean sequence Big O} with $f(K)=B_K+t$ and $U_a=M^2$,  we have the first conclusion
\begin{align}
   \frac{1}{K}\sum_{k=1}^K\E\left[\Vert\nabla  J(\pmb\theta_k)\Vert^2\right] &\leq \frac{\sum_{k=B_K+t}^K\E\left[\Vert\nabla  J(\pmb\theta_k)\Vert^2\right]}{K-B_K-t+1} + M^2 \frac{B_K+t}{K} \nonumber\\
   &\leq \frac{8U_ J /\eta_1}{K^{1-r}} + \frac{4cLM^2}{K}\sum_{k=1}^K{\eta_k} \bar{\rho}_k+
    {4M^2}\varphi(nt) + \frac{2^{r+1}C_3\eta_1}{(1-r)K^r}  \nonumber\\
    &\quad  + \frac{2^{r+1}C_2\eta_1 t^2}{(1-r)K^r} + \frac{2^{r+1}C_1\eta_1}{1-r}\frac{B_K+t}{K^r} + M^2\frac{B_K+t}{K}. \label{eq2: theorem 4}
\end{align}
Then writing Eq.~\eqref{eq2: theorem 4} using $\mathcal{O}(\cdot)$ notation gives the second conclusion
\begin{equation*}
\frac{1}{K}\sum_{k=1}^K\E\left[\Vert\nabla  J(\pmb\theta_k)\Vert^2\right] \leq \mathcal{O}\left(\frac{1}{K^{1-r}}\right) +\mathcal{O}\left(\frac{\sum^K_{k=1}\eta_k \bar{\rho}_k}{K}\right)+ \mathcal{O}\left(\varphi(nt)\right)+ \mathcal{O}\left(\frac{t^2}{K^r}\right) + \mathcal{O}\left(\frac{B_K+t}{K^r}\right) \\
\end{equation*}
which completes the proof.
\end{proof}

\subsection{Proof of Corollary~\ref{cor: convergence rate}}
\label{appendix subsec: corollary convergence}

\noindent\textbf{Corollary \ref{cor: convergence rate}}
\textit{
Suppose Assumptions \ref{assumption 2} and \ref{assumption 3} hold. Under the same configurations as Theorem~\ref{convergence theorem}, by setting $t=\log_{\kappa}K^{-r/n}$, we have the rate of convergence
 \begin{equation*}
    \frac{1}{K}\sum^K_{k=1}\E\left[\Vert\nabla  J(\pmb\theta_k)\Vert^2\right] 
    \leq \mathcal{O}\left(\frac{1}{K^{1-r}}\right) + \mathcal{O}\left(\frac{1}{K^r}\right) + \mathcal{O}\left(\frac{t^2}{K^r}\right) + \mathcal{O}\left(\frac{B_K+t}{K^r}\right)  
 \end{equation*}
\noindent where the notation
$\mathcal{O}(\cdot)$ hides constants $c$, $L$, $\eta_1$, $M$, $U_ J$, $n$, $U_f$, $U_\pi$, $\kappa_0$, $\kappa$ and $L_g$.
}
\begin{proof} For the dynamic buffer with size $B_K>0$, it holds that $K-i\leq B_K$ for any $\pmb\theta_i\in\mathcal{U}_K$. Here the buffer size is an increasing function of $K$. By setting $t=\log_{\kappa}K^{-r/n}$, it holds that 
$$\varphi(nt)=\kappa_0 \kappa^{n\log_{\kappa} \left(\frac{1}{K^r}\right)^{1/n}}=\kappa_0 \kappa^{\log_{\kappa}\left(\frac{1}{K^r}\right)}=\kappa_0K^{-r}.$$
The conclusion is obtained by applying Theorem~\ref{convergence theorem}.
\end{proof}

\section{Proof of Technical Lemmas}\label{appendix sec: proof of technical lemmas}

\begin{lemma}\label{appendix lemma: supportive lemma 1}
   Under Assumption~\ref{assumption 2}-\ref{assumption 3}, for any $t$ such that $t <i\leq k$ we have
   \begin{itemize}
     \item[\textbf{(i)}]  $\left\Vert\E\left[\widehat{\nabla} J^{LR}\left({\pmb{x}}^{(i,j)}, \pmb\theta_i,\pmb\theta_{k}\right)-\widehat{\nabla} J^{LR}\left({\pmb{x}}^{(i,j)}, \pmb\theta_i,\pmb\theta_{i-t}\right) \right] \right\Vert \leq 
     M(L_g + MU_\pi)  \sum^k_{\ell=i-t}\eta_\ell $
    \item[\textbf{(ii)}]    $\E\left[\left\Vert\widehat{\nabla} J^{LR}\left({\pmb{x}}^{(i,j)}, \pmb\theta_i,\pmb\theta_{k}\right)-\widehat{\nabla} J^{LR}\left({\pmb{x}}^{(i,j)}, \pmb\theta_i,\pmb\theta_{i-t}\right) \right\Vert\right] \leq 
    M(L_g + MU_\pi)  \sum^k_{\ell=i-t}\eta_\ell  $
\item[\textbf{(iii)}] $\E\left[\left\Vert\widehat{\nabla} J^{CLR}\left({\pmb{x}}^{(i,j)}, \pmb\theta_i,\pmb\theta_{k}\right)-\widehat{\nabla} J^{CLR}\left({\pmb{x}}^{(i,j)}, \pmb\theta_i,\pmb\theta_{i-t}\right)\right\Vert\right] \leq M(U_f L_g+M U_\pi) \sum^k_{\ell=i-t}\eta_\ell$
   \end{itemize}
\end{lemma}
\begin{proof}
\noindent\textbf{(i)} and \textbf{(ii)}: Let 
\begin{align*}
    \Delta_{\textbf{(i)}}&\defeq\left\Vert\E\left[\widehat{\nabla} J^{LR}\left({\pmb{x}}^{(i,j)}, \pmb\theta_i,\pmb\theta_{k}\right)-\widehat{\nabla} J^{LR}\left({\pmb{x}}^{(i,j)}, \pmb\theta_i,\pmb\theta_{i-t}\right)  \right] \right\Vert\\
\Delta_{\textbf{(ii)}}&\defeq\E\left[\left\Vert\widehat{\nabla} J^{LR}\left({\pmb{x}}^{(i,j)}, \pmb\theta_i,\pmb\theta_{k}\right)-\widehat{\nabla} J^{LR}\left({\pmb{x}}^{(i,j)}, \pmb\theta_i,\pmb\theta_{i-t}\right)\right\Vert\right]
\end{align*}
It follows that
\begin{align}
\Delta_{\textbf{(ii)}}&= \E\left[ \left\Vert\frac{\pi_{\pmb\theta_k}\left(\pmb{a}^{(i,j)}|\pmb{s}^{(i,j)}\right)}{\pi_{\pmb\theta_i}\left(\pmb{a}^{(i,j)}|\pmb{s}^{(i,j)}\right)} g\left(\pmb{x}^{(i,j)}|\pmb\theta_k\right)-\frac{\pi_{\pmb\theta_{i-t}}\left(\pmb{a}^{(i,j)}|\pmb{s}^{(i,j)}\right)}{\pi_{\pmb\theta_i}\left(\pmb{a}^{(i,j)}|\pmb{s}^{(i,j)}\right)} g\left(\pmb{x}^{(i,j)}|\pmb\theta_{i-t}\right)\right\Vert \right] \nonumber\\
&= 
\E\left[\left\Vert\frac{\pi_{\pmb\theta_{i-t}}\left(\pmb{a}^{(i,j)}|\pmb{s}^{(i,j)}\right)}{\pi_{\pmb\theta_i}\left(\pmb{a}^{(i,j)}|\pmb{s}^{(i,j)}\right)} \left[g\left(\pmb{x}^{(i,j)}|\pmb\theta_{i-t}\right)-g\left(\pmb{x}^{(i,j)}|\pmb\theta_{k}\right)\right] \right.\right.\nonumber\\
&\qquad \qquad +\left.\left.\frac{\pi_{\pmb\theta_{i-t}}\left(\pmb{a}^{(i,j)}|\pmb{s}^{(i,j)}\right)-\pi_{\pmb\theta_{k}}\left(\pmb{a}^{(i,j)}|\pmb{s}^{(i,j)}\right)}{\pi_{\pmb\theta_i}\left(\pmb{a}^{(i,j)}|\pmb{s}^{(i,j)}\right)} g\left(\pmb{x}^{(i,j)}|\pmb\theta_{k}\right) \right\Vert \right] \nonumber\\
&= \E\left[\frac{\pi_{\pmb\theta_{i-t}}\left(\pmb{a}^{(i,j)}|\pmb{s}^{(i,j)}\right)}{\pi_{\pmb\theta_i}\left(\pmb{a}^{(i,j)}|\pmb{s}^{(i,j)}\right)} \left\Vert g\left(\pmb{x}^{(i,j)}|\pmb\theta_k\right)-g\left(\pmb{x}^{(i,j)}|\pmb\theta_{i-t}\right)\right\Vert \right] \nonumber\\
&\quad +\E\left[\frac{\left|\pi_{\pmb\theta_{k}}\left(\pmb{a}^{(i,j)}|\pmb{s}^{(i,j)}\right)-\pi_{\pmb\theta_{i-t}}\left(\pmb{a}^{(i,j)}|\pmb{s}^{(i,j)}\right)\right|}{\pi_{\pmb\theta_i}\left(\pmb{a}^{(i,j)}|\pmb{s}^{(i,j)}\right)} \left\Vert g\left(\pmb{x}^{(i,j)}|\pmb\theta_{k}\right) \right\Vert\right] \nonumber\\
&\leq  \E\left[\E\left[\left. \left\Vert g\left(\pmb{x}|\pmb\theta_k\right)-g\left(\pmb{x}|\pmb\theta_{i-t}\right)\right\Vert \right| \pmb\theta_{i-t} \right]\right] \nonumber\\
& \quad +\E\left[\frac{\left|\pi_{\pmb\theta_{k}}\left(\pmb{a}^{(i,j)}|\pmb{s}^{(i,j)}\right)-\pi_{\pmb\theta_{i-t}}\left(\pmb{a}^{(i,j)}|\pmb{s}^{(i,j)}\right)\right|}{\pi_{\pmb\theta_i}\left(\pmb{a}^{(i,j)}|\pmb{s}^{(i,j)}\right)} \left\Vert g\left(\pmb{x}^{(i,j)}|\pmb\theta_{k}\right) \right\Vert\right] \label{eq0: appendix lemma: supportive lemma 1}.
\end{align}
By applying Lemma~\ref{lemma: lipchitz continuity of sample gradient estimate}, we have $\Vert g\left(\pmb{x}^{(i,j)}|\pmb\theta_k\right)-g\left(\pmb{x}^{(i,j)}|\pmb\theta_{i-t}\right)\Vert\leq L_g \Vert\pmb\theta_k-\pmb\theta_{i-t}\Vert$. By Lemma~\ref{lemma: bounded of policy gradient}, we have $\Vert g\left(\pmb{x}^{(i,j)}|\pmb\theta_{k}\right)\Vert \leq M$. Then the conclusion follows that
\begin{align}
\Delta_{\textbf{(i)}} \leq \Delta_{\textbf{(ii)}} &\leq 
\E\left[\E\left[\left. \left\Vert g\left(\pmb{x}|\pmb\theta_k\right)-g\left(\pmb{x}|\pmb\theta_{i-t}\right)\right\Vert \right| \pmb\theta_{i-t} \right]\right] \nonumber\\
&\quad +\E\left[\frac{|\pi_{\pmb\theta_{k}}\left(\pmb{a}^{(i,j)}|\pmb{s}^{(i,j)}\right)-\pi_{\pmb\theta_{i-t}}\left(\pmb{a}^{(i,j)}|\pmb{s}^{(i,j)}\right)|}{\pi_{\pmb\theta_i}\left(\pmb{a}^{(i,j)}|\pmb{s}^{(i,j)}\right)} \left\Vert g\left(\pmb{x}^{(i,j)}|\pmb\theta_{i-t}\right)\right\Vert  \right] \label{eq-0: appendix lemma: supportive lemma 1} \\
&\leq  L_g \E\left[\E\left[\Vert\pmb\theta_k-\pmb\theta_{i-t}\Vert| \pmb\theta_{i-t}\right] \right] + M  U_\pi \E[\Vert\pmb\theta_k-\pmb\theta_{i-t}\Vert]\label{eq-1: appendix lemma: supportive lemma 1}\\
&\leq  (L_g + M  U_\pi )\E[\Vert\pmb\theta_k-\pmb\theta_{i-t}\Vert]
\end{align}
where Step~\eqref{eq-0: appendix lemma: supportive lemma 1} follows \eqref{eq0: appendix lemma: supportive lemma 1}. The first term of Step~\eqref{eq-1: appendix lemma: supportive lemma 1} holds due to Lemma~\ref{lemma: lipchitz continuity of sample gradient estimate}. The second term of Step~\eqref{eq-1: appendix lemma: supportive lemma 1} holds because
\begin{align*}
&\E\left[\frac{|\pi_{\pmb\theta_{k}}\left(\pmb{a}^{(i,j)}|\pmb{s}^{(i,j)}\right)-\pi_{\pmb\theta_{i-t}}\left(\pmb{a}^{(i,j)}|\pmb{s}^{(i,j)}\right)|}{\pi_{\pmb\theta_i}\left(\pmb{a}^{(i,j)}|\pmb{s}^{(i,j)}\right)} \left\Vert g\left(\pmb{x}^{(i,j)}|\pmb\theta_{i-t}\right)\right\Vert \right]\\
&=M\E\left[\int_{\mathcal{A}}\left|\pi_{\pmb\theta_{k}}\left(\pmb{a}^{(i,j)}|\pmb{s}^{(i,j)}\right)-\pi_{\pmb\theta_{i-t}}\left(\pmb{a}^{(i,j)}|\pmb{s}^{(i,j)}\right)\right|\dd \pmb{a}^{(i,j)}\right]\\
&\leq M U_\pi \E\left[\Vert\pmb\theta_k - \pmb\theta_{i-t}\Vert\right]. \nonumber
\end{align*}
The last step holds by applying (\ref{eq: assumption item 3}) in Assumption~\ref{assumption 2}. Applying 
Lemma~\ref{auxillary lemma: parametric distance} to \eqref{eq-1: appendix lemma: supportive lemma 1} gives
\begin{equation*}
\Delta_{\textbf{(i)}} \leq \Delta_{\textbf{(ii)}} \leq  M(L_g + MU_\pi)  \sum^k_{\ell=i-t}\eta_\ell 
\end{equation*}

\noindent\textbf{(iii)} The proof of \textbf{(iii)} is similar to that of \textbf{(i)} and \textbf{(ii)}.

Bound the likelihood ratio by $\min\left\{\frac{\pi_{\pmb\theta_k}\left(\pmb{a}^{(i,j)}|\pmb{s}^{(i,j)}\right)}{\pi_{\pmb\theta_i}\left(\pmb{a}^{(i,j)}|\pmb{s}^{(i,j)}\right)}, U_f\right\}\leq U_f$, and notice that  for any $(\pmb{s},\pmb{a})\in\mathcal{S}\times\mathcal{A}$, it holds that

\begin{equation*}
    \min\left\{\frac{\pi_{\pmb\theta_k}\left(\pmb{a}|\pmb{s}\right)}{\pi_{\pmb\theta_i}\left(\pmb{a}|\pmb{s}\right)}, U_f\right\} - \min\left\{\frac{\pi_{\pmb\theta_{i-t}}\left(\pmb{a}|\pmb{s}\right)}{\pi_{\pmb\theta_i}\left(\pmb{a}|\pmb{s}\right)}, U_f\right\} \leq \frac{|\pi_{\pmb\theta_{k}}\left(\pmb{a}|\pmb{s}\right)-\pi_{\pmb\theta_{i-t}}\left(\pmb{a}|\pmb{s}\right)|}{\pi_{\pmb\theta_i}\left(\pmb{a}|\pmb{s}\right)}.
\end{equation*}
Then by applying Lemma~\ref{lemma: lipchitz continuity of sample gradient estimate} and Assumption~\ref{assumption 2}, the conclusion follows that
\begin{align}
& \E\left[\left\Vert\widehat{\nabla} J^{CLR}\left({\pmb{x}}^{(i,j)}, \pmb\theta_i,\pmb\theta_{k}\right)-\widehat{\nabla} J^{CLR}\left({\pmb{x}}^{(i,j)}, \pmb\theta_i,\pmb\theta_{i-t}\right)\right\Vert\right] \nonumber\\
&\leq \E\left[\min\left\{\frac{\pi_{\pmb\theta_k}\left(\pmb{a}^{(i,j)}|\pmb{s}^{(i,j)}\right)}{\pi_{\pmb\theta_i}\left(\pmb{a}^{(i,j)}|\pmb{s}^{(i,j)}\right)}, U_f\right\} \left\Vert g\left(\pmb{x}^{(i,j)}|\pmb\theta_k\right)-g\left(\pmb{x}^{(i,j)}|\pmb\theta_{i-t}\right)\right\Vert\right] \nonumber\\
&\quad +\E\left[\frac{|\pi_{\pmb\theta_{k}}\left(\pmb{a}^{(i,j)}|\pmb{s}^{(i,j)}\right)-\pi_{\pmb\theta_{i-t}}\left(\pmb{a}^{(i,j)}|\pmb{s}^{(i,j)}\right)|}{\pi_{\pmb\theta_i}\left(\pmb{a}^{(i,j)}|\pmb{s}^{(i,j)}\right)} \left\Vert g\left(\pmb{x}^{(i,j)}|\pmb\theta_{i-t}\right)\right\Vert  \right] \nonumber\\
&\leq U_f L_g \E\left[\Vert\pmb\theta_k-\pmb\theta_{i-t}\Vert\right] + M  U_\pi \E[\Vert\pmb\theta_k-\pmb\theta_{i-t}\Vert]\nonumber\\
&\leq (U_f L_g+M U_\pi) M \sum^k_{\ell=i-t}\eta_\ell.\nonumber
\end{align}
\end{proof}

\begin{lemma}\label{appendix lemma: supportive lemma 2}
   Under Assumptions~\ref{assumption 2}-\ref{assumption 3}, for any $t$ such that $t <i\leq k$, we have
   \begin{itemize}
      \item[\textbf{(i)}] $\left\Vert\E\left[\widehat{\nabla} J^{LR}\left({\pmb{x}}^{(i,j)}, \pmb\theta_i,\pmb\theta_{i-t}\right)-\widehat{\nabla} J\left(\tilde{\pmb{x}}^{(i,j)}, \pmb\theta_i,\pmb\theta_{i-t}\right)\right] \right\Vert \leq  2 nM^2U_\pi t  \sum^{i}_{\ell=i-t}\eta_{\ell}$ \\
    \item[\textbf{(ii)}] $\E\left[\left\Vert\widehat{\nabla} J^{LR}\left({\pmb{x}}^{(i,j)}, \pmb\theta_i,\pmb\theta_{i-t}\right)-\widehat{\nabla} J\left(\tilde{\pmb{x}}^{(i,j)}, \pmb\theta_i,\pmb\theta_{i-t}\right)\right\Vert\right]  \leq 2M + 2M^2U_\pi\sum^i_{\ell=i-t}\eta_\ell$ \\
 \item[\textbf{(iii)}] $\left\Vert\E\left[\widehat{\nabla} J^{CLR}\left({\pmb{x}}^{(i,j)}, \pmb\theta_i,\pmb\theta_{i-t}\right)-\widehat{\nabla} J\left(\tilde{\pmb{x}}^{(i,j)}, \pmb\theta_i,\pmb\theta_{i-t}\right)\right]\right\Vert\leq 
      2nM^2U_\pi U_f t\sum^i_{\ell=i-t}\eta_\ell$
    \item[\textbf{(iv)}] $\E\left[\left\Vert\widehat{\nabla} J^{CLR}\left({\pmb{x}}^{(i,j)}, \pmb\theta_i,\pmb\theta_{i-t}\right)-\widehat{\nabla} J\left(\tilde{\pmb{x}}^{(i,j)}, \pmb\theta_i,\pmb\theta_{i-t}\right)\right\Vert\right]\leq 
       (3U_f+1)M$
   \end{itemize}
\end{lemma}
\begin{proof}

\noindent \textbf{(i)}: Let $\Delta_{\textbf{(i)}} = \left\Vert \E\left[\left.\widehat{\nabla} J^{LR}\left({\pmb{x}}^{(i,j)}, \pmb\theta_i,\pmb\theta_{i-t}\right)-\widehat{\nabla} J\left(\tilde{\pmb{x}}^{(i,j)}, \pmb\theta_i,\pmb\theta_{i-t}\right)\right|\pmb{s}^{(i-t,j)}, \pmb\theta_{i-t}\right]\right\Vert$.

Conditioning on $\pmb\theta_{i-t}$ and $\pmb{s}^{(i-t,j)}$, we have
\begin{align}
    \Delta_{\textbf{(i)}}&=\left\Vert \E\left[\left.\frac{\pi_{\pmb\theta_{i-t}}\left(\pmb{a}^{(i,j)}|\pmb{s}^{(i,j)}\right)}{\pi_{\pmb\theta_i}\left(\pmb{a}^{(i,j)}|\pmb{s}^{(i,j)}\right)} g\left(\pmb{x}^{(i,j)}|\pmb\theta_{i-t}\right)- g\left(\tilde{\pmb{x}}^{(i,j)}|\pmb\theta_{i-t}\right)\right| \pmb{s}^{(i-t,j)}, \pmb\theta_{i-t}\right]\right\Vert\nonumber\\
&=\left\Vert \int \P\left(\pmb{s}^{(i,j)}=\dd \pmb{s}|\pmb{s}^{(i-t,j)},\pmb\theta_{i-t}\right)\pi_{\pmb\theta_{i-t}}\left(\pmb{a}|\pmb{s}\right)g\left(\pmb{s}, \pmb{a}|\pmb\theta_{i-t}\right)\dd \pmb{a} \right. \nonumber\\ 
& \qquad \left. - \int \P\left(\tilde{\pmb{s}}^{(i,j)}=\dd \pmb{s}|\pmb{s}^{(i-t,j)},\pmb\theta_{i-t}\right)\pi_{\pmb\theta_{i-t}}\left(\pmb{a}|\pmb{s}\right)g\left(\pmb{s}, \pmb{a}|\pmb\theta_{i-t}\right)\dd \pmb{a} \right\Vert \nonumber\\ 
&=\left\Vert \int \left(\P(\pmb{s}^{(i,j)}=\dd \pmb{s}|\pmb{s}^{(i-t,j)},\pmb\theta_{i-t})-\P(\tilde{\pmb{s}}^{(i,j)}=\dd \pmb{s}|\pmb{s}^{(i-t,j)},\pmb\theta_{i-t}) \right)\pi_{\pmb\theta_{i-t}}\left(\pmb{a}|\pmb{s}\right)g\left(\pmb{s}, \pmb{a}|\pmb\theta_{i-t}\right)\dd \pmb{a}\right\Vert \nonumber\\
&\leq \int \left|\P(\pmb{s}^{(i,j)}=\dd \pmb{s}|\pmb{s}^{(i-t,j)},\pmb\theta_{i-t})-\P(\tilde{\pmb{s}}^{(i,j)}=\dd \pmb{s}|\pmb{s}^{(i-t,j)},\pmb\theta_{i-t}) \right|\pi_{\pmb\theta_{i-t}}\left(\pmb{a}|\pmb{s}\right)\left\Vert g\left(\pmb{s}, \pmb{a}|\pmb\theta_{i-t}\right)\right\Vert\dd \pmb{a} \nonumber\\
&\leq M \int\left|\P(\pmb{s}^{(i,j)}=\dd \pmb{s}|\pmb{s}^{(i-t,j)},\pmb\theta_{i-t})-\P(\tilde{\pmb{s}}^{(i,j)}=\dd \pmb{s}|\pmb{s}^{(i-t,j)},\pmb\theta_{i-t}) \right| \int\pi_{\pmb\theta_{i-t}}\left(\pmb{a}|\pmb{s}\right)\dd \pmb{a}\label{eq1: appendix lemma: supportive lemma 2}\\
&\leq 2M\left\Vert \P\left(\pmb{s}^{(i,j)}\in \cdot|\pmb{s}^{(i-t,j)},\pmb\theta_{i-t}\right)-\P\left(\tilde{\pmb{s}}^{(i,j)}\in \cdot|\pmb{s}^{(i-t,j)},\pmb\theta_{i-t}\right)\right\Vert_{TV}\label{eq: appendix lemma: supportive lemma 2}
\end{align}
where Step~\eqref{eq1: appendix lemma: supportive lemma 2} holds due to Lemma~\ref{lemma: bounded of policy gradient} and the last inequality is by the definition of total variation. Let $$\Delta^{(i,j)}=2M\left\Vert \P\left(\pmb{s}^{(i,j)}\in \cdot|\pmb{s}^{(i-t,j)},\pmb\theta_{i-t}\right)-\P\left(\tilde{\pmb{s}}^{(i,j)}\in \cdot|\pmb{s}^{(i-t,j)},\pmb\theta_{i-t}\right)\right\Vert_{TV}.$$ By Lemma~\ref{lemma: relations of total variation measures}, we have 
\begin{align}
\Delta^{(i,j)}&\leq 2M\left\Vert \P\left(
\left(\pmb{s}^{(i,j-1)},\pmb{a}^{(i,j-1)}  \right)\in \cdot|\pmb{s}^{(i-t,j)},\pmb\theta_{i-t}\right)-\P\left(\left(\tilde{\pmb{s}}^{(i,j-1)},\tilde{\pmb{a}}^{(i,j-1)} \right)\in \cdot|\pmb{s}^{(i-t,j)},\pmb\theta_{i-t}\right)\right\Vert_{TV} \nonumber\\
&\leq 2M\left\Vert \P\left(\left.\pmb{s}^{(i,j-1)}\in \cdot\right|\pmb{s}^{(i-t,j)},\pmb\theta_{i-t}\right)-\P\left(\left.\tilde{\pmb{s}}^{(i,j-1)}\in \cdot\right|\pmb{s}^{(i-t,j)},\pmb\theta_{i-t}\right)\right\Vert_{TV} \nonumber\\
&\quad +2M U_\pi \E\left[\Vert \pmb\theta_{i}-\pmb\theta_{i-t}\Vert|\pmb\theta_{i-t}\right] \nonumber\\
&={\Delta^{(i,j-1)}} + 2M U_\pi \E\left[\Vert \pmb\theta_{i}-\pmb\theta_{i-t}\Vert|\pmb\theta_{i-t}\right]\label{eq2: appendix lemma: supportive lemma 2}\\
& \cdots\nonumber\\
&={\Delta^{(i-1,n)}} + 2M U_\pi \sum^j_{j^\prime=1}\E\left[\Vert \pmb\theta_{i}-\pmb\theta_{i-t}\Vert|\pmb\theta_{i-t}\right]. \nonumber
\end{align}
In sum, repeating applying the inequality~\eqref{eq2: appendix lemma: supportive lemma 2} from $(i,j)$ to $(i-t,1)$ gives
\begin{align}
    \Delta^{(i,j)}\leq \ldots &\leq \Delta^{(i-t,1)} + 2M U_\pi \sum^j_{j^\prime=1}\E\left[\Vert \pmb\theta_{i}-\pmb\theta_{i-t}\Vert|\pmb\theta_{i-t}\right]+2M U_\pi\sum^{i-1}_{i^\prime=i-t}\sum^{n}_{j=1}\E[\Vert \pmb\theta_{i^\prime}-\pmb\theta_{i-t}\Vert|\pmb\theta_{i-t}] \nonumber\\
    &\leq \Delta^{(i-t,1)} + 2n M U_\pi \sum^{i}_{i^\prime=i-t}\E\left[\Vert \pmb\theta_{i^\prime}-\pmb\theta_{i-t}\Vert|\pmb\theta_{i-t}\right] \nonumber
\end{align}
where the last step holds as $\sum^j_{j^\prime=1}\E\left[\Vert \pmb\theta_{i}-\pmb\theta_{i-t}\Vert|\pmb\theta_{i-t}\right] \leq n \E\left[\Vert \pmb\theta_{i}-\pmb\theta_{i-t}\Vert|\pmb\theta_{i-t}\right]$. By noting $$\Delta^{(i-t,j)}=2M\left\Vert \P\left(\pmb{s}^{(i-t,j)}\in \cdot|\pmb{s}^{(i-t,j)}\right)-\P\left(\tilde{\pmb{s}}^{(i-t,j)}\in \cdot|\pmb{s}^{(i-t,j)}\right)\right\Vert_{TV}=0 \text{  for any $j\in [n]$}.$$ 
Thus we have  $\Delta^{(i,j)}\leq 2n M U_\pi \sum^{i}_{i^\prime=i-t}\E[\Vert \pmb\theta_{i^\prime}-\pmb\theta_{i-t}\Vert|\pmb\theta_{i-t}]$. 

By Lemma~\ref{auxillary lemma: parametric distance}, we have $\E[\Vert \pmb\theta_{i^\prime}-\pmb\theta_{i-t}\Vert|\pmb\theta_{i-t}] \leq M\sum^{i^\prime}_{\ell=i-t}{\eta_{\ell}}$ and thus it holds that
\begin{equation}\label{eq4: appendix lemma: supportive lemma 2}
\Delta_{\textbf{(i)}}\leq\Delta^{(i,j)} \leq 2nU_\pi M^2\sum^{i}_{i^\prime=i-t}  \sum^{i^\prime}_{\ell=i-t}\eta_{\ell} \leq 2nU_\pi M^2t  \sum^{i}_{\ell=i-t}\eta_{\ell}.
\end{equation}
Then the conclusion follows from the fact that $\E[\Delta_{\textbf{(i)}}]\leq 2nU_\pi M^2t  \sum^{i}_{\ell=i-t}\eta_{\ell}$.

\vspace{0.1in}
\noindent \textbf{(ii)}: Different from \textbf{(i)}, we define the conditional expectation $$\Delta_{\textbf{(ii)}} = \E\left[\left.\left\Vert \widehat{\nabla} J^{LR}\left({\pmb{x}}^{(i,j)}, \pmb\theta_i,\pmb\theta_{i-t}\right)-\widehat{\nabla} J\left(\tilde{\pmb{x}}^{(i,j)}, \pmb\theta_i,\pmb\theta_{i-t}\right)\right\Vert\right|\pmb{s}^{(i-t,j)},\pmb\theta_{i-t}\right].$$
It follows that
\begin{align}
\Delta_{\textbf{(ii)}} &=\E\left[\left.\left\Vert \frac{\pi_{\pmb\theta_{i-t}}\left(\pmb{a}^{(i,j)}|\pmb{s}^{(i,j)}\right)}
{\pi_{\pmb\theta_i}\left(\pmb{a}^{(i,j)}|\pmb{s}^{(i,j)}\right)}g\left(\pmb{s}^{(i,j)},\pmb{a}^{(i,j)}|\pmb\theta_{i-t}\right) - g\left(\tilde{\pmb{s}}^{(i,j)},\tilde{\pmb{a}}^{(i,j)}|\pmb\theta_{i-t}\right)\right\Vert\right|\pmb{s}^{(i-t,j)},\pmb\theta_{i-t}\right] \nonumber\\
&\leq \E\left[\left.\left\Vert \frac{\pi_{\pmb\theta_{i-t}}\left(\pmb{a}^{(i,j)}|\pmb{s}^{(i,j)}\right)}
{\pi_{\pmb\theta_i}\left(\pmb{a}^{(i,j)}|\pmb{s}^{(i,j)}\right)}g\left(\pmb{s}^{(i,j)},\pmb{a}^{(i,j)}|\pmb\theta_{i-t}\right) - g\left({\pmb{s}}^{(i,j)},{\pmb{a}}^{(i,j)}|\pmb\theta_{i-t}\right) \right\Vert\right|\pmb{s}^{(i-t,j)},\pmb\theta_{i-t} \right]\nonumber\\
&\quad +\E\left[\left.\left\Vert g\left({\pmb{s}}^{(i,j)},{\pmb{a}}^{(i,j)}|\pmb\theta_{i-t}\right) - g\left(\tilde{\pmb{s}}^{(i,j)},\tilde{\pmb{a}}^{(i,j)}|\pmb\theta_{i-t}\right)\right\Vert\right|\pmb{s}^{(i-t,j)},\pmb\theta_{i-t}\right]. \nonumber
\end{align}
Let $p\left(\pmb{\tau}\right)=p\left(\pmb{s}^{(i-t,j)},\pmb{a}^{(i-t,j)}, \pmb{s}^{(i-t,j+1)},\ldots,\pmb{s}^{(i,j)}\right)$ denote the probability density of sample path from $\pmb{s}^{(i-t,j)}$ to $\pmb{s}^{(i,j)}$. Conditioning on $\pmb{s}^{(i-t,j)}$ and $\pmb\theta_{i-t}$, we have
\begin{align}
\Delta_{\textbf{(ii)}} &\leq \E\left[\left.\frac{|\pi_{\pmb\theta_{i-t}}\left(\pmb{a}^{(i,j)}|\pmb{s}^{(i,j)}\right)-\pi_{\pmb\theta_{i}}\left(\pmb{a}^{(i,j)}|\pmb{s}^{(i,j)}\right)|}
{\pi_{\pmb\theta_i}\left(\pmb{a}^{(i,j)}|\pmb{s}^{(i,j)}\right)} \left\Vert g\left(\pmb{s}^{(i,j)},\pmb{a}^{(i,j)}|\pmb\theta_{i-t}\right)\right\Vert \right|\pmb{s}^{(i-t,j)},\pmb\theta_{i-t}\right] +2M\nonumber\\
&\leq M \int p\left(\pmb{\tau}\right) \left(\int_{\mathcal{A}}\pi_{\pmb\theta_i}\left(\pmb{a}^{(i,j)}|\pmb{s}^{(i,j)}\right)\frac{|\pi_{\pmb\theta_{i-t}}\left(\pmb{a}^{(i,j)}|\pmb{s}^{(i,j)}\right)-\pi_{\pmb\theta_{i}}\left(\pmb{a}^{(i,j)}|\pmb{s}^{(i,j)}\right)|}
{\pi_{\pmb\theta_i}\left(\pmb{a}^{(i,j)}|\pmb{s}^{(i,j)}\right)} \dd \pmb{a}^{(i,j)}\right)\dd \pmb{\tau} +2M\nonumber\\
&\leq M \int p\left(\pmb{\tau}\right) \left(\int_{\mathcal{A}}\left|\pi_{\pmb\theta_{i-t}}\left(\pmb{a}|\pmb{s}^{(i,j)}\right)-\pi_{\pmb\theta_{i}}\left(\pmb{a}|\pmb{s}^{(i,j)}\right)\right| \dd \pmb{a}\right)\dd \pmb{\tau} +2M\nonumber\\
&\leq 2M \E\left[\left.\left\Vert\pi_{\pmb\theta_{i-t}}\left(\cdot|\pmb{s}^{(i,j)}\right)-\pi_{\pmb\theta_{i}}\left(\cdot|\pmb{s}^{(i,j)}\right)\right\Vert_{TV}\right|\pmb{s}^{(i-t,j)},\pmb\theta_{i-t}\right] +2M\label{eq5: appendix lemma: supportive lemma 2} \\
&\leq 2M U_\pi\E\left[\left.\left\Vert \pmb\theta_{i}-\pmb\theta_{i-t}\right\Vert\right|\pmb{s}^{(i-t,j)},\pmb\theta_{i-t}\right] +2M\label{eq6: appendix lemma: supportive lemma 2}\\
&\leq 2M^2U_\pi\sum^i_{\ell=i-t}\eta_\ell +2M \label{eq7: appendix lemma: supportive lemma 2}
\end{align}
where the expectation in Step~\eqref{eq5: appendix lemma: supportive lemma 2} is taken over the sample path from $\pmb{s}^{(i-t,j)}$ to $\pmb{s}^{(i,j)}$. Step~\eqref{eq6: appendix lemma: supportive lemma 2} holds by Assumption~\ref{assumption 2} and Step~\eqref{eq7: appendix lemma: supportive lemma 2} follows Lemma~\ref{auxillary lemma: parametric distance}. The conclusion follows by the fact that $$\E\left[\left\Vert\widehat{\nabla} J^{LR}\left({\pmb{x}}^{(i,j)}, \pmb\theta_i,\pmb\theta_{i-t}\right)-\widehat{\nabla} J\left(\tilde{\pmb{x}}^{(i,j)}, \pmb\theta_i,\pmb\theta_{i-t}\right)\right\Vert\right] 
\leq 2M^2U_\pi\sum^i_{\ell=i-t}\eta_\ell +2M.$$

\noindent \textbf{(iii)}: We define conditional expectation and notice that $\min(1,U_f) \equiv 1$ as $U_f > 1$ \begin{align*}
    \Delta_{\textbf{(iii)}}=\left\Vert \E\left[\left.\min\left(\frac{\pi_{\pmb\theta_{i-t}}\left(\pmb{a}^{(i,j)}|\pmb{s}^{(i,j)}\right)}{\pi_{\pmb\theta_i}\left(\pmb{a}^{(i,j)}|\pmb{s}^{(i,j)}\right)}, U_f \right)g\left(\pmb{x}^{(i,j)}|\pmb\theta_{i-t}\right)- g\left(\tilde{\pmb{x}}^{(i,j)}|\pmb\theta_{i-t}\right)\right|\pmb{s}^{(i-t,j)},\pmb\theta_{i-t}\right]\right\Vert\\
=\left\Vert \E\left[\left.\min\left(\frac{\pi_{\pmb\theta_{i-t}}\left(\pmb{a}^{(i,j)}|\pmb{s}^{(i,j)}\right)}{\pi_{\pmb\theta_i}\left(\pmb{a}^{(i,j)}|\pmb{s}^{(i,j)}\right)}, U_f \right)g\left(\pmb{x}^{(i,j)}|\pmb\theta_{i-t}\right)- \min(1,U_f) g\left(\tilde{\pmb{x}}^{(i,j)}|\pmb\theta_{i-t}\right)\right|\pmb{s}^{(i-t,j)},\pmb\theta_{i-t}\right]\right\Vert.
\end{align*}
By the Lipschitz property of the function $\min(\cdot, U_f)$, \begin{equation}
    \min\left(\frac{\pi_{\pmb\theta_{i-t}}\left(\pmb{a}^{(i,j)}|\pmb{s}^{(i,j)}\right)}{\pi_{\pmb\theta_i}\left(\pmb{a}^{(i,j)}|\pmb{s}^{(i,j)}\right)}, U_f \right) - \min(1, U_f) \leq \left|\frac{\pi_{\pmb\theta_{i-t}}\left(\pmb{a}^{(i,j)}|\pmb{s}^{(i,j)}\right)}{\pi_{\pmb\theta_i}\left(\pmb{a}^{(i,j)}|\pmb{s}^{(i,j)}\right)}- 1 \right|. \label{eq9: appendix lemma: supportive lemma 2}
\end{equation}
Conditioning on $\pmb{s}^{(i-t,j)}$ and $\pmb\theta_{i-t}$, it holds
\begin{align}
\Delta_{\textbf{(iii)}}&\leq\left\Vert \E\left[\left.\min\left(\frac{\pi_{\pmb\theta_{i-t}}\left(\pmb{a}^{(i,j)}|\pmb{s}^{(i,j)}\right)}{\pi_{\pmb\theta_i}\left(\pmb{a}^{(i,j)}|\pmb{s}^{(i,j)}\right)}, U_f \right)\left[g\left(\pmb{x}^{(i,j)}|\pmb\theta_{i-t}\right)- g\left(\tilde{\pmb{x}}^{(i,j)}|\pmb\theta_{i-t}\right)\right]\right|\pmb{s}^{(i-t,j)},\pmb\theta_{i-t}\right]\right\Vert \nonumber\\
&\quad +\left\Vert \E\left[\left.\left(\min\left(\frac{\pi_{\pmb\theta_{i-t}}\left(\pmb{a}^{(i,j)}|\pmb{s}^{(i,j)}\right)}{\pi_{\pmb\theta_i}\left(\pmb{a}^{(i,j)}|\pmb{s}^{(i,j)}\right)}, U_f \right) - \min(1, U_f)\right)g\left(\tilde{\pmb{x}}^{(i,j)}|\pmb\theta_{i-t}\right)\right|\pmb{s}^{(i-t,j)},\pmb\theta_{i-t}\right]\right\Vert\nonumber\\
&\leq\left\Vert \E\left[\left.U_f\left[g\left(\pmb{x}^{(i,j)}|\pmb\theta_{i-t}\right)- g\left(\tilde{\pmb{x}}^{(i,j)}|\pmb\theta_{i-t}\right)\right]\right|\pmb{s}^{(i-t,j)},\pmb\theta_{i-t}\right] \right\Vert \label{eq8: appendix lemma: supportive lemma 2}\\
&\quad +\E\left[\left.\frac{\left|\pi_{\pmb\theta_{i-t}}\left(\pmb{a}^{(i,j)}|\pmb{s}^{(i,j)}\right)-\pi_{\pmb\theta_i}\left(\pmb{a}^{(i,j)}|\pmb{s}^{(i,j)}\right)\right|}{\pi_{\pmb\theta_i}\left(\pmb{a}^{(i,j)}|\pmb{s}^{(i,j)}\right)} \left\Vert g\left(\tilde{\pmb{x}}^{(i,j)}|\pmb\theta_{i-t}\right)\right\Vert\right|\pmb{s}^{(i-t,j)},\pmb\theta_{i-t}\right] \label{eq10: appendix lemma: supportive lemma 2}
\end{align}
where the term~\eqref{eq10: appendix lemma: supportive lemma 2} holds due to the inequality \eqref{eq9: appendix lemma: supportive lemma 2}.

\noindent \textbf{First}, for term \eqref{eq8: appendix lemma: supportive lemma 2}, we have
\begin{align}
&\left\Vert \E\left[\left.U_f\left[g\left(\pmb{x}^{(i,j)}|\pmb\theta_{i-t}\right)- g\left(\tilde{\pmb{x}}^{(i,j)}|\pmb\theta_{i-t}\right)\right]\right|\pmb{s}^{(i-t,j)},\pmb\theta_{i-t}\right] \right\Vert \nonumber\\
&\quad \leq U_f \left\Vert\int_{\mathcal{S}}\int_{\mathcal{A}} \P\left(\pmb{s}^{(i,j)}=\dd \pmb{s}|\pmb{s}^{(i-t,j)},\pmb\theta_{i-t}\right) \pi_{\pmb\theta_i}(\pmb{a}|\pmb{s})g\left(\pmb{s},\pmb{a}|\pmb\theta_{i-t}\right) \right.\nonumber \\
&\qquad\qquad\quad  - \left.\P\left(\tilde{\pmb{s}}^{(i,j)}=\dd \pmb{s}|\pmb{s}^{(i-t,j)},\pmb\theta_{i-t}\right)\pi_{\pmb\theta_{i-t}}(\pmb{a}|\pmb{s})g\left(\pmb{s},\pmb{a}|\pmb\theta_{i-t}\right) \dd\pmb{a} \right\Vert \nonumber\\
&\quad \leq U_f  \int_{\mathcal{S}}\int_{\mathcal{A}} \left|\P\left(\pmb{s}^{(i,j)}=\dd \pmb{s}|\pmb{s}^{(i-t,j)},\pmb\theta_{i-t}\right) \pi_{\pmb\theta_i}(\pmb{a}|\pmb{s}) -\P\left(\tilde{\pmb{s}}^{(i,j)}=\dd \pmb{s}|\pmb{s}^{(i-t,j)},\pmb\theta_{i-t}\right)\pi_{\pmb\theta_{i-t}}(\pmb{a}|\pmb{s})\right|\nonumber \\
&\qquad\qquad\quad  \Vert g\left(\pmb{s},\pmb{a}|\pmb\theta_{i-t}\right) \Vert \dd\pmb{a} \nonumber\\
&\quad \leq 2MU_f\left\Vert \P\left((\pmb{s}^{(i,j)},\pmb{a}^{(i,j)})\in \cdot|\pmb{s}^{(i-t,j)},\pmb\theta_{i-t}\right)-\P\left((\tilde{\pmb{s}}^{(i,j)},\tilde{\pmb{a}}^{(i,j)})\in \cdot|\pmb{s}^{(i-t,j)},\pmb\theta_{i-t}\right)\right\Vert_{TV}\label{eq11: appendix lemma: supportive lemma 2}
\end{align}
where Step~\eqref{eq11: appendix lemma: supportive lemma 2} holds due to Lemma~\ref{lemma: bounded of policy gradient}. Let $$\Delta^{(i,j)}=2M\left\Vert \P\left((\pmb{s}^{(i,j)},\pmb{a}^{(i,j)})\in \cdot|\pmb{s}^{(i-t,j)},\pmb\theta_{i-t}\right)-\P\left((\tilde{\pmb{s}}^{(i,j)},\tilde{\pmb{a}}^{(i,j)})\in \cdot|\pmb{s}^{(i-t,j)},\pmb\theta_{i-t}\right)\right\Vert_{TV}.$$ By Lemma~\ref{lemma: relations of total variation measures}, we have 
\begin{align}
\Delta^{(i,j)}
&\leq 2M\left\Vert \P\left(\left.\pmb{s}^{(i,j)}\in \cdot\right|\pmb{s}^{(i-t,j)},\pmb\theta_{i-t}\right)-\P\left(\left.\tilde{\pmb{s}}^{(i,j)}\in \cdot\right|\pmb{s}^{(i-t,j)},\pmb\theta_{i-t}\right)\right\Vert_{TV} \nonumber\\
&\quad +2M U_\pi \E\left[\Vert \pmb\theta_{i}-\pmb\theta_{i-t}\Vert|\pmb\theta_{i-t}\right] \nonumber\\
&\leq\Delta^{(i,j-1)} + 2M U_\pi \E\left[\Vert \pmb\theta_{i}-\pmb\theta_{i-t}\Vert|\pmb\theta_{i-t}\right].\label{eq12: appendix lemma: supportive lemma 2}
\end{align}
Repeat the inequality~\eqref{eq12: appendix lemma: supportive lemma 2} from $(i,j)$ to $(i-t,j)$, we have
\begin{align}
    \Delta^{(i,j)}\leq \ldots &\leq \Delta^{(i-t,j)} + 2M U_\pi\sum^{i}_{i^\prime=i-t}\sum^{n}_{j=1}\E[\Vert \pmb\theta_{i^\prime}-\pmb\theta_{i-t}\Vert|\pmb\theta_{i-t}] \nonumber\\
    &= \Delta^{(i-t,j)} + 2n M U_\pi \sum^{i}_{i^\prime=i-t}\E\left[\Vert \pmb\theta_{i^\prime}-\pmb\theta_{i-t}\Vert|\pmb\theta_{i-t}\right]. \nonumber
\end{align}
Notice that $$\Delta^{(i-t,j)}=2M\left\Vert \P\left(\pmb{s}^{(i-t,j)}\in \cdot|\pmb{s}^{(i-t,j)}\right)\pi_{\pmb\theta_{i-t}}(\cdot|\pmb{s})-\P\left(\tilde{\pmb{s}}^{(i-t,j)}\in \cdot|\pmb{s}^{(i-t,j)}\right)\pi_{\pmb\theta_{i-t}}(\cdot|\pmb{s})\right\Vert_{TV}=0.$$ Thus we have  $\Delta^{(i,j)}\leq 2n M U_\pi \sum^{i}_{i^\prime=i-t}\E[\Vert \pmb\theta_{i^\prime}-\pmb\theta_{i-t}\Vert|\pmb\theta_{i-t}]$. 
By Lemma~\ref{auxillary lemma: parametric distance}, we have $\E[\Vert \pmb\theta_{i^\prime}-\pmb\theta_{i-t}\Vert|\pmb\theta_{i-t}] \leq M\sum^{i^\prime}_{\ell=i-t}{\eta_{\ell}}$ and thus it holds that
\begin{equation*}
\Delta^{(i,j)} \leq 2nU_\pi M^2\sum^{i}_{i^\prime=i-t}  \sum^{i^\prime}_{\ell=i-t}\eta_{\ell} \leq 2nU_\pi M^2t  \sum^{i}_{\ell=i-t}\eta_{\ell}.
\end{equation*}
Thus, the term \eqref{eq8: appendix lemma: supportive lemma 2} is bounded by
\begin{equation} \label{eq8: bound appendix lemma: supportive lemma 2}
\left\Vert \E\left[\left.U_f\left[g\left(\pmb{x}^{(i,j)}|\pmb\theta_{i-t}\right)- g\left(\tilde{\pmb{x}}^{(i,j)}|\pmb\theta_{i-t}\right)\right]\right|\pmb{s}^{(i-t,j)},\pmb\theta_{i-t}\right]\right\Vert \leq 2 U_f nU_\pi M^2t  \sum^{i}_{\ell=i-t}\eta_{\ell}.
\end{equation}

\noindent \textbf{Second}, for term \eqref{eq10: appendix lemma: supportive lemma 2}, based on the similar steps as Eq.~\eqref{eq7: appendix lemma: supportive lemma 2}, we have
\begin{align}
&\E\left[\left.\frac{\left|\pi_{\pmb\theta_{i-t}}\left(\pmb{a}^{(i,j)}|\pmb{s}^{(i,j)}\right)-\pi_{\pmb\theta_i}\left(\pmb{a}^{(i,j)}|\pmb{s}^{(i,j)}\right)\right|}{\pi_{\pmb\theta_i}\left(\pmb{a}^{(i,j)}|\pmb{s}^{(i,j)}\right)} \left\Vert g\left(\tilde{\pmb{x}}^{(i,j)}|\pmb\theta_{i-t}\right)\right\Vert\right|\pmb{s}^{(i-t,j)},\pmb\theta_{i-t}\right]\nonumber\\
&\leq 2M^2U_\pi\sum^i_{\ell=i-t}\eta_\ell. 
\label{eq10: bound appendix lemma: supportive lemma 2}
\end{align}
By putting \eqref{eq8: bound appendix lemma: supportive lemma 2} and \eqref{eq10: bound appendix lemma: supportive lemma 2} all together and taking full expectation, we have
\begin{equation}\label{eq14: appendix lemma: supportive lemma 2}
   \E[\Delta_{\textbf{(iii)}}] \leq 2nM^2U_\pi U_f t\sum^i_{\ell=i-t}\eta_\ell.
\end{equation}

\noindent \textbf{(iv):} Notice that for any $\pmb{s},\tilde{\pmb{s}}\in\mathcal{S}$ and $\pmb{a},\tilde{\pmb{a}}\in\mathcal{A}$, it holds
\begin{align}
    &\left\Vert\min \left(\frac{\pi_{\pmb\theta_{i-t}}\left(\pmb{a}|\pmb{s}\right)}{\pi_{\pmb\theta_{i}}\left(\pmb{a}^|\pmb{s}\right)},U_f\right)g\left(\pmb{x}|\pmb\theta_{i-t}\right)- g\left(\tilde{\pmb{x}}|\pmb\theta_{i-t}\right) \right\Vert  \nonumber\\
    &\leq  \left\Vert\min \left(\frac{\pi_{\pmb\theta_{i-t}}\left(\pmb{a}|\pmb{s}\right)}{\pi_{\pmb\theta_{i}}\left(\pmb{a}|\pmb{s}\right)},U_f\right)g\left(\pmb{x}|\pmb\theta_{i-t}\right)- \min \left(\frac{\pi_{\pmb\theta_{i-t}}\left(\pmb{a}|\pmb{s}\right)}{\pi_{\pmb\theta_{i}}\left(\pmb{a}|\pmb{s}\right)},U_f\right)g\left(\tilde{\pmb{x}}|\pmb\theta_{i-t}\right)\right\Vert \nonumber\\
    &+ \left \Vert \min \left(\frac{\pi_{\pmb\theta_{i-t}}\left(\pmb{a}|\pmb{s}\right)}{\pi_{\pmb\theta_{i}}\left(\pmb{a}|\pmb{s}\right)},U_f\right)g\left(\tilde{\pmb{x}}|\pmb\theta_{i-t}\right)-g\left(\tilde{\pmb{x}}|\pmb\theta_{i-t}\right) \right\Vert \nonumber\\
&\leq  \min \left(\frac{\pi_{\pmb\theta_{i-t}}\left(\pmb{a}|\pmb{s}\right)}{\pi_{\pmb\theta_{i}}\left(\pmb{a}|\pmb{s}\right)},U_f\right)\left\Vert g\left(\pmb{x}|\pmb\theta_{i-t}\right)- g\left(\tilde{\pmb{x}}|\pmb\theta_{i-t}\right)\right\Vert + \left | \min \left(\frac{\pi_{\pmb\theta_{i-t}}\left(\pmb{a}|\pmb{s}\right)}{\pi_{\pmb\theta_{i}}\left(\pmb{a}|\pmb{s}\right)},U_f\right)-1 \right| \Vert g\left(\tilde{\pmb{x}}|\pmb\theta_{i-t}\right)\Vert \nonumber\\
&\leq 2U_f M + (U_f+1)M \nonumber
\end{align}
from which the conclusion immediately follows.
\end{proof}

\begin{lemma}\label{appendix lemma: supportive lemma 3}
   Under Assumption~\ref{assumption 2} and \ref{assumption 3}, for any $t$ such that $t <i\leq k$, we have
   \begin{itemize}
       \item[\textbf{(i)}] $\left\Vert \E\left[\widehat{\nabla} J\left(\tilde{\pmb{x}}^{(i,j)}, \pmb\theta_i,\pmb\theta_{i-t}\right)-\widehat{\nabla} J\left(\check{\pmb{x}}^{(i,j)}, \pmb\theta_i,\pmb\theta_{i-t}\right)\right] \right\Vert\leq 2M \varphi(nt)$
       \item[\textbf{(ii)}] $ \E\left[\left\Vert\widehat{\nabla} J\left(\tilde{\pmb{x}}^{(i,j)}, \pmb\theta_i,\pmb\theta_{i-t}\right)-\widehat{\nabla} J\left(\check{\pmb{x}}^{(i,j)}, \pmb\theta_i,\pmb\theta_{i-t}\right)\right\Vert\right] \leq 2M.$
   \end{itemize}
\end{lemma}
\begin{proof}
\noindent \textbf{(i)} Let $\Delta = \left\Vert\E\left[\left.\widehat{\nabla} J\left(\tilde{\pmb{x}}^{(i,j)}, \pmb\theta_i,\pmb\theta_{i-t}\right)-\widehat{\nabla} J\left(\check{\pmb{x}}^{(i,j)}, \pmb\theta_i,\pmb\theta_{i-t}\right)\right|\pmb{s}^{(i-t,j)},\pmb\theta_{i-t}\right]\right\Vert$. It holds
\begin{align}
\Delta&=
\left\Vert \E\left[\left. g\left(\tilde{\pmb{s}}^{(i,j)}, \tilde{\pmb{a}}^{(i,j)}|\pmb\theta_{i-t}\right) - g\left(\check{\pmb{s}}^{(i,j)}, \check{\pmb{a}}^{(i,j)}|\pmb\theta_{i-t}\right) \right|\pmb{s}^{(i-t,j)},\pmb\theta_{i-t}\right]\right\Vert \nonumber\\
&=\left\Vert \int \P\left(\tilde{\pmb{s}}^{(i,j)}=\dd \pmb{s}|\pmb{s}^{(i-t,j)},\pmb\theta_{i-t}\right)\pi_{\pmb\theta_{i-t}}\left(\pmb{a}|\pmb{s}\right)g\left(\pmb{s}, \pmb{a}|\pmb\theta_{i-t}\right) \dd \pmb{a} \right. \nonumber\\
& \qquad - \left. \int d^{\pi_{\pmb\theta_{i-t}}}(\pmb{s})\pi_{\pmb\theta_{i-t}}\left(\pmb{a}|\pmb{s}\right)g\left(\pmb{s}, \pmb{a}|\pmb\theta_{i-t}\right) \dd \pmb{s} \dd\pmb{a} \right\Vert \nonumber\\
&=\left\Vert \int \left(p\left(\tilde{\pmb{s}}^{(i,j)}= \pmb{s}|\pmb{s}^{(i-t,j)},\pmb\theta_{i-t}\right)-d^{\pi_{\pmb\theta_{i-t}}}(\pmb{s})\right)\pi_{\pmb\theta_{i-t}}\left(\pmb{a}|\pmb{s}\right)g\left(\pmb{s}, \pmb{a}|\pmb\theta_{i-t}\right) \dd \pmb{s} \dd\pmb{a} \right\Vert \nonumber\\
&\leq \int \left|p\left(\tilde{\pmb{s}}^{(i,j)}= \pmb{s}|\pmb{s}^{(i-t,j)},\pmb\theta_{i-t}\right)-d^{\pi_{\pmb\theta_{i-t}}}(\pmb{s})\right|\pi_{\pmb\theta_{i-t}}\left(\pmb{a}|\pmb{s}\right) \left\Vert 
 g\left(\pmb{s}, \pmb{a}|\pmb\theta_{i-t}\right)\right\Vert \dd \pmb{s} \dd\pmb{a} \nonumber\\
 &\leq M\int \left|p\left(\tilde{\pmb{s}}^{(i,j)}= \pmb{s}|\pmb{s}^{(i-t,j)},\pmb\theta_{i-t}\right)-d^{\pi_{\pmb\theta_{i-t}}}(\pmb{s})\right|\pi_{\pmb\theta_{i-t}}\left(\pmb{a}|\pmb{s}\right)  \dd \pmb{s} \dd\pmb{a} \label{eq1: appendix lemma: supportive lemma 3}\\
 &= M\int \left|p\left(\tilde{\pmb{s}}^{(i,j)}= \pmb{s}|\pmb{s}^{(i-t,j)},\pmb\theta_{i-t}\right)-d^{\pi_{\pmb\theta_{i-t}}}(\pmb{s})\right| \left(\int \pi_{\pmb\theta_{i-t}}\left(\pmb{a}|\pmb{s}\right) \dd\pmb{a} \right) \dd \pmb{s}\nonumber\\
 &= 2M \left\Vert \P\left(\tilde{\pmb{s}}^{(i,j)}\in \cdot|\pmb{s}^{(i-t,j)},\pmb\theta_{i-t}\right)-d^{\pi_{\pmb\theta_{i-t}}}(\cdot)\right\Vert_{TV}\nonumber\\
 &\leq 2M \varphi(nt). \label{eq2: appendix lemma: supportive lemma 3}
\end{align}
Then the conclusion follows from the fact that 
$$\left\Vert \E\left[\widehat{\nabla} J\left(\tilde{\pmb{x}}^{(i,j)}, \pmb\theta_i,\pmb\theta_{i-t}\right)-\widehat{\nabla} J\left(\check{\pmb{x}}^{(i,j)}, \pmb\theta_i,\pmb\theta_{i-t}\right)\right] \right\Vert
\leq 2M\varphi(nt).$$

\noindent \textbf{(ii)} The conclusion can be obtained by observing that $$\E\left[\left\Vert g\left(\tilde{\pmb{s}}^{(i,j)}, \tilde{\pmb{a}}^{(i,j)}|\pmb\theta_{i-t}\right) - g\left(\check{\pmb{s}}^{(i,j)}, \check{\pmb{a}}^{(i,j)}|\pmb\theta_{i-t}\right)\right\Vert\right]\leq \E[2M]=2M,$$
which concludes the proof.
\end{proof}

\begin{lemma}\label{appendix lemma: supportive lemma 4}
   Under Assumptions~\ref{assumption 2} and \ref{assumption 3}, for any $t$ such that $t <i\leq k$, we have
   \begin{equation*}
       \Vert\E[\Delta]\Vert\leq \E\left[\Vert\Delta\Vert\right]\leq L_g M \sum^k_{\ell=i-t}\eta_\ell
   \end{equation*}
   where $\Delta=\widehat{\nabla} J\left(\check{\pmb{x}}^{(i,j)}, \pmb\theta_i,\pmb\theta_{i-t}\right)-\widehat{\nabla} J\left(\check{\pmb{x}}^{(i,j)}, \pmb\theta_i,\pmb\theta_{k}\right)$.
\end{lemma}
\begin{proof}
By Lemma~\ref{lemma: lipchitz continuity of sample gradient estimate},
\begin{equation*}
\E[\Vert\Delta \Vert] = \E\left[\Vert g(\check{\pmb{x}}^{(i,j)}|\pmb\theta_{i-t})-g(\check{\pmb{x}}^{(i,j)}|\pmb\theta_{k})\Vert \right] \leq L_g\E[\Vert\pmb\theta_k-\pmb\theta_{i-t}\Vert].
\end{equation*}
The conclusion follows by applying Lemma~\ref{auxillary lemma: parametric distance}.
\end{proof}

\begin{lemma}\label{appendix lemma: supportive lemma 5}
   Under Assumption~\ref{assumption 2} and \ref{assumption 3}, for any $t$ such that $t <i\leq k$, we have
   \begin{itemize}
     \item[\textbf{(i)}] $\left\Vert \E\left[\widehat{\nabla} J\left(\check{\pmb{x}}^{(i,j)}, \pmb\theta_i,\pmb\theta_{k}\right)-\nabla  J\left(\pmb\theta_k\right)\right] \right\Vert \leq 2M^2 C_d\sum^{k}_{\ell=i-t}\eta_{\ell}$
     \item[\textbf{(ii)}] $\E\left[\left\Vert \widehat{\nabla} J\left(\check{\pmb{x}}^{(i,j)}, \pmb\theta_i,\pmb\theta_{k}\right)-\nabla  J\left(\pmb\theta_k\right)\right\Vert\right]  \leq 2M$
   \end{itemize}
\end{lemma}
\begin{proof}
\noindent \textbf{(i)}:
Let $\Delta = \E\left[\left.\widehat{\nabla} J\left(\check{\pmb{x}}^{(i,j)}, \pmb\theta_i,\pmb\theta_{k}\right)-\nabla  J\left(\pmb\theta_k\right)\right|\pmb{s}^{i-t,j},\pmb\theta_{i-t}\right]$. For notational simplicity, we omit the conditions on $(\pmb{s}^{i-t,j},\pmb\theta_{i-t})$ below. Then we have
\begin{align}
\Vert\Delta \Vert&= \left\Vert\E\left[\E_{(\pmb{s},\pmb{a})\sim d^{\pi_{\pmb\theta_{i-t}}}(\cdot,\cdot)}[g(\pmb{s},\pmb{a}|\pmb\theta_k)]\right] - \E\left[\E_{(\pmb{s},\pmb{a})\sim d^{\pi_{\pmb\theta_{k}}}(\cdot,\cdot)}[g(\pmb{s},\pmb{a}|\pmb\theta_k)]\right] \right\Vert\nonumber\\
&\leq \E\left[\left\Vert \int d^{\pi_{\pmb\theta_{i-t}}}(\pmb{s},\pmb{a}) g(\pmb{s}, \pmb{a}|\pmb\theta_k)\dd \pmb{s}\dd \pmb{a} - \int d^{\pi_{\pmb\theta_{k}}}(\pmb{s},\pmb{a}) g(\pmb{s}, \pmb{a}|\pmb\theta_k)\dd \pmb{s}\dd \pmb{a}\right\Vert\right] \nonumber\\
&\leq \E\left[ \int | d^{\pi_{\pmb\theta_{i-t}}}(\pmb{s},\pmb{a}) - d^{\pi_{\pmb\theta_{k}}}(\pmb{s},\pmb{a}) |\left\Vert g(\pmb{s}, \pmb{a}|\pmb\theta_k) \right\Vert\dd \pmb{s}\dd \pmb{a}\right] \nonumber \\
&\leq 2M \E\left[\left\Vert d^{\pi_{\pmb\theta_{i-t}}}(\pmb{s},\pmb{a}) - d^{\pi_{\pmb\theta_{k}}}(\pmb{s},\pmb{a}) \right\Vert_{TV}\right] \label{eq1: appendix lemma: supportive lemma 5} \\
&\leq 2M C_d \E\left[\left\Vert \pmb\theta_{k}-\pmb\theta_{i-t}\right\Vert \right] \label{eq2: appendix lemma: supportive lemma 5}
\end{align}
where $C_d=U_\pi(1+\lceil\log_{\kappa}\kappa_0^{-1}\rceil+\frac{1}{1-\kappa})$ and $\kappa\geq 1$. Step~\eqref{eq1: appendix lemma: supportive lemma 5} holds due to Lemma~\ref{lemma: bounded of policy gradient} and Step~\eqref{eq2: appendix lemma: supportive lemma 5} is by Lemma~\ref{lemma: lipchitz continuity of stationary distribution}. The conclusion is obtained by applying Lemma~\ref{auxillary lemma: parametric distance}, i.e.
\begin{equation*}
  \left\Vert \E\left[\widehat{\nabla} J\left(\check{\pmb{x}}^{(i,j)}, \pmb\theta_i,\pmb\theta_{k}\right)-\nabla  J\left(\pmb\theta_k\right)\right] \right\Vert=\Vert\E[\Delta]\Vert\leq \E[\Vert\Delta\Vert]\leq 2M^2 C_d\sum^{k}_{\ell=i-t}\eta_{\ell}.
\end{equation*}

\noindent \textbf{(ii)}: The conclusion is obtained by Minkowski's inequality and Lemma~\ref{lemma: bounded of policy gradient}:
\begin{align*}
    &\E\left[\left\Vert \widehat{\nabla} J\left(\check{\pmb{x}}^{(i,j)}, \pmb\theta_i,\pmb\theta_{k}\right)-\nabla  J\left(\pmb\theta_k\right)\right\Vert\right] \\
    &=\E\left[\left\Vert g(\check{\pmb{x}}^{(i,j)}|\pmb\theta_k)-g(\pmb{x}|\pmb\theta_k)\right\Vert\right] \leq \E\left[\left\Vert g(\check{\pmb{x}}^{(i,j)}|\pmb\theta_k)\right\Vert +\left \Vert g(\pmb{x}|\pmb\theta_k)\right\Vert\right] \leq 2M
\end{align*}
which completes the proof.
\end{proof}

\begin{lemma}\label{appendix lemma: supportive lemma 6}
  Let $\bar{\rho}_k = \frac{1}{|\mathcal{U}_k|^2}\sum_{\pmb\theta_i\in\mathcal{U}_k}\sum_{\pmb\theta_{i^\prime}\in\mathcal{U}_k}|\max_{\ell=1,2,\ldots,d}\left(\Corr^{(\ell)}_{i,i^\prime,k}\right)|w_{i,k} w_{i^\prime,k}$. For a learning rate $\eta_k=\eta_1 k^{-r}$ with $\eta_1 \leq \frac{1}{4L}$ and $r\in(0,1)$, under Assumptions~\ref{assumption 2} and \ref{assumption 3}, we have
    \begin{equation*}
        \left|\frac{1}{n}\sum^n_{j=1}\E\left[\Gamma\left(\pmb{x}^{(i,j)},\pmb\theta_i, \pmb\theta_k\right)-\Gamma\left(\pmb{x}^{(i,j)},\pmb\theta_i,\pmb\theta_{i-t}\right) \right]\right|\leq  C^\Gamma_1 (k-i+t)^{1/2}\eta_{i-t} + C_2^\Gamma (k-i+t)\eta_{i-t}
    \end{equation*}
where $C_1^\Gamma = {LM^2 (w_{i,k}+2)} \sqrt{\bar{\rho}_k + 1}$ and $C_2^\Gamma = M^2(U_f L_g+M U_\pi)$.
\end{lemma}
\begin{proof}
Let $\Delta=\frac{1}{n}\sum^n_{j=1}\left(\Gamma\left(\pmb{x}^{(i,j)},\pmb\theta_i, \pmb\theta_k\right)-\Gamma\left(\pmb{x}^{(i,j)},\pmb\theta_i,\pmb\theta_{i-t}\right)\right)$. 
Recall that $$\widehat{\nabla} J^{R}_{i,k}=\frac{1}{n}\sum^n_{j=1}\widehat{\nabla} J^{R}(\pmb{x}^{(i,j)},\pmb\theta_i,\pmb\theta_k)\text{  and  } \widehat{\nabla} J^{R}_{i,i-t}=\frac{1}{n}\sum^n_{j=1}\widehat{\nabla} J^{R}(\pmb{x}^{(i,j)},\pmb\theta_i,\pmb\theta_{i-t}),$$
where $R\in\{LR,CLR\}$. Then we have
\begin{align}
    \E[\Delta] &= \E\left[\left\langle \nabla J(\pmb\theta_k),\frac{1}{n}\sum^n_{j=1}\widehat{\nabla} J^{R}(\pmb{x}^{(i,j)},\pmb\theta_i,\pmb\theta_k)- \nabla J(\pmb\theta_k)\right\rangle\right.\nonumber\\
    &\qquad -\left. \left\langle \nabla J(\pmb\theta_{i-t}),\frac{1}{n}\sum^n_{j=1}\widehat{\nabla} J^{R}(\pmb{x}^{(i,j)},\pmb\theta_i,\pmb\theta_{i-t})-\nabla J(\pmb\theta_{i-t})\right\rangle\right] \nonumber\\
&=\Delta_1+\Delta_2 +\Delta_3
\end{align}
where 
\begin{align}
\Delta_1 &= \E\left[\left\langle \nabla J(\pmb\theta_k),\widehat{\nabla} J^{R}_{i,k}-\nabla J(\pmb\theta_k)\right\rangle \right. \nonumber\\
    &\qquad -\left.\left\langle \nabla J(\pmb\theta_{i-t}),\widehat{\nabla} J^{R}_{i,k}-\nabla J(\pmb\theta_{k})\right\rangle\right] \nonumber\\
\Delta_2 &=\E\left[\left\langle \nabla J(\pmb\theta_{i-t}),\widehat{\nabla} J^{R}_{i,k}-\nabla J(\pmb\theta_k)\right\rangle \right. \nonumber\\
 &\qquad -\left.\left\langle \nabla J(\pmb\theta_{i-t}),\widehat{\nabla} J^{R}_{i,k}-\nabla J(\pmb\theta_{i-t})\right\rangle\right] \nonumber\\
\Delta_3 &=\E\left[\left\langle \nabla J(\pmb\theta_{i-t}),\widehat{\nabla} J^{R}_{i,k}-\nabla J(\pmb\theta_{i-t})\right\rangle \right. \nonumber\\
    &\qquad -\left.\left\langle \nabla J(\pmb\theta_{i-t}),\widehat{\nabla} J^{R}_{i,i-t}-\nabla J(\pmb\theta_{i-t})\right\rangle\right]. \nonumber
\end{align}
By Minkowski's inequality, the norm $|\E[\Delta]|\leq |\Delta_1|+|\Delta_2|+|\Delta_3|$.

\noindent\textbf{(i)} For the term $|\Delta_1|$: 
\begin{align}
| \Delta_1 | &= \left| \E\left[\left\langle \nabla J(\pmb\theta_k),\widehat{\nabla} J^{R}_{i,k}-\nabla J(\pmb\theta_k)\right\rangle \right.\right. \nonumber\\
    &\qquad -\left.\left.\left\langle \nabla J(\pmb\theta_{i-t}),\widehat{\nabla} J^{R}_{i,k}-\nabla J(\pmb\theta_{k})\right\rangle\right] \right| \nonumber\\
&= \left| \E\left[\left\langle \nabla J(\pmb\theta_k)-\nabla J(\pmb\theta_{i-t}),\widehat{\nabla} J^{R}_{i,k}-\nabla J(\pmb\theta_k)\right\rangle\right] \right| \nonumber \\
&\leq  \E\left[\left\Vert\nabla J(\pmb\theta_k)-\nabla J(\pmb\theta_{i-t})\right\Vert^2 \right]^{1/2} \E\left[\left\Vert\widehat{\nabla} J^{R}_{i,k}-\nabla J(\pmb\theta_k)\right\Vert^2 \right]^{1/2} \label{eq1: appendix lemma: supportive lemma 6} \\
&\leq  L \E[\Vert \pmb\theta_{k}-\pmb\theta_{i-t}\Vert^2]^{1/2}\E\left[\left\Vert\widehat{\nabla} J^{R}_{i,k}-\nabla J(\pmb\theta_k)\right\Vert^2 \right]^{1/2} \label{eq2: appendix lemma: supportive lemma 6}
\end{align}
where Step~\eqref{eq1: appendix lemma: supportive lemma 6} follows by Cauchy–Schwarz inequality and Step~\eqref{eq2: appendix lemma: supportive lemma 6} holds due to Lemma~\ref{lemma: Lipschitz continuity}. For the first term of \eqref{eq2: appendix lemma: supportive lemma 6}, applying  Lemma~\ref{auxillary lemma: L2 parametric distance} gives
\begin{equation*}
   \E\left[\Vert \pmb\theta_{k}-\pmb\theta_{i-t}\Vert^2\right]^{1/2} \leq \sqrt{\bar{\rho}_k+1} M(k-i+t)^{1/2} \eta_{i-t}. 
\end{equation*}

\noindent For the second term of Step~\eqref{eq2: appendix lemma: supportive lemma 6}, applying Minkowski's inequality and Lemma~\ref{lemma: bounded of policy gradient} gives
\begin{align*}
   \E\left[\left\Vert\widehat{\nabla} J^{R}_{i,k}-\nabla J(\pmb\theta_k)\right\Vert^2 \right]^{1/2} &\leq \E\left[\left\Vert\widehat{\nabla} J^{R}_{i,k}\right\Vert^2\right]^{1/2}+\E\left[\left\Vert\nabla J(\pmb\theta_k)\right\Vert^2 \right]^{1/2} \nonumber\\
    &\leq \E\left[\left\Vert\widehat{\nabla} J^{R}_{i,k}\right\Vert^2\right]^{1/2}+M.
\end{align*}

Recall the inequality~\eqref{eq: expected norm of LR policy gradients}, which shows $\E\left[\left\Vert\widehat{\nabla} J^{R}_{i,k}\right\Vert^2\right] \leq M^2w_{i,k}^2$ and consequently, the second term of Step~\eqref{eq2: appendix lemma: supportive lemma 6} becomes
\begin{equation*}
    \E\left[\left\Vert\widehat{\nabla} J^{R}_{i,k}-\nabla J(\pmb\theta_k)\right\Vert^2 \right]^{1/2} \leq M(w_{i,k} +1).
\end{equation*}
Then we have
\begin{equation*}
    | \Delta_1 | \leq L(w_{i,k} +1)(\sqrt{\bar{\rho}_k+1}) M^2 (k-i+t)^{1/2} \eta_{i-t}.
\end{equation*}

\noindent\textbf{(ii)} For the term $|\Delta_2|$: 
\begin{align}
| \Delta_2| &=\left|\E\left[\left\langle \nabla J(\pmb\theta_{i-t}),\widehat{\nabla} J^{R}(\pmb{x}^{(i,j)},\pmb\theta_i,\pmb\theta_k)-\nabla J(\pmb\theta_k)\right\rangle \right. \right.\nonumber\\
 &\qquad -\left.\left.\left\langle \nabla J(\pmb\theta_{i-t}),\widehat{\nabla} J^{R}(\pmb{x}^{(i,j)},\pmb\theta_i,\pmb\theta_{k})-\nabla J(\pmb\theta_{i-t})\right\rangle\right]\right| \nonumber\\
&= \left| \E\left[\left\langle \nabla J(\pmb\theta_{i-t}),\nabla J(\pmb\theta_k)-\nabla J(\pmb\theta_{i-t})\right\rangle\right] \right| \nonumber \\
&\leq \E\left[\left\Vert \nabla J(\pmb\theta_{i-t})\right\Vert^2\right]^{1/2}\E\left[\left\Vert\nabla J(\pmb\theta_k)-\nabla J(\pmb\theta_{i-t})\right\Vert^2\right]^{1/2} \nonumber\\
&\leq LM\E\left[\Vert\pmb\theta_k-\pmb\theta_{i-t}\Vert^2\right]^{1/2} \label{eq.1: appendix lemma: supportive lemma 6}\\
&\leq L\sqrt{\bar{\rho}_k+1} M^2 (k-i+t)^{1/2} \eta_{i-t} \nonumber
\end{align}
where the first inequality follows by applying Cauchy–Schwarz inequality and Lemma~\ref{lemma: Lipschitz continuity} and Step~\eqref{eq.1: appendix lemma: supportive lemma 6} is due to Lemma~\ref{lemma: bounded of policy gradient}. The last step is held by using the bound from Lemma~\ref{auxillary lemma: L2 parametric distance}.

\noindent\textbf{(iii)} For the term $|\Delta_3|$:
\begin{align}
|\Delta_3 | &\leq \left|\E\left[\E\left[\left\langle \nabla J(\pmb\theta_{i-t}),\widehat{\nabla} J^{R}(\pmb{x}^{(i,j)},\pmb\theta_i,\pmb\theta_k)-\nabla J(\pmb\theta_{i-t})\right\rangle \right.\right.\right. \nonumber\\
    &\qquad -\left.\left.\left.\left.\left\langle \nabla J(\pmb\theta_{i-t}),\widehat{\nabla} J^{R}(\pmb{x}^{(i,j)},\pmb\theta_i,\pmb\theta_{i-t})-\nabla J(\pmb\theta_{i-t})\right\rangle \right| \pmb{s}^{(i-t,j)},\pmb\theta_{i-t}\right] \right]\right| \nonumber\\
& \leq \left.\E\left[\left| \left\langle \nabla J(\pmb\theta_{i-t}),\E\left[\widehat{\nabla} J^{R}(\pmb{x}^{(i,j)},\pmb\theta_i,\pmb\theta_{k})-\widehat{\nabla} J^{R}(\pmb{x}^{(i,j)},\pmb\theta_i,\pmb\theta_{i-t}) \right| \pmb{s}^{(i-t,j)},\pmb\theta_{i-t} \right]\right\rangle\right| \right]\label{eq3-1: appendix lemma: supportive lemma 6}\\
& \leq \E\left[\left\Vert \nabla J(\pmb\theta_{i-t})\right\Vert \left\Vert\E\left[\left.\widehat{\nabla} J^{R}(\pmb{x}^{(i,j)},\pmb\theta_i,\pmb\theta_{k})-\widehat{\nabla} J^{R}(\pmb{x}^{(i,j)},\pmb\theta_i,\pmb\theta_{i-t}) \right| \pmb{s}^{(i-t,j)},\pmb\theta_{i-t} \right] \right\Vert \right]\label{eq3: appendix lemma: supportive lemma 6}\\
&\leq M \E\left[\left.\left\Vert \E\left[ \widehat{\nabla} J^{R}(\pmb{x}^{(i,j)},\pmb\theta_i,\pmb\theta_{k})-\widehat{\nabla} J^{R}(\pmb{x}^{(i,j)},\pmb\theta_i,\pmb\theta_{i-t})  \right| \pmb{s}^{(i-t,j)},\pmb\theta_{i-t}\right] \right\Vert\right] \label{eq4: appendix lemma: supportive lemma 6}\\
&\leq M \E\left[\left\Vert \widehat{\nabla} J^{R}(\pmb{x}^{(i,j)},\pmb\theta_i,\pmb\theta_{k})-\widehat{\nabla} J^{R}(\pmb{x}^{(i,j)},\pmb\theta_i,\pmb\theta_{i-t}) \right\Vert\right] \label{eq5: appendix lemma: supportive lemma 6}
\end{align}
where Step~\eqref{eq3-1: appendix lemma: supportive lemma 6} and \eqref{eq5: appendix lemma: supportive lemma 6} both follows Jensen's inequality, Step~\eqref{eq3: appendix lemma: supportive lemma 6} follows Cauchy-Schwarz inequality and Step~\eqref{eq4: appendix lemma: supportive lemma 6} holds by applying Lemma~\ref{lemma: bounded of policy gradient}.

By applying Lemma~\ref{appendix lemma: supportive lemma 1}, we have
\begin{equation}
    | \Delta_3 | =\begin{cases}
M^2 (L_g+M U_\pi) \sum^k_{\ell=i-t}\eta_\ell & \text{R $=$ LR}; \\
M^2 (U_f L_g+M U_\pi) \sum^k_{\ell=i-t}\eta_\ell & \text{R $=$ CLR}.
\end{cases}
\end{equation}
Because $\{\eta_k\}$ is a non-increasing sequence, it holds that $\sum^k_{\ell=i-t}\eta_\ell \leq (k-i+t)\eta_{i-t}$. Recall $U_f > 1$. Thus we have $$| \Delta_3 |\leq M^2(U_f L_g+M U_\pi) \sum^k_{\ell=i-t}\eta_\ell\leq M^2(U_f L_g+M U_\pi) (k-i+t)\eta_{i-t}.$$
The conclusion is obtained by summing $|\Delta_1|$, $|\Delta_2|$ and $|\Delta_3|$.
\end{proof}
\begin{lemma}\label{appendix lemma: supportive lemma 7}
   Under Assumptions~\ref{assumption 2} and \ref{assumption 3}, we have
\begin{align*}
&\left|\E\left[\Gamma\left(\pmb{x}^{(i,j)},\pmb\theta_i, \pmb\theta_{i-t}\right)-\Gamma\left(\tilde{\pmb{x}}^{(i,j)},\pmb\theta_i,\pmb\theta_{i-t}\right) \right]\right|\nonumber\\
&\leq \begin{cases}
 2 nM^3 U_\pi t  \sum^{i}_{\ell=i-t}\eta_{\ell} &\text{LR estimator};\\
 2nM^3U_\pi U_f t\sum^i_{\ell=i-t}\eta_\ell &\text{CLR estimator.}
\end{cases}
\end{align*}
\end{lemma}
\begin{proof}
Let $\Delta =\Gamma\left(\pmb{x}^{(i,j)},\pmb\theta_i, \pmb\theta_{i-t}\right)-\Gamma\left(\tilde{\pmb{x}}^{(i,j)},\pmb\theta_i,\pmb\theta_{i-t}\right)$. For both LR and CLR policy gradient estimators, i.e. $R\in\{LR,CLR\}$, we have
\begin{align}
    |\E[\Delta]|&= \left|\E\left[\E\left[\left\langle \nabla J(\pmb\theta_{i-t}),\widehat{\nabla} J^{R}(\pmb{x}^{(i,j)},\pmb\theta_i,\pmb\theta_{i-t})- \nabla J(\pmb\theta_{i-t})\right\rangle\right.\right.\right.\nonumber\\
    &\qquad -\left.\left. \left.\left. \left\langle \nabla J(\pmb\theta_{i-t}),\widehat{\nabla} J(\tilde{\pmb{x}}^{(i,j)},\pmb\theta_i,\pmb\theta_{i-t})-\nabla J(\pmb\theta_{i-t})\right\rangle\right| \pmb{s}^{(i-t,j)},\pmb\theta_{i-t}\right]\right]\right| \nonumber\\
&\leq \E\left[\left|\left\langle \nabla J(\pmb\theta_{i-t}),\E\left[\left.\widehat{\nabla} J^{R}\left(\pmb{x}^{(i,j)},\pmb\theta_i,\pmb\theta_{i-t}\right)- \widehat{\nabla} J\left(\tilde{\pmb{x}}^{(i,j)},\pmb\theta_i,\pmb\theta_{i-t}\right)\right| \pmb{s}^{(i-t,j)},\pmb\theta_{i-t}\right]\right\rangle \right|\right]\nonumber\\
&\leq M\E\left[\left\Vert \E\left[\left.\widehat{\nabla} J^{R}\left(\pmb{x}^{(i,j)},\pmb\theta_i,\pmb\theta_{i-t}\right)- \widehat{\nabla} J\left(\tilde{\pmb{x}}^{(i,j)},\pmb\theta_i,\pmb\theta_{i-t}\right)\right| \pmb{s}^{(i-t,j)},\pmb\theta_{i-t}\right]\right\Vert\right] \label{eq1: appendix lemma: supportive lemma 7}
\end{align}
where the first inequality holds by applying Jensen's inequality and Step~\eqref{eq1: appendix lemma: supportive lemma 7} is by Cauchy–Schwarz inequality and Lemma~\ref{lemma: bounded of policy gradient}.
Then the conclusion follows by using Eq.~\eqref{eq4: appendix lemma: supportive lemma 2} and Eq.~\eqref{eq14: appendix lemma: supportive lemma 2} from Lemma~\ref{appendix lemma: supportive lemma 2}.
\end{proof}

\begin{lemma}\label{appendix lemma: supportive lemma 8}
   Under Assumption~\ref{assumption 2} and \ref{assumption 3}, we have
    $$\left|\E\left[\Gamma\left(\tilde{\pmb{x}}^{(i,j)},\pmb\theta_i, \pmb\theta_{i-t}\right)-\Gamma\left(\check{\pmb{x}}^{(i,j)},\pmb\theta_i,\pmb\theta_{i-t}\right)\right]\right|\leq 2M^2 \varphi(nt).$$
\end{lemma}
\begin{proof}
Let $\Delta=\Gamma\left(\tilde{\pmb{x}}^{(i,j)},\pmb\theta_i, \pmb\theta_{i-t}\right)-\Gamma\left(\check{\pmb{x}}^{(i,j)},\pmb\theta_i,\pmb\theta_{i-t}\right)$. We have
\begin{align}
    |\E[\Delta] |&= \left|\E\left[\E\left[\left\langle \nabla J(\pmb\theta_{i-t}),\widehat{\nabla} J(\tilde{\pmb{x}}^{(i,j)},\pmb\theta_i,\pmb\theta_{i-t})- \nabla J(\pmb\theta_{i-t})\right\rangle\right.\right.\right.\nonumber\\
    &\qquad -\left. \left.\left.\left.\left\langle \nabla J(\pmb\theta_{i-t}),\widehat{\nabla} J(\check{\pmb{x}}^{(i,j)},\pmb\theta_i,\pmb\theta_{i-t})-\nabla J(\pmb\theta_{i-t})\right\rangle\right| \pmb{s}^{(i-t,j)},\pmb\theta_{i-t}\right]\right]\right| \nonumber\\
&\leq\E\left[\left|\left\langle \nabla J(\pmb\theta_{i-t}),\E\left[\left.\widehat{\nabla} J\left(\tilde{\pmb{x}}^{(i,j)},\pmb\theta_i,\pmb\theta_{i-t}\right)- \widehat{\nabla} J\left(\check{\pmb{x}}^{(i,j)},\pmb\theta_i,\pmb\theta_{i-t}\right)\right| \pmb{s}^{(i-t,j)},\pmb\theta_{i-t}\right]\right\rangle\right|\right] \nonumber\\
&\leq M\E\left[\left\Vert \E\left[\left.\widehat{\nabla} J\left(\tilde{\pmb{x}}^{(i,j)},\pmb\theta_i,\pmb\theta_{i-t}\right)- \widehat{\nabla} J\left(\check{\pmb{x}}^{(i,j)},\pmb\theta_i,\pmb\theta_{i-t}\right) \right| \pmb{s}^{(i-t,j)},\pmb\theta_{i-t}\right]\right\Vert \right]\label{eq1: appendix lemma: supportive lemma 8}
\end{align}
where Step~\eqref{eq1: appendix lemma: supportive lemma 8} holds by applying Cauchy–Schwarz inequality and Lemma~\ref{lemma: bounded of policy gradient}.
Then the conclusion immediately follows by applying Lemma~\ref{appendix lemma: supportive lemma 3}.
\end{proof}
\begin{lemma}\label{appendix lemma: supportive lemma 9}
   Under Assumption~\ref{assumption 2} and \ref{assumption 3}, we have
    $$\E\left[\Gamma\left(\check{\pmb{x}}^{(i,j)},\pmb\theta_i, \pmb\theta_k\right)\right]=0.$$
\end{lemma}
\begin{proof}
Since the sample $\check{\pmb{x}}^{(i,j)}$ is drawn from the stationary distribution $d^{\pi_{\pmb\theta_{i-t}}}(\pmb{x})$, the gradient estimate $\widehat{\nabla} J\left(\check{\pmb{x}}^{(i,j)},\pmb\theta_i,\pmb\theta_{i-t}\right)$ is unbiased, that is,
\begin{align*}
&\E\left[\Gamma\left(\check{\pmb{x}}^{(i,j)},\pmb\theta_i, \pmb\theta_k\right)|\pmb{s}^{i-t,j},\pmb\theta_{i-t}\right] \\
&=\left\langle \nabla J(\pmb\theta_{i-t}),\E\left[\widehat{\nabla} J\left(\check{\pmb{x}}^{(i,j)},\pmb\theta_i,\pmb\theta_{i-t}\right)-\nabla J(\pmb\theta_{i-t})|\pmb{s}^{i-t,j},\pmb\theta_{i-t}\right] \right\rangle=0.
\end{align*}
\end{proof}

\section{Proof of Auxillary Lemmas}\label{appendix sec: proof of auxillary lemmas}

\begin{lemma}\label{auxillary lemma: parametric distance}
     Under Assumption~\ref{assumption 2} and \ref{assumption 3}, for any $k\geq i \geq t$ we have
     $$\E[\Vert \pmb\theta_{k}-\pmb\theta_{i-t}\Vert]\leq M \sum^{k}_{\ell=i-t}{\eta_{\ell}}.$$
\end{lemma}
\begin{proof} By applying Minkowski inequality, it holds
\begin{align}
    \E[\Vert \pmb\theta_{k}-\pmb\theta_{i-t}\Vert]&\leq \sum^{k}_{\ell=i-t}\eta_{\ell} \E\left[\frac{1}{|\mathcal{U}_\ell|n}
 \sum_{{\pmb\theta_h}\in \mathcal{U}_\ell}
 \sum^{n}_{j=1}
 \frac{\pi_{\pmb\theta_\ell}\left(\pmb{a}^{(h,j)}|\pmb{s}^{(h,j)}\right)}
 {\pi_{\pmb\theta_h}\left(\pmb{a}^{(h,j)}|\pmb{s}^{(h,j)}\right)}
\left\Vert g\left(\pmb{x}^{(h,j)}|\pmb\theta_{\ell}\right)\right\Vert \right] \nonumber\\
&\leq M\sum^{k}_{\ell=i-t}\eta_{\ell} \frac{1}{|\mathcal{U}_\ell|n}
 \sum_{{\pmb\theta_h}\in \mathcal{U}_\ell}
 \sum^{n}_{j=1}\E\left[
 \frac{\pi_{\pmb\theta_\ell}\left(\pmb{a}^{(h,j)}|\pmb{s}^{(h,j)}\right)}
 {\pi_{\pmb\theta_h}\left(\pmb{a}^{(h,j)}|\pmb{s}^{(h,j)}\right)}
 \right] \label{eq3: appendix lemma: supportive lemma 2}\\
& \leq M\sum^{k}_{\ell=i-t}{\eta_{\ell}} \nonumber
\end{align}
where Step~\eqref{eq3: appendix lemma: supportive lemma 2} holds by Lemma~\ref{lemma: bounded of policy gradient}.
\end{proof}

\begin{lemma}\label{auxillary lemma: L2 parametric distance}
     Under Assumption~\ref{assumption 2} and \ref{assumption 3}, for any $k\geq i \geq t$ we have
\begin{equation*}
   \E\left[\Vert \pmb\theta_{k}-\pmb\theta_{i-t}\Vert^2\right]^{1/2} \leq \sqrt{\bar{\rho}_k+1} M(k-i+t)^{1/2} \eta_{i-t}. 
\end{equation*}
\end{lemma}
\begin{proof}
Applying  Minkowski's inequality and Lemma~\ref{lemma: bounded of policy gradient} gives
\begin{equation}\label{eq-0: appendix lemma: supportive lemma 6}
\E\left[\Vert \pmb\theta_{k}-\pmb\theta_{i-t}\Vert^2\right]=\sum_{\ell=i-t+1}^k \E\left[\Vert \pmb\theta_\ell -\pmb\theta_{\ell-1}\Vert^2\right]=\sum_{\ell=i-t+1}^k \eta_\ell^2\E\left[\left\Vert\widehat{\nabla} J^{R}_\ell\right\Vert^2\right].
\end{equation}
Lemma~\ref{lemma: total variance of LR policy gradient} implies that
 \begin{align}
    \E\left[\left\Vert\widehat{\nabla} J^{R}_\ell\right\Vert^2\right]
    &=\Tr\left(\Var\left[\widehat{\nabla} J^{R}_\ell\right]\right)+ \left\Vert\E\left[\widehat{\nabla} J^{R}_\ell\right]\right\Vert^2 \nonumber\\
    &\leq \frac{M^2}{|\mathcal{U}_k|^2}\sum_{\pmb\theta_i\in\mathcal{U}_k}\sum_{\pmb\theta_{i^\prime}\in\mathcal{U}_k}\max_{\ell=1,2,\ldots,d}\left(\Corr^{(\ell)}_{i,i^\prime,k}\right)w_{i,k} w_{i^\prime,k}+ \left\Vert\E\left[\widehat{\nabla} J^{R}_\ell\right]\right\Vert^2 \nonumber\\
    & \leq {M^2} \bar{\rho}_k + \left\Vert\E\left[\widehat{\nabla} J^{R}_\ell\right]\right\Vert^2,\label{eq-1: appendix lemma: supportive lemma 6}
\end{align}
where $\bar{\rho}_k = \frac{1}{|\mathcal{U}_k|^2}\sum_{\pmb\theta_i\in\mathcal{U}_k}\sum_{\pmb\theta_{i^\prime}\in\mathcal{U}_k}\max_{\ell=1,2,\ldots,d}\left(\Corr^{(\ell)}_{i,i^\prime,k}\right)w_{i,k} w_{i^\prime,k}$.
Also by Minkowski's inequality, it holds for any $\ell$ that
\begin{align}
  \left\Vert\E\left[\widehat{\nabla} J^{R}_\ell\right]\right\Vert  &\leq\E\left[\frac{1}{|\mathcal{U}_\ell|n}
 \sum_{{\pmb\theta_h}\in \mathcal{U}_\ell}
 \sum^{n}_{j=1}
 \frac{\pi_{\pmb\theta_\ell}\left(\pmb{a}^{(h,j)}|\pmb{s}^{(h,j)}\right)}
 {\pi_{\pmb\theta_h}\left(\pmb{a}^{(h,j)}|\pmb{s}^{(h,j)}\right)}
\left\Vert g\left(\pmb{x}^{(h,j)}|\pmb\theta_{\ell}\right)\right\Vert \right] \nonumber\\
&\leq  \frac{M}{|\mathcal{U}_\ell|n}
 \sum_{{\pmb\theta_h}\in \mathcal{U}_\ell}
 \sum^{n}_{j=1}\E\left[
 \frac{\pi_{\pmb\theta_\ell}\left(\pmb{a}^{(h,j)}|\pmb{s}^{(h,j)}\right)}
 {\pi_{\pmb\theta_h}\left(\pmb{a}^{(h,j)}|\pmb{s}^{(h,j)}\right)}
 \right] \nonumber\\
& \leq M.\label{eq-2: appendix lemma: supportive lemma 6}
\end{align}
By collecting \eqref{eq-1: appendix lemma: supportive lemma 6} and \eqref{eq-2: appendix lemma: supportive lemma 6}, Eq.~\eqref{eq-0: appendix lemma: supportive lemma 6} becomes 
\begin{equation*}
    \E\left[\Vert \pmb\theta_{k}-\pmb\theta_{i-t}\Vert^2\right] \leq (\bar{\rho}_k+1)M^2 \sum_{\ell=i-t}^k \eta_\ell^2 \leq (\bar{\rho}_k+1) M^2 (k-i+t) \eta_{i-t}^2
\end{equation*}
where the last inequality holds because $\{\eta_k\}$ is a non-increasing sequence. 
Then we conclude
\begin{equation*}
   \E\left[\Vert \pmb\theta_{k}-\pmb\theta_{i-t}\Vert^2\right]^{1/2} \leq \sqrt{\bar{\rho}_k+1} M(k-i+t)^{1/2} \eta_{i-t}. 
\end{equation*}
\end{proof}

\begin{lemma}\label{auxillary lemma: sum of learning rates}
Suppose $\eta_k=\eta_1 k^{-r}$ is a non-negative sequence for some positive constants $\eta_1$ and $r$. Then for any large enough $k$ and $\tau <k$, we have:
\begin{equation*}
\sum^{k-t}_{i=1}\eta_i \leq \frac{\eta_1}{1- r}(k-t)^{1-r}.
\end{equation*}
\end{lemma}
\begin{proof} 
The conclusion is obtained by observing that
\begin{equation*}
\sum^{k-t}_{i=1}\eta_i= \sum^{k-t}_{i=1} \frac{\eta_1}{i^r}\leq \eta_1\int^{k-t}_{1} x^{-r} \dd x = \eta_1\frac{(k-t)^{1-r}}{1- r} - \frac{\eta_1}{1-r}\leq \eta_1\frac{(k-t)^{1-r}}{1- r} 
\end{equation*}
which completes the proof.
\end{proof}

\begin{lemma}\label{supportive lemma: mean sequence Big O}
Suppose $\{a_k\}$ is a non-negative, bounded sequence. 
Let $U_a$ denote the bound of the sequence. Then
for any large enough $K$, let $\tau= f(K) <K$ (i.e. $f(\cdot)$ is a positive increasing function) and we have:
\begin{equation*}
    \frac{1}{K}\sum^K_{k=1}a_k\leq\frac{\tau-1}{K}U_a+\frac{1}{1+K-\tau}\sum^K_{k=\tau}a_k.
\end{equation*}
\end{lemma}
\begin{proof}
The conclusion is obtained by observing that
\begin{equation*}
    \frac{1}{K}\sum^T_{k=1}a_k\leq \frac{1}{K}\left((\tau-1)U_a+\sum^K_{k=\tau}a_k\right)=\frac{\tau-1}{K}U_a+\frac{1}{K}\sum^K_{k=\tau}a_k \leq \frac{\tau-1}{K}U_a+\frac{1}{1+K-\tau}\sum^K_{k=\tau}a_k.
\end{equation*}
\end{proof}

\end{document}